\newcommand{\gamp}{GAMP-RIE }
\begin{document}
\title{Bayes-optimal learning of an extensive-width neural network from quadratically many samples}
\date{}

\author[1]{Antoine Maillard\thanks{To whom correspondence should be sent: \texttt{antoine.maillard@math.ethz.ch}.}}
\author[2]{Emanuele Troiani}
\author[3,4]{Simon Martin}
\author[5]{Florent Krzakala}
\author[2]{Lenka Zdeborov\'a}
\affil[1]{Department of Mathematics, ETH Z\"urich, Switzerland.}
\affil[2]{Statistical Physics of Computation Laboratory, EPFL, Switzerland.}
\affil[3]{INRIA - \'Ecole Normale Sup\'erieure, PSL Research University, Paris, France.}
\affil[4]{Laboratoire de Physique de l’\'Ecole Normale Sup\'erieure, ENS, Universit\'e PSL, CNRS, Sorbonne Universit\'e, Universit\'e de Paris, F-75005 Paris, France.}
\affil[5]{Information, Learning, and Physics Laboratory, EPFL, Switzerland.}

\maketitle

\begin{abstract}
We consider the problem of learning a target function corresponding to a single hidden layer neural network, with a quadratic activation function after the first layer, and random weights. 
We consider the asymptotic limit where the input dimension and the network width are proportionally large. 
Recent work~\citep{Cui_extensive_width} established that linear regression provides Bayes-optimal test error to learn such a function when the number of available samples is only linear in the dimension. 
That work stressed the open challenge of theoretically analyzing the optimal test error in the more interesting regime where the number of samples is quadratic in the dimension. 
In this paper, we solve this challenge for quadratic activations and derive a closed-form expression for the Bayes-optimal test error.
We also provide an algorithm, that we call GAMP-RIE, which combines approximate message passing with rotationally invariant matrix denoising, and that asymptotically achieves the optimal performance.
Technically, our result is enabled by establishing a link with recent works on optimal denoising of extensive-rank matrices and on the ellipsoid fitting problem.
We further show empirically that, in the absence of noise, randomly-initialized gradient descent seems to sample the space of weights, leading to zero training loss, and averaging over initialization leads to a test error equal to the Bayes-optimal one. 
\end{abstract}

\section{Introduction}\label{sec:intro}
Learning with multi-layer neural networks brought impressive progress and applications in many areas. It is well established that a large enough non-linear neural network can represent a large class of functions~\citep{cybenko1989approximation}. Yet the conditions under which the values of the weights can be found efficiently, and from how many samples of the data, remain theoretically elusive. 
While one may hope that a detailed understanding of these fundamental limitations will eventually allow for
a more efficient training, 
answering such questions for general data and target function remains, however, beyond the reach of current theoretical methods. 

\myskip
In an early attempt to overcome the difficulty of the above generic question, a long line of work originating in~\cite{gardner1989three,sompolinsky1990learning} proposed to study the optimal sample-complexity in the so-called teacher-student setting, where the target function corresponds to a ``teacher'' neural network. The architecture of this teacher neural network is chosen to be fully connected feed-forward with a given number of layers, their widths, and activations. 
The values of each of the weights are generated independently, from a Gaussian distribution. This teacher neural network is then used to generate an output label $y_i \in {\mathbb R}$ for each input data sample $\bx_i \in {\mathbb R}^d$. Given the architecture of the teacher networks (but not the values of the teacher-weights $\bW^*$) and the training set of input-output pairs $\{y_i,\bx_i\}_{i=1}^n$, the smallest achievable test error can then be obtained by averaging the output of a student-neural network (with the same architecture as the teacher) over the values of weights drawn from the posterior distribution. We will refer to the accuracy reached this way as the Bayes-optimal one. It yields the fundamental limitations in learning such tasks, by any possible means, and can therefore serve as a benchmark.

\myskip
In the so-called {\it high-dimensional limit}~\citep{donoho2000high}, when the input training data are $d$-dimensional Gaussian vectors, in the limit $d\to\infty$, the above research program has been carried out in detail over the last decades for small neural networks 
having only $m=O_d(1)$ hidden units, and learning from $n=\alpha d$ data samples, where $\alpha=O_d(1)$ (see, e.g.~\cite{gyorgyi1990first,Opper1991GeneralizationPO,seung1992statistical,watkin1993statistical,schwarze1993learning,barbier2019optimal,aubin2019committee}). In the more recent literature, this setting is sometimes referred to as learning single-index and multi-index functions~\citep{bietti2023learning,damian2024computational,collins2023hitting}. While early works in this line originated in statistical physics and used the heuristic replica method~\citep{mezard1987spin} to derive the closed-form expressions for quantities of interest in the high-dimensional limit (with
$d\to \infty$, $m = O_d(1)$ and $n = O_d(d)$),
a mathematical establishment followed using rigorous probabilistic methods~\citep{barbier2019optimal,aubin2019committee}.\looseness=-1 

\myskip
Reaching a closed-form expression for the Bayes-optimal sample complexity for target functions corresponding to multi-layer teacher neural networks is the next open and very challenging task. Among the recent work is~\cite{Cui_extensive_width}, that established (non-rigorously, using the replica method) the Bayes-optimal error for a target function corresponding to a multi-layer neural network of {\it extensive width} (i.e.\ linearly proportional to the dimension) from a number of samples also {\it linear} in the dimension. 
Interestingly, in this limit, the Bayes-optimal error resulted in a quite poor approximation of the function, which can be achieved as well by a simple linear regression on the input-output pairs. No method, be it a multi-layer neural network (or even refinements like a transformer), will be able to achieve better performance. 
\citep{Cui_extensive_width}~further argue, based on numerical evidence, that {\it quadratically} many samples in the dimension are necessary in order to be able to learn the target function with non-linear activations\footnote{Note that for linear activations, the target functions reduces to linear regression and can be learned from linearly many samples.} to an infinitesimally small test error. This is perhaps intuitive as, with an extensive width, the number of parameters/weights in the teacher network is quadratic in dimension. However, such a regime is challenging for current theoretical tools.
 Reaching an analytical explicit expression for the Bayes-optimal performance in this regime, for the target function in the form of a neural network of extensive width, is an open, challenging, theoretical problem that has not yet been solved even for a single hidden layer architecture. \looseness=-1
 
\myskip
\textbf{Our contributions --} In this paper, we step up to this challenge and
 derive a closed-form expression for the Bayes-optimal test error for a target/teacher function corresponding to a one-hidden layer neural network of extensive width, from quadratically many samples, for a particular case where the activation function (after the hidden layer) is quadratic. In particular, our main contributions are:  
\begin{itemize}[itemsep=1pt,leftmargin=1em]
    \item We provide a closed-form expression for the Bayes-optimal error of learning an extensive-width neural network from quadratically many samples, which is the first type of such result to the best of our knowledge. 
    Such a form is enabled by the high-dimensional limit and corresponding concentration of quantities of interest. It notably follows from our formula that, in the absence of noise in the target function, zero test error is achievable for a sample complexity $\alpha = n/d^2$ larger than a perfect-recovery threshold $\alpha > \alpha_\PR $ where 
    \begin{equation}\label{eq:alphaPR}
       \alpha_\PR =  \kappa - \frac{\kappa^2}{2} \quad \textrm{ if } \quad \kappa \leq 1; \quad \quad  \alpha_\PR = \frac{1}{2} \quad \textrm{ if } \quad  \kappa \geq 1,
    \end{equation}
    with $\kappa = m/d$ the ratio between the width $m$ and the dimension $d$. We further notice that this matches a naive counting of the number of degrees of freedom in the target function.
    \item We introduce the \gamp algorithm that combines the generalized approximate message passing (GAMP) \citep{donoho2009message,rangan2011generalized,zdeborova2016statistical} with a matrix denoiser that is based on so-called rotationally-invariant estimators (RIE) \citep{bun2016rotational}, and show that in the large size limit, this algorithm reaches the Bayes-optimal error for all $\alpha, \kappa = \Theta(1)$, where $\alpha = n/d^2$ and $\kappa = m/d$.
    \item On the technical level, our result is enabled by combining results from the analysis of single-layer neural networks~\citep{barbier2019optimal} and extensive-rank matrix denoising~\citep{maillard2022perturbative}. The derived formula involves the asymptotics of the Harish-Chandra-Itzykson-Zuber integral of random matrix theory~\citep{harish1957differential,itzykson1980planar}.
    Our approach is notably inspired by recent results on the ellipsoid fitting problem~\citep{maillard2023fitting,maillard2023exact}. These tools are of independent interest to the machine learning community, and we anticipate they will have other applications in the theory of learning. \looseness=-1
    \item We empirically compare the Bayes-optimal performance to the one obtained by gradient descent. In the noiseless case we observe a rather unusual and surprising scenario, as randomly-initialized gradient descent seems to be sampling the space of interpolants, and leads to twice the Bayes-optimal error. When averaged over initialization the gradient descent reaches an error that is very close to the Bayes-optimal. The rigorous establishment of these properties of gradient descent is left open. 
\end{itemize}
All our numerical experiments are reproducible, and accessible freely in a public~\href{https://github.com/SPOC-group/ExtensiveWidthQuadraticSamples}{GitHub repository}~\citep{github_repo}.

\myskip
\textbf{Further related works --}
The problem studied in this work is known as \emph{phase retrieval} in the case of a single hidden unit ($m=1$). Many works considered this problem in the high-dimensional limit $d \to \infty$, in the regime of $n = O(d \log{d})$ samples; see 
e.g.~\cite{candes2013phaselift,chen2019gradient,demanet2014stable}. A subsequent line of work established that the problem can be solved with only $O(d)$ samples~\citep{candes2014solving,chen2015solving,cai2022solving}. 

\myskip
Eventually, for Gaussian i.i.d.\ input data and i.i.d.\ teacher weights,
the optimal sample complexity for learning phase retrieval in the high-dimensional limit has been established down to the constant in $\alpha = n/d$.  
Authors of~\cite{mondelli2018fundamental} derived the weak recovery threshold for the noiseless case to be $\alpha_\WR = 1/2$ for phase retrieval, and optimal spectral methods were shown to match this threshold in~\cite{luo2019optimal,maillard2022construction}. The information-theoretically optimal accuracy and the one achieved by an approximate message passing algorithm were then derived in~\cite{barbier2019optimal} for a general i.i.d.\ prior for the teacher weights. In the absence of noise, these results imply sample complexities $\alpha_\IT = 1$ and $\alpha_{\rm AMP}\approx 1.13$ needed to achieve perfect learning for a Gaussian prior. 
Authors of~\cite{song2021cryptographic} proposed a non-robust polynomial algorithm capable of solving noiseless phase retrieval for $\alpha \ge \alpha_\IT$. Algorithms based on gradient descent were argued not to achieve the optimal sample complexity in~\cite{sarao2020complex,mignacco2021stochasticity}. 
\cite{maillard2020phase} derived the MMSE for more general input data distributions, including the complex-valued case. Phase retrieval with generative priors was studied in~\cite{hand2018phase,aubin2020exact}. 
We refer to a recent review~\citep{dong2023phase} for an overview of the relations between these recent theoretical studies and practical applications of phase retrieval in imaging. 

\myskip
The case with different numbers of hidden units $m^\star$ in the teacher and $m$ in the student model, was also discussed in the literature. For $m^*=O_d(1)$, the problem is a special case of a multi-index model that has been recently actively considered, e.g. in~\cite{aubin2019committee,bietti2023learning,damian2024computational,collins2023hitting}. This line of work has not focused on the quadratic activations, as it does not bring particular simplification in this case. 

\myskip
The geometry of loss landscapes of one hidden-layer networks with quadratic activations was studied, and the absence of spurious local minima was established for $m \ge d$ (when the read-out layer is fixed as in our setting) in~\cite{du2018power}. Similar results were established in~\cite{soltanolkotabi2018theoretical,venturi2019spurious} for a slightly more general setting where the readout layer is learned. 

\myskip
Establishing results about sample complexity required for generalization in cases where $m$ (or both $m$ and $m^*$) are $\Theta(d)$ is technically challenging, and so far, only a handful of works made progress in that direction. 
In particular, \cite{gamarnik2019stationary} considered $m^*\ge d$ and $m \ge d$, and have shown that a sample complexity $n \ge d(d+1)/2$ is sufficient for perfect recovery of the target function. 
\cite{sarao2020optimization} considered the overparametrized case with $m^* = O_d(1)$ and $m>d$, and showed that gradient descent reaches exact recovery for a  sample complexity $n >d (m^* +1) - (m^*+1)m^* /2$, again considering the high-dimensional limit. Gradient descent of the population risk has been studied for general values of $(m^*,m)$ in~\cite{martin2024impact}, along with a discussion of the role of overparametrization. \looseness=-1

\section{Setting}\label{sec:setting}
As discussed above, we are studying the Bayes-optimal accuracy in the \emph{teacher-student} setting.
More concretely, we consider a dataset of $n$ samples $\mcD=\{y_i,\bx_i\}_{i=1}^n$ where the input data is normal Gaussian of dimension $d$: $(\bx_i)_{i=1}^n \iid \mathcal{N}(0,\Id_d)$. We then draw i.i.d.\ $d$-dimensional teacher-weight vectors $(\bw_k^*)_{k=1}^m \iid \mathcal{N}(0,\Id_d)$, and noise $(\bz_{i})_{i=1}^n \iid \mathcal{N}(0,\Id_m)$. Finally, the output labels $(y_i)_{i=1}^n$ are obtained by a one-hidden layer teacher network with $m$ hidden units and quadratic activation:
\begin{equation} \label{eq:teacher_def}
    y_i = f_{\bW^*}(\bx_i) \coloneqq \frac{1}{m}\sum_{k=1}^m \left[\frac{1}{\sqrt{d}}(\bw_k^*)^\T \bx_i + \sqrt{\Delta} z_{i,k}\right]^2.
\end{equation}
Crucially, we assume we know the form of the (stochastic) target function $f_{\bW^*}(\cdot)$ (i.e. the value of $m$, $\Delta$, and the form of eq.~\eqref{eq:teacher_def}, including the fact that the activation function is quadratic) but we do not know the realization of neither the teacher weights $\bW^* = (\bw^\star_1, \cdots, \bw^\star_m)$ nor the noise $\bz_i$. 

\myskip 
\textbf{Remark: learning the second layer weights --}
We assume in eq.~\eqref{eq:teacher_def} that the second layer weights are fixed and equal to $1$. One can consider a more general problem in which the second layer weights $(a_k^\star)_{k=1}^m$ are drawn i.i.d.\ from a probability distribution $P_a$, and the student must learn $(\bw_k^\star,a_k^\star)_{k=1}^m$ from the observation of $\{\bx_i\}_{i=1}^n$ and of
\begin{align}\label{eq:teacher:with_2nd_layer}
    y_i = \frac{1}{m}\sum_{k=1}^m a_k^\star \left[\frac{1}{\sqrt{d}}(\bw_k^*)^\T  \bx_i + \sqrt{\Delta} z_{i,k}\right]^2.
\end{align}
Equivalently, we consider in what follows the case $P_a = \delta_1$.
However, all our techniques and results can be generalized to more generic choices of $P_a$. We sketch how to perform this generalization, and the results it yields, in Appendix~\ref{sec_app:second_layer}: in particular, Claim~\ref{claim:free_entropy_2nd_layer} is the generalization of our main result to this setting.

\myskip
\textbf{Universality over the noise and weights distribution --}
While we consider Gaussian distributions for the sake of our theoretical analysis, we expect our results to hold under more general i.i.d.\ models with non-Gaussian distributions on both the noise and the teacher weights, under mild conditions of existence of moments.
This is related to a recent conjecture of~\cite{semerjian2024matrix}, see Sections~\ref{sec:main_results} and \ref{sec:derivation_results}.

\myskip
\textbf{Bayes-optimal test error --}
Since we know the law of the dataset $\mcD$, we can study the \emph{Bayes-optimal (BO) estimator}, which minimizes the test error over all possible estimators.  To do this, we use Bayes' theorem to obtain the posterior distribution $\bbP(\bW | \mcD)$ of the weights $\bW$ given the dataset:
\begin{equation*}
    \bbP(\bW | \mcD) = \frac{1}{\mcZ(\mcD)}P_{\rm prior}(\bW) \bbP (\by | \bW, \{\bx_i\}_{i=1}^n)
\end{equation*}
where $P_{\rm prior}(\bW)$ is a prior distribution on the teacher weights $\bW^*$, and the likelihood $\bbP (\by | \bW, \bX)$ can be seen as a probabilistic channel that generates the labels given the input data $(\bx_i)_{i=1}^n$ and the teacher weights $\bW^*$, and $\mcZ(\mcD)$ is a normalization constant. 
The Bayes-optimal (BO) estimator of the labels for a test sample $\bx_{\rm test}$ not seen in the training set $\mcD$ then involves the average over the posterior distribution as follows (where $\EE_\bz$ denotes the expectation over $z_1, \cdots, z_k$)
\begin{equation}\label{eq:haty_BO}
    \hat{y}_{\mcD}^{\rm BO}(\bx_{\rm test}) \coloneqq \EE\left[y_{\mathrm{test}} \middle | \bx_{\mathrm{test}}, \mcD \right] = \int \, \EE_{\bz}[f_{\bW}(\bx_{\rm test})]\, \bbP(\bW | \mcD)\,{\rm d}\bW\, .
\end{equation}
We will evaluate the BO estimator in terms of its average generalization error, i.e.\ the mean squared error (MSE) achieved on a new sample. 
We define it in the following way:
\begin{equation} \label{eq:mmse}
    \MMSE_d \coloneqq \frac{m}{2} \EE_{\bW^*, \mcD} \EE_{y_{\rm test},\bx_{\rm test}}\left[\left(y_{\rm test} - \hat{y}_\mcD^{\rm BO}(\bx_{\rm test} )\right)^2\right] - \Delta(2+\Delta)  \, . 
\end{equation}
We denote it $\MMSE_d$, standing for minimum-MSE, as it is the minimum MSE achievable given the setting of the model, 
and we call $\MMSE \coloneqq \lim_{d \to \infty} \MMSE_d$.

\myskip
\textbf{Conventions for the MMSE --}
Notice the peculiar multiplicative factor $(m/2)$ and the additive term $-\Delta (2+\Delta)$ in eq.~\eqref{eq:mmse}.
As we detail in Appendix~\ref{subsec_app:convention_mmse}, 
these factors ensure that $\MMSE \to 1$ for $\alpha \to 0$ (i.e.\ in the absence of data), and $\MMSE \to 0$ if the posterior concentrates around the true $\bW^\star$ (i.e.\ if ${\hy}_\mcD^\BO(\bx) = \EE_\bz[f_{\bW^\star}(\bx)]$).
Moreover, as we also detail in Appendix~\ref{subsec_app:convention_mmse}, 
eq.~\eqref{eq:mmse} matches the MMSE of a matrix estimation task to which we will reduce the original problem, see Section~\ref{subsec:it_optimal}.

\myskip
As motivated above, our goal is to analyze the MMSE in the high-dimensional limit, with an extensive-width architecture and quadratically many data samples:
\begin{equation}\label{eq:highd_limit}
d \to \infty, \quad  \alpha \coloneqq \frac{n}{d^2} = \Theta(1), \quad  \kappa \coloneqq \frac{m}{d}=\Theta(1), 
\end{equation}
In all that follows, we only consider the limit of eq.~\eqref{eq:highd_limit} (except when explicitly mentioned), so that $n, d, m$ all go to infinity together when we write e.g.\ $\lim_{d \to \infty}$.
As we will see, in this limit, the value of the MMSE for a given realization of the randomness concentrates on the averaged value defined in eq.~\eqref{eq:mmse}. 

\myskip
\textbf{Empirical risk minimization estimator --}
A more standard way of learning the target function \eqref{eq:teacher_def} is to minimize the empirical loss $\cal L$ corresponding to a ``student'' neural network 
\begin{equation} \label{eq:loss}
   {\cal L}(\bW)  = \frac{1}{n}\sum_{i=1}^n \left(y_i - \tilde f_\bW(\bx_i)\right)^2, \quad {\rm where} \quad \tilde f_\bW(\bx) \coloneqq \frac{1}{m}\sum_{k=1}^m \left[\frac{1}{\sqrt{d}}(\bw_k)^\T \bx\right]^2.
\end{equation}
Note that this does not account for the noise, but activations in neural networks are commonly considered deterministic, so we consider this the most natural choice. 

\myskip
Minimization of the loss over the weights $\bW = (\bw_k)_{k=1}^m$ is commonly done using gradient descent (GD): one initializes the weights as $\bW^{(0)}\sim P_{\rm prior}$ and then updates them to minimize the empirical loss, for an appropriately choice of learning rate, 
until convergence. 
Denoting the weights at convergence as $\hat \bW (\bW^{(0)},\cal D)$ the estimator for test labels reads 
$
    \hat y^{\rm GD}_{\bW^{(0)},{\cal D}}(\bx_{\rm test}) \coloneqq \tilde f_{\hat \bW(\bW^{(0)},\cal D)} (\bx_{\rm test})
$.
As we will see, it will be interesting to consider also an estimator $\hat y^{\rm AGD}$ obtained by averaging the GD estimator on the labels over the initializations $\bW^{(0)}$ of the weights. 

\myskip
\textbf{Towards more generic models --}
There are several extensions of our setting that one can consider. 
Importantly, our analysis focuses on quadratic activations, which allows for significant technical simplifications in our theoretical analysis as we will detail.
We sketch in the conclusion (Section~\ref{sec:cclion_limitations}) the challenges that arise when tackling more general activation functions such as the ReLU or sigmoid function.

\section{Main results}\label{sec:main_results}
\textbf{Notations --}
We use $\tr(\cdot) \coloneqq (1/d) \Tr[\cdot]$ for the normalized trace. 
We denote $\GOE(d)$ the distribution of symmetric matrices $\bxi \in \bbR^{d \times d}$ such that 
$\bxi_{ij} \iid \mcN(0, (1+\delta_{ij})/d)$, for $i \leq j$.
For $m = \kappa d$ with $\kappa > 0$, we denote $\mcW_{m,d}$ the Wishart distribution, and $\mu_{\MP,\kappa}$ the Marchenko-Pastur distribution with ratio $\kappa$. More details on classical definitions and notational conventions are given in Appendix~\ref{sec_app:background}.

\subsection{Information-theoretic optimal estimation}\label{subsec:it_optimal}

\begin{figure}[t]
    \centering
    \hspace*{-1.cm}
    \includegraphics[width=\textwidth]{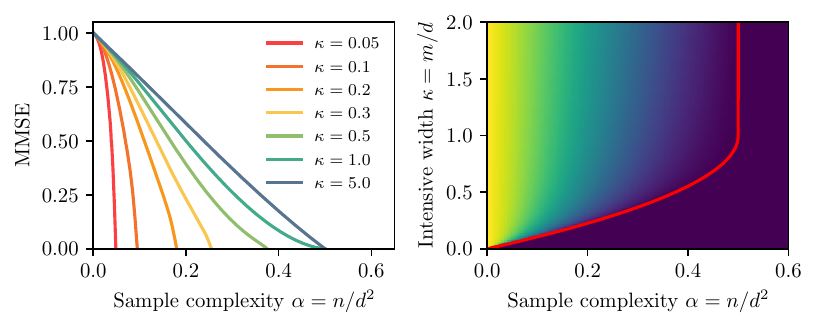}
    \caption{Left: The asymptotic MMSE of eq.~\eqref{eq:MMSE_qhat} for the noiseless $(\Delta=0)$ case, as a function of the sample complexity $\alpha$, for various width ratios $\kappa$. Right: Phase diagram representing the MMSE, brighter color indicates a higher value.
    The red curve is the perfect recovery transition line $\alpha_\PR$, see eq.~\eqref{eq:alphaPR}, and its origin is discussed in Section~\ref{sec:discussion}.
    }
    \label{fig:theory}
\end{figure}
We start by stating the main result of our analysis, applied to the problem of eq.~\eqref{eq:teacher_def}.
\begin{result}\label{result:main_mmse}
The MMSE of eq.~\eqref{eq:mmse} is given in the high-dimensional limit of eq.~\eqref{eq:highd_limit} by:
\begin{equation}\label{eq:MMSE_qhat}
   {\rm MMSE} = 
    \frac{2\alpha \kappa}{\hq} - \frac{\kappa \tilde{\Delta}}{2},
\end{equation}
where $\tilde{\Delta} \coloneqq 2\Delta(2+\Delta)/\kappa$, and
where $\hq$ is a solution of the following equation:
\begin{equation} \label{eq:qhat_main}
    (1-2\alpha) + \frac{\tDelta \hat{q}}{2} =     \frac{4\pi^2}{3\hat{q}} \int \mu_{1/\hat{q}}(y)^3 {\rm d} y.
\end{equation}
Here, $\mu_t \coloneqq \mu_{\MP,\kappa} \boxplus \sigma_{\sci,\sqrt{t}}$ (for $t \geq 0$) is the free convolution of the Marchenko-Pastur law and a scaled semicircular density, see Appendix~\ref{sec_app:background} for its precise definition.
\end{result}
\noindent
Eq.~\eqref{eq:qhat_main} can be efficiently solved using a numerical scheme, which is detailed in Appendix~\ref{sec_app:SE_num}. We present the results in Fig.~\ref{fig:theory}.
In what follows, we detail our approach towards deriving Result~\ref{result:main_mmse}, which is a consequence of our main theoretical result stated in Claim~\ref{claim:free_entropy}. 

\myskip
\textbf{Reduction to a matrix estimation problem --}
We first notice that by expanding the square in eq.~\eqref{eq:teacher_def}, we 
can effectively reduce our learning task to an estimation problem in terms of $\bS^\star \coloneqq (1/m) \sum_{k=1}^m \bw_k^\star (\bw_k^\star)^\T$.
We give an analytical argument backing this observation in Appendix~\ref{subsec_app:reduction_matrix}. 
Its conclusion is that, at leading order, the distribution of $y = f_{\bW^\star}(\bx)$ can be reduced to the following form, 
with $\ty \coloneqq \sqrt{d}(y - 1 - \Delta)$:
\begin{align}\label{eq:tildey}
    \ty = \Tr[\bZ \bS^\star] + \sqrt{\widetilde{\Delta}} \xi,
\end{align}
with $\xi \sim \mcN(0,1)$, $\widetilde{\Delta} \coloneqq 2 \Delta(2+\Delta) / \kappa$, and where we defined $\bZ \coloneqq (\bx \bx^\T - \Id_d) /\sqrt{d}$.

\myskip
\textbf{Generalization error and MMSE on $\bS$ --}
This equivalent problem gives us a way to interpret the convention we chose for eq.~\eqref{eq:mmse}. Indeed, if we denote $\hbS^{\rm opt} = \EE[\bS | \tby, \bZ]$ the Bayes-optimal estimator related to the problem of eq.~\eqref{eq:tildey}, then we have $\MMSE = \kappa \EE \tr[(\bS^\star - \hbS^{\rm opt})^2]$, as proven in detail in Lemma~\ref{lemma:MMSEy_MMSES}.

\myskip
\textbf{The limit of the MMSE --}
We now describe the general form of estimation problems covered by our theoretical analysis, which encompasses the one described in eq.~\eqref{eq:tildey} (and thus the original eq.~\eqref{eq:teacher_def}).
The goal is to recover the symmetric matrix $\bS^\star \in \bbR^{d \times d}$, which was generated from the Wishart distribution $\mcW_{m,d}$, from observations $(y_i)_{i=1}^n$, generated as 
\begin{equation} 
\label{eq:noisy_model}
    y_i \sim P_\out \left( \cdot \middle| \Tr[\bZ_i \bS^\star]\right) ,
\end{equation}
with $\bZ_i \coloneqq (\bx_i \bx_i^\T - \Id_d)/\sqrt{d}$. 
The ``channel'' $P_{\out}$ accounts for possible non-linearities and noise, encompassing the case of additive Gaussian noise in eq.~\eqref{eq:tildey}.
  We define the partition function as:
  \begin{equation}\label{eq:def_Z}
    \mcZ(\{y_i,\bx_i\}_{i=1}^n) \coloneqq \EE_{\bS \sim \mcW_{m,d}} \prod_{i=1}^n P_\out\left(y_i \middle| \Tr[\bS \bZ_i]\right).
  \end{equation}
Notice that the averaged logarithm of $\mcZ$ is (up to an additive constant) equal to the \emph{mutual information} between the observations and the hidden variables: $ I(\bW^\star ; \{y_i\} | \{\bx_i\}) = \EE \log \mcZ + n \EE \log P_\out(y_1 | \Tr[\bZ_1 \bS^\star])$.
This links $\mcZ$ to the optimal estimation of $\bW$, an important idea behind our study.
We are now ready to state our main theoretical result. It gives a sharp characterization of the Bayes-optimal error in any estimation problem of the type of eq.~\eqref{eq:noisy_model}. By the reduction described above, it can be directly applied to the original model of eq.~\eqref{eq:teacher_def}, and will imply Result~\ref{result:main_mmse}.
\begin{claim}\label{claim:free_entropy}
Assume that $m = \kappa d$ with $\kappa > 0$, and $n = \alpha d^2$ with $\alpha > 0$.
Let $Q_0 \coloneqq 1 + \kappa^{-1}$. Then:
\begin{itemize}[itemsep=1pt,leftmargin=1em]
    \item The limit of the averaged log-partition function (sometimes called the \emph{free entropy}) is given by 
\begin{align}\label{eq:fe_limit}
    \lim_{d \to \infty} \frac{1}{d^2} \EE_{\{y_i,\bx_i\}} \log \mcZ &= \sup_{q \in [1,Q_0]}\left[I(q) + \alpha \int_{\bbR \times \bbR} \rd y \mcD \xi \, J_q(y,\xi) \log J_q(y,\xi) \right],
\end{align}
where 
\begin{align}\label{eq:def_I_Jq}
    \begin{dcases}
        I(q) &\coloneqq \inf_{\hq \geq 0} \left[ \frac{(Q_0 - q) \hq}{4} - \frac{1}{2} \Sigma(\mu_{1/\hq}) - \frac{1}{4} \log \hq - \frac{1}{8}\right] , \\
        J_q(y,\xi) &\coloneqq \int \frac{\rd z}{\sqrt{4 \pi (Q_0-q)}} \exp\left\{-\frac{(z - \sqrt{2q}\xi)^2}{4(Q_0-q)}\right\} \, P_\out(y | z).
    \end{dcases}
\end{align}
Here, $\Sigma(\mu) \coloneqq \EE_{X,Y\sim \mu} \log |X-Y|$, and, for $t \geq 0$, $\mu_t \coloneqq \mu_{\MP,\kappa} \boxplus \sigma_{\sci,\sqrt{t}}$ is the free convolution of the Marchenko-Pastur distribution and a (scaled) semicircle law, see Appendix~\ref{sec_app:background} for its definition.
\item For any $\alpha > 0$, except possibly in a countable set, the supremum in eq.~\eqref{eq:fe_limit} is reached in a unique $q^\star \in [1,Q_0]$. Moreover, the asymptotic minimum mean-squared error on the estimation of $\bS^\star$, achieved by the Bayes-optimal estimator $\hbS^\BO \coloneqq \EE[\bS | \{y_i,\bx_i\}]$, is equal to $Q_0 - q^\star$: 
\begin{equation}\label{eq:limit_MMSE}
    \lim_{d \to \infty} \EE \tr[(\bS^\star - \hbS^\BO)^2] = Q_0 - q^\star.
\end{equation}
It is related to the $\MMSE$ of eq.~\eqref{eq:mmse} by $\MMSE = \kappa (Q_0 - q^\star)$.
\end{itemize}
\end{claim}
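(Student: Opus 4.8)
I would establish Claim~\ref{claim:free_entropy} by a (non-rigorous) replica-symmetric computation of $\lim_{d\to\infty}d^{-2}\EE\log\mcZ$ for the model of eq.~\eqref{eq:noisy_model}, followed by an I-MMSE argument to extract the error. Write $\EE\log\mcZ=\lim_{r\to0}r^{-1}(\EE\mcZ^r-1)$; for integer $r$, replicate the Wishart matrix into $\bS^1,\dots,\bS^r\sim\mcW_{m,d}$ i.i.d.\ and add the planted copy $\bS^0\coloneqq\bS^\star$. The sensing matrices $\bZ_i=(\bx_i\bx_i^\T-\Id_d)/\sqrt d$ enter only through the vectors $(\Tr[\bZ_i\bS^a])_{a=0}^r$, which by a Gaussian CLT (Wick's theorem for $\bx_i\sim\mcN(0,\Id_d)$, the non-Gaussian corrections vanishing in the limit --- this is the conjectural universality, cf.~\citep{semerjian2024matrix}) are asymptotically jointly centered Gaussian with covariance $2\,\tr[\bS^a\bS^b]$. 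Thus the data couple replicas only through the scalar $\tr[\bS^a\bS^b]$, so a single scalar order parameter suffices: Bayes-optimality (Nishimori) pins the replica-symmetric ansatz $\tr[(\bS^a)^2]\to Q_0=1+\kappa^{-1}$ (the second moment of $\mu_{\MP,\kappa}$) and $\tr[\bS^a\bS^b]\to q$ for $a\neq b$, with $q\in[1,Q_0]$ since $\tr\bS^a\to1$ and by Cauchy--Schwarz.

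\textbf{The two contributions.} Conditioned on the overlaps, the $n=\alpha d^2$ channel factors decouple, each contributing the log of a Gaussian average of $\prod_a P_\out(y\,|\,z_a)$ over the $(r+1)$-dimensional Gaussian field above; on the Bayes-optimal RS manifold and after $r\to0$ this collapses to $\alpha\int\rd y\,\mcD\xi\,J_q(y,\xi)\log J_q(y,\xi)$, with $J_q$ exactly as in eq.~\eqref{eq:def_I_Jq} (the Gaussian kernel has variance $2(Q_0-q)$ and the planted field enters as $\sqrt{2q}\,\xi$, so the effective field carries the correct self-variance $2Q_0$). The remaining prior/entropic term is the exponential rate for two independent Wishart matrices to have prescribed overlap $\tr[\bS^a\bS^b]=q$; introducing a conjugate parameter $\hq$ for this constraint rewrites it as the per-$d^2$ free entropy of the auxiliary problem of denoising $\bS^\star\sim\mcW_{m,d}$ from a GOE channel of signal-to-noise $\hq$ --- the problem whose Bayes-optimal solution is the rotationally-invariant estimator. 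Its asymptotics are governed by Harish-Chandra--Itzykson--Zuber / spherical-integral estimates~\citep{harish1957differential,itzykson1980planar} and by the extensive-rank matrix-denoising analysis of~\citep{maillard2022perturbative}, and yield $I(q)=\inf_{\hq\ge0}[\tfrac{(Q_0-q)\hq}{4}-\tfrac12\Sigma(\mu_{1/\hq})-\tfrac14\log\hq-\tfrac18]$, the free convolution $\mu_{1/\hq}=\mu_{\MP,\kappa}\boxplus\sigma_{\sci,\sqrt{1/\hq}}$ appearing because it is the spectral law of signal-plus-rescaled-semicircular-noise. Summing gives eq.~\eqref{eq:fe_limit}; stationarity in $(q,\hq)$ produces the self-consistent equations, and specializing $P_\out$ to the additive Gaussian channel of variance $\tilde\Delta$ reduces them to eq.~\eqref{eq:qhat_main} after evaluating the Gaussian-channel terms explicitly and using the free-probability identity for $\partial_\hq\Sigma(\mu_{1/\hq})$ (along the free heat flow) that generates the $\int\mu_{1/\hq}(y)^3\,\rd y$ term.

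\textbf{From free entropy to MMSE.} To prove eq.~\eqref{eq:limit_MMSE} I would perturb by an infinitesimal side channel $\bY_\epsilon=\sqrt\epsilon\,\bS^\star+\bxi_\epsilon$, $\bxi_\epsilon\sim\GOE(d)$: on the formula side this only shifts $\hq\mapsto\hq+\epsilon$ inside $I(q)$, so by the envelope theorem $\partial_\epsilon[\text{free entropy}]|_{\epsilon=0}=\tfrac14(Q_0-q^\star)$; on the probabilistic side the I-MMSE relation for the symmetric/GOE channel gives $\partial_\epsilon[\text{free entropy}]|_{\epsilon=0}=\tfrac14\lim\EE\tr[(\bS^\star-\hbS^\BO)^2]$ by Gaussian integration by parts. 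Equating and using the Nishimori identity $\lim\EE\tr[(\hbS^\BO)^2]=q^\star$ yields $\lim\EE\tr[(\bS^\star-\hbS^\BO)^2]=Q_0-q^\star$, and Lemma~\ref{lemma:MMSEy_MMSES} upgrades this to $\MMSE=\kappa(Q_0-q^\star)$, which with the fixed-point equations gives eq.~\eqref{eq:MMSE_qhat}. Uniqueness of $q^\star$ outside a countable set of $\alpha$ is the standard consequence of $\alpha\mapsto[\text{free entropy}]$ being convex (a supremum of functions affine in $\alpha$): it is differentiable off a countable set, and at points of differentiability the maximizer is unique.

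\textbf{Main obstacle.} The genuinely hard step is the prior/entropic term: unlike in the $m=O_d(1)$ generalized-linear-model analyses of~\citep{barbier2019optimal}, the effective ``prior'' on $\bS^\star$ is a structured full-rank matrix ensemble, so its configurational entropy is not a single-letter integral but a Coulomb-gas / HCIZ object, and pinning it down --- even heuristically as $-\tfrac12\Sigma(\mu_{1/\hq})-\tfrac14\log\hq-\tfrac18$ --- is exactly where the link to~\citep{maillard2022perturbative,maillard2023fitting,maillard2023exact} is essential. In a fully rigorous treatment (via the adaptive-interpolation method, as in~\citep{barbier2019optimal}) one would additionally need to justify the replica-symmetric ansatz --- concentration of $\tr[\bS^a\bS^b]$ and matching interpolation upper and lower bounds --- and make the CLT for $(\Tr[\bZ_i\bS^a])_a$ uniform enough to neutralize non-Gaussian corrections; but both ultimately reduce, at the interpolation endpoint, to the same matrix-denoising input, so the random-matrix computation stays the bottleneck.
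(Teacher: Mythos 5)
Your proposal is correct and follows essentially the same route as the paper: a replica-symmetric computation with the single overlap order parameter $\tr[\bS^a\bS^b]$, a CLT/universality replacement of the structured sensing matrices $\bZ_i$ by Gaussian ones, the identification of the entropic term as the free entropy of extensive-rank Wishart denoising evaluated via HCIZ asymptotics (yielding $-\tfrac12\Sigma(\mu_{1/\hq})-\tfrac14\log\hq-\tfrac18$), and the I-MMSE/convexity arguments for eq.~\eqref{eq:limit_MMSE} and the uniqueness of $q^\star$. You also correctly single out the matrix-denoising entropic term as the step that genuinely departs from the i.i.d.-prior analysis of \cite{barbier2019optimal}, which is exactly where the paper invokes \cite{maillard2022perturbative,pourkamali2023matrix,guionnet2002large}.
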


\myskip
\textbf{The condition $q \geq 1$ --} 
Notice that $q^\star = \lim_{d \to \infty} \EE[\tr(\bS^\star \hbS^{\BO})]$ according to Claim~\ref{claim:free_entropy}. 
As the MMSE decreases with $\alpha$, it is clear that $q^\star \geq q^\star(\alpha = 0)$. When $\alpha = 0$, we have $\hbS^\BO = \EE[\bS^\star] = \Id_d$, 
and thus $q^\star(\alpha = 0) = 1$. We check in Appendix~\ref{subsec:limit_alpha_0} that the value $q^\star(\alpha = 0) = 1$ is recovered by eq.~\eqref{eq:fe_limit}.

\myskip 
Specifying Claim~\ref{claim:free_entropy} to the problem of eq.~\eqref{eq:tildey},
we derive (details are given in Appendix~\ref{subsec_app:se_derivation}) Result~\ref{result:main_mmse}, more precisely eqs.~\eqref{eq:MMSE_qhat} and \eqref{eq:qhat_main}.

\subsection{Polynomial-time optimal estimation with the \gamp algorithm}

Let us recall a crucial observation of Section~\ref{subsec:it_optimal}: the learning problem of eq.~\eqref{eq:teacher_def}
can be effectively reduced to a \emph{generalized linear model} (GLM) on the matrix $\bS^\star$ (cf.\ eq.~\eqref{eq:noisy_model}): 
\begin{align}\label{eq:glm}
    y_i \sim P_\out(\cdot | \Tr[\bZ_i \bS^\star]),
\end{align}
with $\bZ_i \coloneqq (\bx_i \bx_i^\T - \Id_d)/\sqrt{d}$, $\bS^\star \sim \mcW_{m,d}$, and $P_\out$ a noise channel (which would be Gaussian in eq.~\eqref{eq:tildey}).
An important difficulty in analyzing eq.~\eqref{eq:glm} is the rather complex structure of the matrices $\bZ_i$ (which can be viewed as ``sensing vectors'' applied to $\bS^\star$).
Determining the optimal algorithm in GLMs when the sensing vectors have arbitrary structure 
is in general open.
Anticipating on a universality argument for the MMSE (cf.\ Section~\ref{sec:derivation_results}),
we ``forget'' momentarily about the structure of $\{\bZ_i\}$, and assume that the optimal algorithm takes the
form it would have if the $\{\bZ_i\}$ were instead Gaussian matrices (i.e.\ $\GOE(d)$).
For generalized linear models with Gaussian sensing vectors, a class of \emph{generalized approximate message-passing} (GAMP) algorithms have been extensively studied, and argued to reach optimal performance in the absence of a computational-to-statistical gap~\citep{donoho2009message,rangan2011generalized,zdeborova2016statistical}. The GAMP algorithm includes a denoiser that is adjusted to the prior information about the signal $\bS^\star$, that is in our case a Wishart distribution.
Combining these two facts, we propose the \gamp algorithm in Algorithm~\ref{algo:gamp}.
An implementation of \gamp is accessible in 
the public~\href{https://github.com/SPOC-group/ExtensiveWidthQuadraticSamples}{GitHub repository} associated to this work~\citep{github_repo}.
\begin{algorithm}[t]
\SetAlgoLined
\KwResult{The estimator $\hat{\bS}$}
\textbf{Input: } Observations $\by \in \bbR^{n}$ and ``sensing vectors'' $\bZ_i \coloneqq (\bx_i \bx_i^\T - \Id_d)/\sqrt{d} \in \bbR^{d \times d}$\;
\emph{Initialize} $\hat{\bS}^0 \sim \mcW_{m,d}$ and $\hat{\bc},\bomega,\bV$ randomly\;
\While{not converging}{
$\bullet$ \emph{Estimation of the variance and mean of $\Tr[\bZ_i {\hbS}]$}\;
$\displaystyle V^{t} = \hc^t$ \hspace{0.2cm}\textrm{ and }\hspace{0.2cm}  $\displaystyle \omega_i^{t} = \Tr[\bZ_i \hat{\bS}^t] - g_\mathrm{out}(y_i,\omega_i^{t-1},V^{t-1}) V^t$  \;
$\bullet$ \emph{Variance and mean of $\bS$ estimated from the ``channel'' observations}\;
$\displaystyle A^t = \frac{2\alpha}{n}\sum_{i=1}^n g_\mathrm{out}(y_i,\omega_i^t,V^t)^2$  \hspace{0.2cm}\textrm{ and }\hspace{0.2cm} 
$\displaystyle \bR^t = \hbS^t + \frac{1}{d A^t} \sum_{i=1}^n g_\mathrm{out}(y_i,\omega_i^t,V^t) \bZ_i$ \;
$\bullet$ \emph{Update of the estimation of $\bS^\star$ with the ``prior'' information}\;
$\displaystyle\hat{\bS}^{t+1} = f_{\RIE}\left(\bR^t, \frac{1}{2A^t}\right)$
\hspace{0.3cm} \hspace{0.2cm}\textrm{ and }\hspace{0.2cm}  \hspace{0.3cm}
$\displaystyle \hc^{t+1} = 2 F_\RIE\left(\frac{1}{2A^t}\right) $\;
$t = t + 1$\;
}
\caption{\gamp\label{algo:gamp}}
\end{algorithm}

\myskip
The functions $g_\out$, $f_\RIE$ and $F_\RIE$ appearing in Algorithm~\ref{algo:gamp} are defined as follows.
First, we let
\begin{align}\label{eq:def_gout}
    g_\out(y,\omega,V) &\coloneqq \frac{1}{V} \frac{\int \mathrm{d}z \, (z-\omega) \, e^{-\frac{(z - \omega)^2}{2 V}} \, P_\out(y|z)}{ \int \mathrm{d}z  \, e^{-\frac{(z - \omega)^2}{2 V}} \, P_\mathrm{out}(y|z)}.
\end{align}
In particular, for the problem of eq.~\eqref{eq:tildey}, we have 
\begin{align*}
    g_\out(y,\omega,V) &= \frac{y - \omega}{\tDelta + V}.
\end{align*}
The two functions $(f_\RIE, F_\RIE)$ are related to the problem of \emph{matrix denoising}, 
in which one aims at recovering a matrix $\bS_0 \sim \mcW_{m,d}$ from the observation of $\bR = \bS_0 + \sqrt{\Delta} \bxi$, with $\bxi \sim \GOE(d)$. 
We recall some important results on this problem, and how they relate to the definition of the functions $(f_\RIE, F_\RIE)$.
\begin{itemize}
    \item[$(i)$] The optimal estimator (in the sense of mean squared error) of $\bS_0$ has been worked out in \cite{bun2016rotational}, 
    and belongs to the class of \emph{rotationally-invariant estimators} (RIE).
    $f_\RIE(\bR,\Delta)$ is this optimal estimator, and it admits the following explicit form. 
    If $\bR = \bU \bLambda \bU^\T$ is the spectral decomposition of $\bR$, 
    and letting $\rho_\Delta \coloneqq \mu_{\MP, \kappa} \boxplus \sigma_{\sci, \sqrt{\Delta}}$ be its asymptotic eigenvalue distribution (see Appendix~\ref{sec_app:background} for the definition of the free convolution $\mu \boxplus \nu$ and its relation to the sum of asymptotically free matrices),
    then $f_\RIE(\bR, \Delta) = \bU f_\Delta(\bLambda) \bU^\T$,
    where $f_\Delta(\lambda) = \lambda - 2 \Delta h_\Delta(\lambda)$, with $h_\Delta$ the 
    Hilbert transform of $\rho_\Delta$. 
    More precisely:
    \begin{align*}
        h_\Delta(\lambda) &\coloneqq \mathrm{P.V.} \int \frac{1}{\lambda-t} \rho_\Delta(t) \rd t.
    \end{align*}
    $\rho_\Delta$ and $h_\Delta$ can be evaluated numerically very efficiently, see Appendix~\ref{sec_app:background} for details.
    \item[$(ii)$] $F_\RIE(\Delta)$ is defined as the asymptotic MMSE of the same matrix denoising problem. 
    It can be written in the two equivalent forms (see~\cite{maillard2022perturbative,pourkamali2023matrix,semerjian2024matrix}): 
    \begin{align}\label{eq:F_RIE}
        F_\RIE(\Delta) &= \Delta - \frac{4\pi^2}{3} \Delta^2 \int \rd \lambda \, \rho_\Delta(\lambda)^3 
        = \Delta - 4 \Delta^2 \int \rd \lambda \, \rho_\Delta(\lambda) h_\Delta(\lambda)^2.
    \end{align}
\end{itemize}
In Appendix~\ref{subsec_app:se_gamp} we sketch the derivation of the \emph{state evolution} of Algorithm~\ref{algo:gamp}, assuming a universality result discussed in Section~\ref{sec:derivation_results} holds as well for GAMP-RIE. Concretely, we show that one can analytically track the 
performance of its iterates in the high-dimensional limit, and we draw a formal connection with the information-theoretic predictions of Claim~\ref{claim:free_entropy}. Notably, we obtain a so-called state-evolution of the \gamp algorithm (which turns out to follow from rigorous work on non-separable estimation with GAMP~\citep{berthier2020state,gerbelot2023graph}), and show that its fixed points agree with the fixed point equations that provide the Bayes-optimal error. In all regions of parameters that we investigated below we observed a unique fixed point, meaning that the \gamp algorithm asymptotically reaches the Bayes-optimal performance.

\section{Derivation of the main results}\label{sec:derivation_results}
We derive our main result (Claim~\ref{claim:free_entropy}) in two ways. First, we show how one can show Claim~\ref{claim:free_entropy} using the \emph{replica method}, a heuristic but exact method (hence the word ``claim'') which originated in statistical physics~\citep{mezard1987spin}, and has been used extensively in theoretical physics, as well as in a growing body of work in high-dimensional statistics, theoretical computer science, and theoretical machine learning~\citep{mezard2009information,zdeborova2016statistical,gabrie2020mean,charbonneau2023spin}. The derivation, that has an interest on its own, is performed in detail in Appendix~\ref{sec_app:replica} and leverages recent progress on the problems of ellipsoid fitting~\citep{maillard2023fitting,maillard2023exact} and extensive-rank matrix denoising~\citep{maillard2022perturbative,pourkamali2023matrix,semerjian2024matrix}.

\myskip
Despite the replica method being conjectured to yield exact results in a large class of high-dimensional models, a rigorous treatment of it remains elusive. It is important, we feel, to present as well a more mathematically sound derivation of our claims, and we thus give an alternative derivation of the Claim~\ref{claim:free_entropy} using probabilistic techniques amenable to rigorous treatment. In what follows, we present a three-step sketch of a mathematical proof of Claim~\ref{claim:free_entropy} that combines recent progress performed on the study of a problem known as the ellipsoid fitting conjecture~\citep{maillard2023fitting,maillard2023exact} with the analysis of the fundamental limits of so-called generalized linear models~\citep{barbier2019optimal}, as well as matrix denoising problems~\citep{bun2016rotational,maillard2022perturbative,pourkamali2023matrix,semerjian2024matrix}. While a complete mathematical treatment requires more work, we detail the main challenges arising in each of these steps, outlining a fully rigorous establishment of Claim~\ref{claim:free_entropy}.

\myskip
We denote the \emph{free entropy}
$\Phi_d \coloneqq (1/d^2) \EE \log \mcZ(\{y_i,\bx_i\})$, cf.\ eq.~\eqref{eq:def_Z}.
We detail three precise results (two conjectures and a theorem), motivated by recent mathematical works, whose combination would rigorously establish the results of Claim~\ref{claim:free_entropy}.
Recall that we consider the high-dimensional limit of eq.~\eqref{eq:highd_limit}.

\paragraph{Step 1: Universality with a ``Gaussian equivalent'' problem --}
The first step of our approach is inspired by recent literature on the \emph{ellipsoid fitting problem}~\citep{maillard2023fitting,maillard2023exact}. It amounts to notice that, if $\bZ_i \coloneqq (\bx_i \bx_i^T - \Id_d)/\sqrt{d}$, by the central limit theorem, for any symmetric matrix $\bS$, $\Tr[\bZ_i \bS]$ is (under mild boundedness conditions on the spectrum of $\bS$) approximately distributed as $\mcN(0, 2 \tr[\bS^2])$ as $d \to \infty$. 
A large body of recent literature has established that the free entropy is universal for all data distributions sharing the same asymptotic distribution of their ``one-dimensional projections'', see e.g.~\cite{hu2022universality,montanari2022universality,dandi2024universality,maillard2023exact}.
This motivates the conjecture that the free entropy should remain identical (to leading order) if one replaces the matrices $\bZ_i$ with $\bG_i \sim \GOE(d)$. 
\begin{conjecture}[Universality]\label{conj:universality}
We define 
\begin{equation}\label{eq:def_PhiG}
    \Phi_d^{(G)} \coloneqq \frac{1}{d^2} \EE_{(\{y'_i,\bG_i\})} \log \EE_{\bS \sim \mcW_{m,d}}\prod_{i=1}^n P_\out\left(y'_i \middle| \Tr[\bG_i \bS] \right),
\end{equation}
where $y'_i \sim P_\out(\cdot|\Tr[\bG_i \bS^\star])$, with $\bS^\star \sim \mcW_{m,d}$ and $\bG_i \iid \GOE(d)$.
Then
\begin{equation*}
    \lim_{d \to \infty} 
    |\Phi_d - \Phi_d^{(G)}| = 0.
\end{equation*}
\end{conjecture}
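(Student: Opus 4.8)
\textbf{Towards a proof of Conjecture~\ref{conj:universality}.}
The plan is to establish this universality by a Lindeberg-type interpolation (a ``Gaussian equivalence'' argument) that swaps the quadratic sensing matrices $\bZ_i = (\bx_i \bx_i^\T - \Id_d)/\sqrt{d}$ for the Gaussian ones $\bG_i \sim \GOE(d)$ one observation at a time. Concretely, for $k \in \{0,1,\dots,n\}$ I would introduce an interpolating model in which observations $i \le k$ are both generated and scored through a fresh $\bG_i$ and observations $i > k$ through $\bZ_i$, while keeping a single planted signal $\bS^\star \sim \mcW_{m,d}$ and, inside the partition function, the single prior $\bS \sim \mcW_{m,d}$; write $\Phi_d^{(k)}$ for the corresponding rescaled free entropy, so that $\Phi_d^{(0)} = \Phi_d$ and $\Phi_d^{(n)} = \Phi_d^{(G)}$. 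Since there are $n = \alpha d^2$ steps and $\Phi_d$ carries a $1/d^2$ prefactor, it suffices to bound each single-observation increment $\EE\log\mcZ^{(k+1)} - \EE\log\mcZ^{(k)}$ by $o(1)$ uniformly in $k$; a quantitative third-cumulant estimate should in fact give $O(d^{-1/2})$ per step, hence $O(d^{3/2}) = o(d^2)$ in total.

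The per-step estimate is the technical core. Fixing $k$, write $\mcZ^{(k)} = \EE_{\bS \sim \mcW_{m,d}}[\,\mathcal{A}(\bS)\, P_\out(y_k \,|\, \Tr[\bZ_k \bS])\,]$, where $\mathcal{A}(\bS)$ collects the (common) likelihood contributions of the other $n-1$ observations and $y_k \sim P_\out(\cdot \,|\, \Tr[\bZ_k \bS^\star])$; passing to model $k$ replaces the pair $(\bZ_k, y_k)$ by $(\bG_k, y_k')$ with $y_k' \sim P_\out(\cdot \,|\, \Tr[\bG_k \bS^\star])$. Conditionally on $\bS$, the two ``one-dimensional projections'' $\Tr[\bZ_k \bS]$ and $\Tr[\bG_k \bS]$ both have mean $0$ and the same variance $2\tr[\bS^2]$ (and likewise for the projections of $\bS^\star$ entering $y_k$); this exact moment matching is what makes the leading order of the increment cancel. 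Bounding the remainder then reduces to a quantitative central limit theorem for the quadratic form $\Tr[\bZ_k \bS]$ — via Stein's method or a Berry--Esseen bound — applied against the smooth test function obtained by integrating $z \mapsto P_\out(y_k \,|\, z)$ against the conditional law of $\bS$ under $\mathcal{A}$; the error is controlled by the normalized third cumulant $\tr[\bS^3]/\sqrt{d}$, and in any case by the Lindeberg ratio $\|\bS\|_{\mathrm{op}}^2 / \Tr[\bS^2]$. For the Gaussian channel of eq.~\eqref{eq:tildey} this test function is manifestly smooth with controlled derivatives; for a general $P_\out$ one first mollifies it (or assumes mild regularity) and controls the mollification error separately.

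Two a priori ingredients make this rigorous. First, one restricts the integration variable $\bS$ and the signal $\bS^\star$ to $\{\|\bS\|_{\mathrm{op}} \le C\}$, so that the Lindeberg ratio above is $O(1/d)$: since both are Wishart with aspect ratio $\kappa$, the operator norm concentrates near $(1+\kappa^{-1/2})^2$ with exponentially small tails (Bai--Yin), so truncating the prior and the planted signal changes $\EE\log\mcZ$ by $o(1)$ once $C$ is large enough and $P_\out$ does not place vanishing mass on typical arguments (immediate for a Gaussian channel). Second, the increment couples $\bS$ to $\bS^\star$ only through the freshly planted label $y_k$; on the Nishimori line (matched prior and channel) one can symmetrize these two roles, reducing the relevant functionals to quantities that depend on $\bS,\bS^\star$ through their spectra and their overlap $\tr[\bS\bS^\star]$, to which the same truncation applies.

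The step I expect to be the main obstacle is not any of the above in isolation but making the central-limit swap \emph{uniform over the effective, tilted measure} that $\mathcal{A}(\bS)$ induces on $\bS$: this is a high-dimensional posterior that may have heavy tails in directions aligned with the other sensing matrices, and one must rule out that atypical $\bS$ — with large operator norm, or strongly correlated with some $\bZ_j$, $j \ne k$ — contribute non-negligibly to the increment. This is exactly the difficulty overcome, for the special case of the indicator channel $P_\out(y|z) = \mathbf{1}\{|z| \le \epsilon\}$, in the recent analyses of the ellipsoid-fitting free energy~\citep{maillard2023fitting,maillard2023exact}; adapting that machinery — a planting/second-moment reduction together with careful operator-norm control propagated along the interpolation — to a general smooth channel $P_\out$ and to a Wishart (rather than deterministic) planted matrix $\bS^\star$ would complete the proof of Conjecture~\ref{conj:universality}.
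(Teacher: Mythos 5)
The statement you are addressing is stated in the paper as a \emph{conjecture}: the authors do not prove it, they only motivate it heuristically (the first two moments of the one-dimensional projections $\Tr[\bZ_i\bS]$ and $\Tr[\bG_i\bS]$ match, both being centered with variance $2\tr[\bS^2]$, and the claim is presented as an extension, to the teacher--student setting, of Corollary~4.10 of \cite{maillard2023exact}). Your Lindeberg-interpolation plan is the natural route and is entirely consistent with that motivation; your moment and cumulant computations are correct (the conditional variances do coincide exactly, and the third cumulant of $\Tr[\bZ_k\bS]$ is indeed $O(\tr[\bS^3]/\sqrt{d})$, so the naive bookkeeping $n\cdot O(d^{-1/2}) = o(d^2)$ is the right target), and the a priori operator-norm truncation of the Wishart prior and signal is the right preprocessing step.

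However, what you have written is a roadmap, not a proof, and you say so yourself: the decisive step --- bounding the single-swap increment \emph{uniformly over the tilted measure} $\mathcal{A}(\bS)\,\rd\mcW_{m,d}(\bS)$, i.e.\ ruling out that atypical $\bS$ (large operator norm, or strongly aligned with the other $n-1$ sensing matrices) contribute non-negligibly after exponential tilting by $\Theta(d^2)$ likelihood factors --- is precisely the open content of the conjecture. A second difficulty you mention only in passing deserves equal weight: because this is a planted model, swapping $\bZ_k$ for $\bG_k$ also changes the law of the label $y_k$, so the increment is not a pure Lindeberg replacement in a fixed measure; the existing universality results (including the ellipsoid-fitting analyses, which treat a specific indicator channel and a non-teacher-student formulation) do not directly cover this, which is why the paper leaves the statement as a conjecture. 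In short: your approach matches the paper's intended strategy and contains no incorrect step, but the key estimate is asserted rather than established, so the conjecture remains unproven by your argument.
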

\noindent
Conjecture~\ref{conj:universality} can be seen as an extension of Corollary~4.10 of~\cite{maillard2023exact}, in the context of a teacher-student model. In particular, we expect it to hold under mild regularity conditions on the channel density $P_\out$ (which are satisfied by the Gaussian additive noise we consider).

\paragraph{Step 2: A matrix generalized linear model with a  Wishart prior --}
By the first step above, we can focus on $\Phi_d^{(G)}$, and the corresponding estimation problem.
A key observation is that one can view this problem as an instance of a \emph{generalized linear model} on $\bS^\star$, with a Gaussian data matrix whose $i$-th row is the flattening of the matrix $\bG_i$. The limiting free entropy of such models has been worked out in~\cite{barbier2019optimal}, when the ``ground-truth vector'' (here $\bS^\star$) has i.i.d.\ elements.
However, here the prior is far from being i.i.d. since $\bS^\star \sim \mcW_{m, d}$.
The results of~\cite{barbier2019optimal} generalize naturally to other priors, but such extensions have only been rigorously analyzed in specific settings, e.g.\ for generative priors rather than i.i.d.~\citep{aubin2019spiked,aubin2020exact}.
In our setting, the structure of the Wishart prior raises several technical difficulties preventing to directly transpose the proof approaches of~\cite{barbier2019optimal}, so we state the following result as a conjecture.
\begin{conjecture}[The free entropy of a matrix generalized linear model]
\label{conj:reduction_denoising}
We have
\begin{align*}
    \lim_{d \to \infty} \Phi_d^{(G)} &= \sup_{q \in [1,Q_0]} \inf_{\hq \geq 0}\left[\frac{(Q_0 - q) \hq}{4} + \Psi(\hq) + \alpha \int_{\bbR \times \bbR} \rd y \mcD \xi J_q(y,\xi) \log J_q(y,\xi) \right],
\end{align*}
where
\begin{align}\label{eq:def_Psi}
        \Psi(\hq) &\coloneqq \frac{1}{4} + \lim_{d \to \infty} \frac{1}{d^2} \EE_{\bY} \log \EE_{\bS \sim \mcW_{m,d}}  \, \exp\left(-\frac{d}{4} \Tr[(\bY - \sqrt{\hq}\bS)^2]\right) 
\end{align}
is the asymptotic free entropy of the \emph{matrix denoising} problem $\bY = \sqrt{\hq} \bS^\star + \bxi$, with $\bxi \sim \GOE(d)$, and $\bS^\star \sim \mcW_{m,d}$, and we assume that the $d \to \infty$ limit in eq.~\eqref{eq:def_Psi} is well-defined.
\end{conjecture}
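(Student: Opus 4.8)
To establish Conjecture~\ref{conj:reduction_denoising} one wants to show that, in the Bayes-optimal setting, the $n=\alpha d^2$ observations $y_i \sim P_\out(\cdot\,|\,\Tr[\bG_i\bS^\star])$ act on the posterior of $\bS^\star$ asymptotically as a single Gaussian ``side channel'' $\bY=\sqrt{\hq}\,\bS^\star+\bxi$, $\bxi\sim\GOE(d)$, at an effective inverse-noise $\hq$ fixed self-consistently. The natural tool is the \emph{adaptive interpolation method} developed for scalar generalized linear models in~\cite{barbier2019optimal}: one interpolates between the matrix GLM (at interpolation time $t=0$) and a fully decoupled problem (at $t=1$) consisting of the matrix denoising channel $\bY=\sqrt{\hq}\,\bS^\star+\bxi$ together with an independent ``effective output channel''. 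Once the interpolation closes, the boundary term at $t=1$ is precisely the matrix-denoising free entropy $\Psi(\hq)$ of eq.~\eqref{eq:def_Psi} (the analogue of the scalar-denoising free entropy appearing for i.i.d.\ priors in~\cite{barbier2019optimal}), the output contributes the term $\alpha\int \rd y\,\mcD\xi\,J_q(y,\xi)\log J_q(y,\xi)$ — identical in form to the output term of the scalar GLM replica formula — and the coupling $\tfrac{(Q_0-q)\hq}{4}$ ties the overlap $q=\lim\EE\tr[\bS^\star\hbS^\BO]$ (equivalently the MMSE $Q_0-q$, recall $\tr[(\bS^\star)^2]\to Q_0$ for a Wishart matrix and $\EE\bS^\star=\Id_d$) to the side-channel strength $\hq$. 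The $\sup_{q}\inf_{\hq}$ structure, the range $q\in[1,Q_0]$ (lower bound $q\geq\tr[(\EE\bS^\star)^2]=1$ via the Nishimori identity and Jensen's inequality, upper bound $q\leq Q_0$ via Cauchy--Schwarz on the posterior), and $\hq\geq 0$ all come out of the argument exactly as in the scalar case.

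\textbf{Step A: interpolation and the $t$-derivative.} For $t\in[0,1]$, I would define an interpolated model that reveals $(1-t)n$ of the GLM samples together with the Gaussian observation $\bY_t=\sqrt{R(t)}\,\bS^\star+\bxi$, where $R(t)=\int_0^t\hq(s)\,\rd s$ for a trial function $\hq(\cdot)\geq 0$ to be chosen adaptively, with $R(0)=0$. Writing $\phi_d(t)$ for the free entropy of the interpolated model, $\phi_d(0)=\Phi_d^{(G)}+o(1)$ and $\phi_d(1)$ splits into $\Psi(R(1))$ plus the decoupled output term. By the Nishimori identity (valid since the model is Bayes-optimal), $\phi_d'(t)$ equals a ``prior-side'' term proportional to $\hq(t)\big(Q_0-\EE\langle\tr[\bS^\star\bS]\rangle\big)$ minus an ``output-side'' term (the derivative of the output free entropy in the effective overlap), up to a remainder controlled by the fluctuations of the overlap $\tr[\bS^\star\bS]$ around its mean under the posterior $\langle\cdot\rangle$. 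Choosing $\hq(t)$ to solve the associated first-order fixed-point ODE and integrating from $0$ to $1$ then yields $\Phi_d^{(G)}=$ (value of the claimed functional at a stationary point) $+\,o(1)$; matching upper and lower bounds, and hence the full $\sup\inf$, follow from the convexity/monotonicity of the trial functional, exactly as in~\cite{barbier2019optimal}.

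\textbf{Step B: overlap concentration with a structured prior — the main obstacle.} The argument of Step~A requires that the overlap $\tr[\bS^\star\bS]$ concentrate on its mean under the interpolated posterior. For i.i.d.\ priors this is obtained in~\cite{barbier2019optimal} by exploiting the product structure of the prior — perturbing a vanishing fraction of the coordinates — which is unavailable here since $\bS^\star\sim\mcW_{m,d}$ is far from having independent entries. I expect this to be the crux of a rigorous proof. A promising route is to exploit the orthogonal invariance of the whole problem: both $\mcW_{m,d}$ and the $\GOE$ sensing matrices $\bG_i$ are invariant under $\bS\mapsto O\bS O^\T$, $\bG_i\mapsto O\bG_i O^\T$, and the data enters only through the scalar fields $\Tr[\bG_i\bS]$, so (i) the effective order parameter — which in the matrix setting could a priori be a full $d\times d$ matrix — collapses to the scalar overlap $q$ together with the (deterministic, by spectral concentration) limiting eigenvalue distribution of $\bS$, and (ii) concentration of $\tr[\bS^\star\bS]$ reduces to concentration of linear spectral statistics of $\bS$ and of the alignment between the eigenbases of $\bS$ and $\bS^\star$. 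Both should be tractable using concentration estimates for rotationally-invariant / deformed-Wishart ensembles — e.g.\ via Poincar\'e or log-Sobolev inequalities on the eigenvalue density with its Coulomb repulsion, or via the concentration machinery already developed for extensive-rank matrix denoising and ellipsoid fitting~\citep{maillard2022perturbative,semerjian2024matrix,maillard2023exact} — but making this quantitative enough to feed into the adaptive interpolation, in particular obtaining a $o(1)$ remainder uniformly in $t$, is the principal technical task.

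\textbf{Step C: identifying the boundary term and concluding.} The $t=1$ boundary term is the free entropy $\Psi(\hq)$ of the matrix denoising problem $\bY=\sqrt{\hq}\,\bS^\star+\bxi$. Its large-$d$ limit is itself nontrivial — it is governed by the asymptotics of the rank-$d$ Harish-Chandra-Itzykson-Zuber (spherical) integral arising from the coupling $\Tr[\bY\bS]$ between the non-commuting matrices $\bY$ and $\bS$ — but it has already been computed in~\cite{maillard2022perturbative,pourkamali2023matrix,semerjian2024matrix}, so one imports those results rather than reproving them; in particular the well-definedness assumption on the limit in eq.~\eqref{eq:def_Psi} is justified there. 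Assembling Steps~A--C gives $\lim_{d\to\infty}\Phi_d^{(G)}=\sup_{q\in[1,Q_0]}\inf_{\hq\geq 0}\big[\tfrac{(Q_0-q)\hq}{4}+\Psi(\hq)+\alpha\int_{\bbR\times\bbR}\rd y\,\mcD\xi\,J_q(y,\xi)\log J_q(y,\xi)\big]$, which is the statement of Conjecture~\ref{conj:reduction_denoising}; combined with Conjecture~\ref{conj:universality} (to pass from $\Phi_d$ to $\Phi_d^{(G)}$) and with the explicit evaluation of $\Psi(\hq)$ yielding the quantity $I(q)$ of eq.~\eqref{eq:def_I_Jq}, this establishes Claim~\ref{claim:free_entropy}.
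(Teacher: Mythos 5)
The statement you are addressing is stated in the paper as a \emph{conjecture}, and the paper does not prove it either: it motivates it by analogy with the i.i.d.-prior GLM results of \cite{barbier2019optimal} and explicitly names the Wishart (non-separable) prior as the obstruction to transposing those proofs. Your proposal is therefore best read as a proof \emph{strategy} with an acknowledged gap, and the gap you isolate in Step~B --- concentration of the overlap $\tr[\bS^\star\bS]$ under the interpolated posterior, which in \cite{barbier2019optimal} is obtained by perturbing a vanishing fraction of i.i.d.\ coordinates --- is exactly the difficulty the authors cite as the reason the statement remains a conjecture. In that sense you have correctly identified both the natural rigorous route (adaptive interpolation, with the $t=1$ boundary term being the matrix-denoising free entropy $\Psi(\hq)$, imported from the HCIZ asymptotics of \cite{maillard2022perturbative,pourkamali2023matrix,semerjian2024matrix}) and the precise point where it currently breaks down. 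One minor imprecision: in the adaptive interpolation of \cite{barbier2019optimal} the signal-to-noise inside each channel is interpolated continuously rather than ``revealing $(1-t)n$ samples''; either scheme could in principle be set up, but your description conflates it with the sample-wise cavity/Guerra--Toninelli construction.

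It is worth noting that the paper's actual \emph{derivation} of this formula proceeds by a genuinely different route: the replica computation of Appendix~\ref{sec_app:replica}. There, the overlap matrix $Q_{ab}=\tr(\bS^a\bS^b)$ is introduced via delta functions with conjugate parameters $\hQ_{ab}$, the replica-symmetric ansatz is imposed, and the off-diagonal coupling $\frac{d\hq}{4}\Tr[(\sum_a\bS^a)^2]$ is linearized by a Gaussian (Hubbard--Stratonovich) identity with a $\GOE(d)$ field $\bxi$; the $r\to 0$ limit then produces the term $\lim_d d^{-2}\EE_{\bxi}[H_{\hq}(\bxi)\log H_{\hq}(\bxi)]$, which is exactly the matrix-denoising free entropy $\Psi(\hq)$ up to explicit constants. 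The replica route thus handles the structured prior ``for free'' (at the price of being non-rigorous), whereas your interpolation route would make the argument rigorous but must confront the overlap-concentration problem head on. Your suggestion to exploit rotational invariance and concentration of linear spectral statistics is plausible and consistent with the paper's outlook, but as you yourself note it is not carried out, so your text establishes the conjecture to the same (heuristic) degree as the paper does, by a different and complementary argument.
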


\paragraph{Step 3: Extensive-rank matrix denoising --}
As a last step, we study the function $\Psi(\hq)$ defined in eq.~\eqref{eq:def_Psi}. The optimal estimators and limiting free entropy in matrix denoising have been worked out in~\cite{bun2016rotational,maillard2022perturbative}, and formally proven (under some assumptions) in~\cite{pourkamali2023matrix,semerjian2024matrix}. 
We provide a very short and assumption-free proof of the following result in Appendix~\ref{subsec_app:proof_matrix_denoising}.
\begin{theorem}[Free entropy of matrix denoising]\label{thm:matrix_denoising}
For any $\hq \geq 0$, the limit in eq.~\eqref{eq:def_Psi} is well-defined, and moreover (recall the definition of $\Sigma(\mu)$ and $\mu_t$ in Claim~\ref{claim:free_entropy})
\begin{equation}\label{eq:Psi_hq}
    \Psi(\hq) = -\frac{1}{2} \Sigma(\mu_{1/\hq}) - \frac{1}{4} \log \hq - \frac{1}{8}.
\end{equation}
\end{theorem}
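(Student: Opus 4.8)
The plan is to compute the free entropy $\Psi(\hq)$ of the matrix denoising problem $\bY = \sqrt{\hq}\,\bS^\star + \bxi$ directly, exploiting the fact that in the Bayes-optimal setting the partition function inside eq.~\eqref{eq:def_Psi} is (up to constants) a spherical/matrix integral that is amenable to large-deviations and random-matrix analysis. First I would expand the square, $\Tr[(\bY-\sqrt{\hq}\bS)^2] = \Tr[\bY^2] - 2\sqrt{\hq}\Tr[\bY\bS] + \hq \Tr[\bS^2]$, so that the $\bY$-independent piece $\Tr[\bY^2]$ can be factored out and the remaining average over $\bS \sim \mcW_{m,d}$ becomes $\EE_\bS \exp\{ \tfrac{d}{2}\sqrt{\hq}\Tr[\bY\bS] - \tfrac{d}{4}\hq\Tr[\bS^2]\}$. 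The key structural observation is \emph{rotational invariance}: both the GOE noise and the Wishart prior are rotationally invariant, so the integral depends on $\bY$ only through its spectrum $\{\lambda_i\}$. One can therefore first integrate over the orthogonal group — this is precisely a Harish-Chandra--Itzykson--Zuber (HCIZ) integral — whose large-$d$ asymptotics are known (the formula involves the $\Sigma$-functional and the $R$-transform / free convolution), and then integrate over the eigenvalues by a saddle-point / Coulomb-gas argument.

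The steps, in order: (1) reduce the $\bS$-average to an HCIZ-type integral over $O(d)$ times an eigenvalue integral over the Wishart spectrum, using rotational invariance; (2) insert the known large-deviation rate function for the Wishart (Marchenko--Pastur) eigenvalue density and the asymptotics of the HCIZ integral to get a variational formula for $\Psi(\hq)$ in terms of a candidate eigenvalue measure; (3) recognize — using the I-MMSE / Nishimori identities valid in the Bayes-optimal matrix denoising problem, together with the fact (quoted from \cite{bun2016rotational,maillard2022perturbative,pourkamali2023matrix,semerjian2024matrix}) that the optimal spectral denoiser transports $\mu_{\MP,\kappa}$ through the free-convolution flow — that the optimizing measure is exactly $\mu_{1/\hq} = \mu_{\MP,\kappa}\boxplus\sigma_{\sci,\sqrt{1/\hq}}$, i.e. the asymptotic spectrum of $\bR = \bS_0 + \hq^{-1/2}\bxi$ rescaled appropriately; (4) evaluate the resulting variational expression at this measure and simplify. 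An alternative, and perhaps cleaner, route for the simplification step is to differentiate $\Psi$ with respect to the signal-to-noise ratio: by the I-MMSE relation, $\partial_{\hq}\Psi(\hq)$ is (up to explicit constants) the MMSE $F_\RIE$ of eq.~\eqref{eq:F_RIE}, which is known in closed form; integrating this ODE in $\hq$, with the boundary condition at $\hq \to 0$ (or $\hq \to \infty$) fixed by the trivial estimator, should reproduce eq.~\eqref{eq:Psi_hq} after identifying the constant of integration via the behaviour of $\Sigma(\mu_{1/\hq})$ and $\log\hq$ in that limit. A useful consistency check along the way: the combination $-\tfrac12\Sigma(\mu_{1/\hq}) - \tfrac14\log\hq - \tfrac18$ should have the correct $\hq\to\infty$ expansion, namely $\Psi(\hq) \to \Psi_{\mathrm{Wishart}}$ (the entropy of the prior, recovered when the noise vanishes), and $\Psi(\hq)\to 0$ as $\hq\to 0$.

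The main obstacle I expect is making the eigenvalue large-deviations / saddle-point argument fully rigorous and "assumption-free'' as promised: one must control the HCIZ asymptotics uniformly (the HCIZ large-$d$ limit is delicate when the two spectra overlap or are unbounded, and the Wishart spectrum has a hard edge at $0$ when $\kappa<1$), justify the interchange of limit and supremum, and verify that the Coulomb-gas functional is strictly concave so that the optimizer is unique and equals $\mu_{1/\hq}$. The I-MMSE shortcut sidesteps some of this but trades it for a different difficulty: one needs the MMSE formula $F_\RIE$ to hold as a genuine theorem (not just a replica prediction) and needs enough regularity in $\hq$ to integrate the relation — this is exactly where the cited rigorous works \cite{pourkamali2023matrix,semerjian2024matrix} enter, though the claim is that a self-contained argument avoiding their technical assumptions is possible. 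The rest — the algebraic manipulations expanding the square, tracking the additive constants $-\tfrac14\log\hq$ and $-\tfrac18$, and matching normalizations between $\tr$ and $\Tr$ — is routine bookkeeping that I would relegate to Appendix~\ref{subsec_app:proof_matrix_denoising}.
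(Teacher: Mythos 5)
Your proposal is correct in outline but follows a genuinely different --- and substantially harder --- route than the paper's proof, and the step you flag as ``the main obstacle'' is precisely the one the paper's argument is designed to bypass. The paper does not set up a Coulomb-gas variational problem at all. Its proof of Theorem~\ref{thm:matrix_denoising} is a two-step citation argument: (i) Theorem~1 of \cite{pourkamali2023matrix} gives, without assumptions, the exact identity $\Psi(\hq) = \tfrac{1}{2} I_\HCIZ(\hq,\mu_{\MP,\kappa},\mu_{1/\hq}) - \tfrac{1}{2} Q_0\hq$, i.e.\ the posterior average over $\bS$ already concentrates so that only the HCIZ integral between the two \emph{typical} spectra survives at order $d^2$; (ii) the HCIZ integral is then evaluated in closed form not by solving a variational problem, but by invoking Theorem~1.1 of \cite{guionnet2002large}, which expresses $\tfrac{1}{2} I_\HCIZ(t^{-1},\nu,\mu)$ as an explicit functional of $(\nu,\mu)$ minus the large-deviations rate function $J(\nu;\mu)$ of the empirical spectrum of $\bR+\sqrt{t}\bW$ with $\bW\sim\GOE(d)$. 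Since that spectrum converges almost surely to $\mu\boxplus\sigma_{\sci,\sqrt{t}}$, one has $J(\mu\boxplus\sigma_{\sci,\sqrt{t}};\mu)=0$, and evaluating at $\nu=\mu_{1/\hq}$, $t=1/\hq$ yields eq.~\eqref{eq:Psi_hq} after elementary algebra. No saddle point over eigenvalue measures, no concavity or uniqueness argument, and no regularity in $\hq$ is needed to integrate an I-MMSE ODE.

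By contrast, your steps (1)--(4) essentially reproduce the replica derivation of Appendix~\ref{sec_app:replica} (change of variables to eigenvalues, Laplace method on the empirical spectral measure, Nishimori identity to locate the maximizer, HCIZ asymptotics), and as a proof this leaves two concrete gaps. First, the identification of the optimizer is both unproved and slightly misstated: the measure selected by the Laplace method is the spectrum of $\bS$ under the posterior, which by Nishimori is $\mu_{\MP,\kappa}$, \emph{not} $\mu_{1/\hq}$ as you write ($\mu_{1/\hq}$ is the rescaled spectrum of the observation $\bY$ and enters only as the second argument of the HCIZ integral); turning ``Nishimori implies the optimizer is the prior's spectrum'' into a rigorous statement at the level of large deviations is exactly the content of the cited theorem of \cite{pourkamali2023matrix}, not something one gets for free. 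Second, your I-MMSE shortcut requires $F_\RIE$ to be available as a theorem together with differentiability of $\Psi$ in $\hq$, which is circular here since the closed form of $F_\RIE$ is normally \emph{derived from} the free entropy. So the proposal is a viable plan whose hardest steps remain open, whereas the paper's route closes them by combining two existing theorems whose concatenation is the entire (short) argument.
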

\noindent
Our simple proof combines a relation between $\Psi(\hq)$ and HCIZ integrals of random matrix theory, proven in \cite{pourkamali2023matrix} (without any assumptions), and fundamental results on the large deviations of the Dyson Brownian motion \citep{guionnet2002large}: we give details in Appendix~\ref{subsec_app:proof_matrix_denoising}.
As a final remark, we notice that a recent conjecture\footnote{We mention here the ``strong'' conjecture of \cite{semerjian2024matrix}. A weaker form of this conjecture is the universality of the best low-degree polynomial estimator for any i.i.d.\ prior.} of~\cite{semerjian2024matrix} states that the free entropy of matrix denoising of $\bS^\star = (1/m) \sum_{k=1}^m \bw_k^\star (\bw_k^\star)^\T$ remains the same if one considers any i.i.d.\ prior for $\bw^\star_k$, under the matching of its first two moments with the Gaussian and the existence of all other moments. 
While the validity of this conjecture is subject to debate (see Section~VII of \cite{semerjian2024matrix}, and the findings of \cite{camilli2023matrix,camilli2024decimation}), in the present model it would imply universality of the generalization error given by Claim~\ref{claim:free_entropy} for any such teacher weight distribution.

\paragraph{The second part of Claim~\ref{claim:free_entropy} --}
We briefly discuss the second part of Claim~\ref{claim:free_entropy}, concerning the large $d$ limit of $\EE \, \tr[(\bS^\star - \hbS^\BO)^2]$.
The fact that the maximizer of eq.~\eqref{eq:fe_limit} is unique for almost all values of $\alpha$ can be seen by simple convexity arguments, see Appendix~\ref{subsec_app:unique_q}.
The relationship of $q^\star$ with the asymptotic MMSE on the estimation of $\bS^\star$ is a classical consequence of the I-MMSE theorem in generalized linear models of which eq.~\eqref{eq:tildey} is an instance, see e.g.~\cite{barbier2019optimal} and Section~D.5 of~\cite{maillard2020phase}.

\section{Discussion of the main results}\label{sec:discussion}
\subsection{Analysis of the Bayes-optimal estimator}

\noindent
\textbf{The noiseless case and the perfect recovery transition --}
We start by discussing the noiseless case ($\Delta=0$), which is described by the phase diagram in Fig.~\ref{fig:theory}. Since there is no noise in the target function, we expect a sharp transition to zero MMSE at a critical sample complexity $\alpha_\PR$. 
We analytically show in Appendix~\ref{subsec_app:pr_noiseless} from eq.~\eqref{eq:qhat_main} that $\alpha_\PR$ is given by the expression of eq.~\eqref{eq:alphaPR}, and discuss how it is related to a naive counting argument of the ``degrees of freedom'' of the target function.
This transition was known for $\kappa \geq 1$ where the problem is convex, where~\cite{gamarnik2019stationary} shows that there is perfect recovery as soon as $\alpha>1/2$. 
For all values of $\kappa$ we see the MMSE is a smooth curve going continuously from $1$ at $\alpha=0$ to $0$ at $\alpha_\PR$. We derived the slope of the curve at $\alpha_\PR$ to be (see Appendix~\ref{subsec_app:derivative_MMSE_PR}) 
\begin{equation*} 
       \frac{\partial\rm MMSE}{\partial \alpha}\Big|_{\alpha_{\rm PR}} =
       \begin{dcases}
        -2 - \frac{4}{\kappa} + \frac{12}{1+\kappa} &\textrm{ if } \kappa \leq 1, \\ 
        - 2 + \frac{2}{\kappa} &\textrm{ if } \kappa \geq 1.
       \end{dcases}
\end{equation*}
It is interesting to observe that the convexity of the curve changes. While we are observing concave dependence on $\alpha$ for small $\kappa$ it becomes convex when $\kappa$ increases and $\alpha$ is close to $\alpha_\PR$. 
We also note that the smooth limit ${\rm MMSE} \to 1$ as $\alpha \to 0$ supports the result of~\cite{Cui_extensive_width} about a quadratic number of samples being needed to learn better than linear regression. 

\myskip 
\textbf{Noisy setting --}
We also evaluated the MMSE in the presence of noise, where we observed it to decrease smoothly as $\alpha$ increases with no particular phase transition. We show an example of the theoretical prediction for the MMSE in this case in Fig.~\ref{fig:GD} right. As expected, in the presence of noise, it decreases monotonically and smoothly, and goes to zero as $\alpha \!\to\! \infty$. 

\myskip
\textbf{The small $\kappa$ limit --}
We consider here the limit $\kappa \to 0$, i.e.\ the limit of small (but still extensively large) hidden layer, 
and compute the limit of the MMSE curves shown in Fig.~\ref{fig:theory}.
Since in the noiseless setting we have $\alpha_{\PR} = \kappa + \mcO(\kappa^2)$ (cf.\ eq.~\eqref{eq:alphaPR}), we will work in the rescaled regime $\alpha = \talpha \kappa$, with $\talpha$ remaining finite as $\kappa \downarrow 0$.
By analyzing eq.~\eqref{eq:qhat_main} in this regime (details are given in Appendix~\ref{subsec_app:small_kappa}), 
we reach that the MMSE satisfies, as $\kappa \to 0$:
\begin{align}\label{eq:small_kappa_MMSE}
    \MMSE &= \begin{dcases}
        1 & \textrm{ if } \talpha \leq \frac{1+\Delta(2+\Delta)}{2} , \\
        - \Delta(2+\Delta) + 2 \talpha\left[1-\talpha + \sqrt{(1-\talpha)^2 + \Delta(2+\Delta)}\right] & \textrm{ if } \talpha \geq \frac{1+\Delta(2+\Delta)}{2}.
    \end{dcases}
\end{align}
In particular, in the noiseless case ($\Delta = 0$), we have:
\begin{align}\label{eq:small_kappa_MMSE_noiseless}
    \MMSE &= \begin{dcases}
        1 & \textrm{ if } \talpha \leq \frac{1}{2} , \\
        4 \talpha(1-\talpha) & \textrm{ if } \talpha \geq \frac{1}{2},
    \end{dcases}
\end{align}
and we reach perfect recovery for $\talpha = 1$.
This limit is illustrated in Fig.~\ref{fig:AMP} (left).

\myskip 
\textbf{The small $\kappa$ limit from a large but finite hidden layer --}
Remarkably, eq.~\eqref{eq:small_kappa_MMSE_noiseless} can be computed as well by taking the limit $m \to \infty$ when assuming that $m = \mcO(1)$ as $d \to \infty$, a setting which was studied extensively in the literature (see \cite{aubin2019committee} and references therein). We detail this computation in Appendix~\ref{subsec_app:large_finite_m}.

\myskip
\textbf{The large $\kappa$ limit --}
Conversely, in the limit $\kappa \to \infty$, we can expand eq.~\eqref{eq:qhat_main} as well, and we detail this derivation in Appendix~\ref{subsec_app:large_kappa}.
In the noiseless case, we reach that, for any fixed $\alpha > 0$,
$\MMSE \to_{\kappa \to \infty} \max(1-2\alpha,0)$,
coherently with the behavior shown in Fig.~\ref{fig:theory}.

\subsection{\gamp algorithm reaching the optimal error}

\begin{figure}[t]
    \centering
    \hspace*{-1.cm}
    \includegraphics[width=\textwidth]{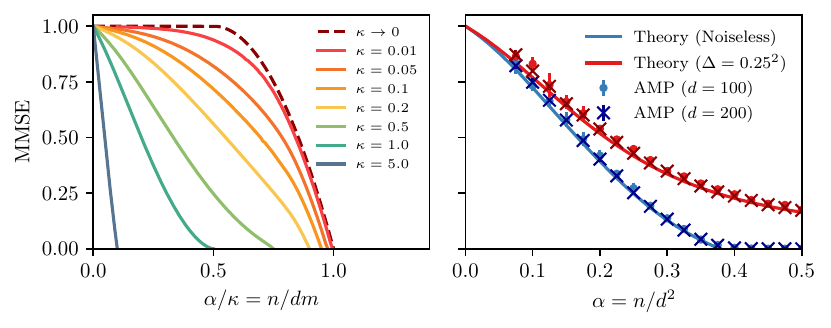}
    \caption{Left: Behavior of the asymptotic MMSE in the noiseless ($\Delta=0$) case as $\kappa$ gets increasingly small. The continuous lines are given by eq.~\eqref{eq:MMSE_qhat}, which we compare with the asymptotic $\kappa\to 0$ curve obtained by eq.~\eqref{eq:small_kappa_MMSE_noiseless}. 
    We emphasize that the horizontal axis is $\alpha/\kappa$, which remains of order $\Theta(1)$ as $\kappa \to 0$: it corresponds to a number of samples $n$ of the same order as the number of parameters $dm$.
    Right: Comparison of the performance of \gamp with the asymptotic MMSE~\eqref{eq:MMSE_qhat} both in the noiseless ($\Delta = 0$) and in a noisy ($\sqrt{\Delta} = 0.25$) case, with $\kappa = 0.5$. Each dot is the average over $8$ runs of \gamp at a moderate size of either $d=100$ (circle dots) or $d=200$ (crosses). The error bars are the standard deviations of the MSE.}
    \label{fig:AMP}
\end{figure}

In Fig.~\ref{fig:AMP} (right) we compare the asymptotic theoretical result for the Bayes-optimal error with the performance of the \gamp algorithm for $d=100$ and $d=200$, in both the noiseless (blue) and noisy (red) cases. We observe that even for such moderate sizes the agreement between the algorithmic performance and the theory is excellent. 

\myskip
We also stress here that in all the cases we evaluated, the state evolution of the \gamp converges to the fixed point that corresponds to the Bayes-optimal performance. This means that the Bayes-optimal error discussed above is reachable efficiently with the \gamp algorithm. In particular, unlike in the canonical phase retrieval problems (i.e.\ when $m=1$) \citep{barbier2019optimal}, we did not identify a computational-to-statistical gap when learning this extensive-width quadratic-activation neural network. 

\subsection{Comparison to the ERM estimator obtained by gradient descent}

\begin{figure}[t]
    \centering
    \hspace*{-1.cm}
    \includegraphics[width=\textwidth]{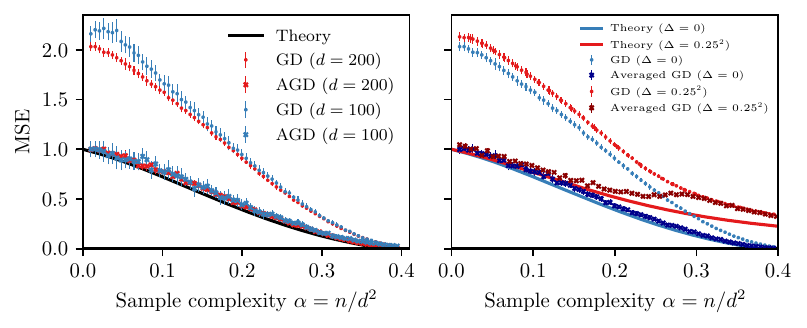}
    \caption{Mean squared error (MSE) as a function of the sample complexity $\alpha$ for $\kappa\!=\!1/2$. Dots are simulations using GD with a single initialization averaged over $32$ realizations of the dataset, crosses are averages over $64$ initializations with $2$ realizations of the dataset. The continuous lines are the asymptotic MMSE given by \eqref{eq:MMSE_qhat}. Left: noiseless $\Delta=0$ case. The colors indicate the size $d$. We can see how AGD appears to be well described by the theoretical MMSE. We used the learning rates $0.2$ for $d\!=\!200$ and $0.07$ for $d\!=\!100$. Right: Comparison of GD between the noisy $\sqrt{\Delta} \!=\! 0.25$ case ({\color{red} red}) and noiseless $\Delta \!=\! 0$ case ({\color{blue} blue}). Adding noise makes AGD worse than the MMSE, and for sample complexity $\alpha \!\gtrsim\! 0.3$, all the initializations of GD converge to the same point, making the GD and AGD curves collapse.}
    \label{fig:GD}
\end{figure}

\noindent
The results discussed so far concern the Bayes-optimal MMSE, which requires evaluating the marginals of the posterior distribution.
We now investigate numerically the performance of empirical risk minimization via gradient descent, which is the standard method of machine learning. It would be typical to expect a gradient based approach to be suboptimal, as the problem is non-convex for $\kappa<1$.
In Fig.~\ref{fig:GD}, we compare (a) the MSE $\kappa \tr[(\bS^\star - \hat{\bS}_\GD)^2]$ reached by gradient descent (GD) minimizing the loss \eqref{eq:loss} from random initialization, (b) the MSE reached by GD averaged over initializations, and (c) the MMSE derived from the theory. 
All these experiments are accessible in 

\myskip
In the noiseless case, $\Delta=0$, we very remarkably observe that the MSE reached by gradient descent is very close to exactly twice larger than the asymptotic MMSE. Such a relation is known in high-dimensional generalized linear regression to hold between the Gibbs estimator, where test error is evaluated for weights that are sampled uniformly from the posterior, and the Bayes-optimal estimator that averages over the weights sampled from the posterior~\citep{engel2001statistical,barbier2019optimal}. In general, there is no reason why the randomly initialized gradient descent should be able to sample the posterior measure. We nevertheless evaluate the average over the initialization of gradient descent and observe that, indeed, the MSE reached this way is consistent with the MMSE. This leads us to conjecture that in the noiseless one-hidden layer neural network with quadratic activation and a target function matching this architecture, randomly-initialized gradient descent samples the posterior despite the problem being non-convex, and hence its average achieves the MMSE.

\myskip
Let us offer a heuristic argument for this perhaps intriguing phenomenon. It starts with the equivalent of the representer theorem: one can write $\bS$ in the span of $\{\bx_i\bx_i^T\}_{i=1}^n$, plus a matrix in the orthogonal space, that is
$
    \bS = \sum_{i=1}^n \beta_i \bx_i\bx_i^T + \bZ\,.
$
This means that gradient descent reaches one solution of the minimization with one additional spurious component. The Bayes optimal procedure would be to set this spurious reminder to zero since the data are not informative in this direction. It is reasonable (although non-trivial) to assume that this is what is achieved by averaging over initialization.

\myskip
When comparing the MMSE to the performance of GD in the noisy setting, we observe a gap between the MMSE and the performance of gradient descent, even averaged over initialization or regularized (as shown in Appendix \ref{sec_app:additional_GD}, Figure \ref{fig:l2_reg_pos} left).
In particular, for the noisy case, we see that for small sample complexity, the averaged GD is close to matching the MMSE, but as the number of available samples increases, the error of the averaged and non-averaged versions of GD coincide.
This is a sign of the trivialization of the landscape, in the sense that GD converges to the same function independently of the initialization: it can be quantified using the variances of the function reached by GD.
This is investigated further in Appendix~\ref{sec_app:additional_GD}, together with the effect of $\ell_2$ regularization. 
We can characterize empirically another phase transition: for a sample complexity larger than $\alpha_T(\Delta)$, GD converges to the same function independently of the initialization. 
In the noiseless $\Delta=0$ case, this is simply the perfect recovery transition, and $\alpha_{\PR}=\alpha_T(\Delta=0)$, while increasing the noise intensity makes the threshold lower until it reaches a plateau, which for $\kappa=0.5$ is at $\alpha_T(\Delta \to \infty) \approx 0.2$. 
We display this numerical finding in Figure~\ref{fig:l2_reg_pos} (right) in Appendix~\ref{sec_app:additional_GD}. A tight analytical study of the landscape-trivialization threshold $\alpha_T(\Delta)$ as a function of the noise variance $\Delta$ is left for future work.

\section{Conclusion and limitations}\label{sec:cclion_limitations}
In this work, we provide an explicit formula for the generalization MMSE when learning a target function in the form of a one-hidden layer neural network with quadratic activation in the limit of large dimensions, extensive width and a quadratic number of samples.  The techniques deployed to obtain this result are novel and, we believe, of independent interest. There are many natural extensions of the present works. While we presented, additionally to the replica derivation, a mathematically sound derivation, a fully rigorous treatment, a technical and lengthy task, is left for an extended version of this work.
We analyzed the Bayes-optimal MMSE, presented the \gamp algorithm that is able to reach it in polynomial time, and compared it to the performance of gradient descent numerically.
We leave for future work the theoretical analysis of the properties of gradient descent that we discovered numerically. Of particular interest is the role played by the implicit nuclear norm regularization when starting from small initialization, as discussed for the matrix sensing problem e.g.\ in \cite{gunasekar2017implicit,litowards,stoger2021small}. 
Finally, we also presented the natural extension of our results and techniques to the case of a learnable second layer.

\myskip
The main limitations of our setting are its restriction to Gaussian input data, random i.i.d.\ weights of the target/teacher neural network, quadratic activation, and a single hidden layer.
Going beyond any of these limitations would be a compelling direction of research, in particular for more generic activation and multiple layers, and we hope our work will spark interest in these directions.

\myskip 
\textbf{Generic activations --}
We end our work by a brief discussion on such an extension, namely the case of a more generic activation function.
While our derivation (cf.\ Section~\ref{sec:derivation_results}) heavily relies on the non-linearity being quadratic, a first natural extension would be to consider \emph{polynomial} activations, with an output generated as (assuming a noiseless setting):
\begin{align*}
    y_i &= \frac{1}{m} \sum_{k=1}^m \left(\frac{(\bw_k^\star)^\T \bx_i}{\sqrt{d}}\right)^p,
\end{align*}
for some integer $p \geq 3$. One could also ``linearize'' this model, by writing it as $y_i = \langle T^\star, X_i \rangle$, in which $T^\star, X_i$ are now $p$-tensors, defined as 
\begin{align*}
    \begin{dcases}
        T^\star &\coloneqq \frac{1}{m} \sum_{k=1}^m (\bw_k^\star)^{\otimes p}, \\
        X_i &\coloneqq \frac{1}{d^{p/2}} \bx_i^{\otimes p}.
    \end{dcases}
\end{align*}
However, two main challenges arise when carrying out the program of Section~\ref{sec:derivation_results} in this ``tensor'' model: 
\begin{itemize}
    \item[$(i)$] First, determining whether the universality Conjecture~\ref{conj:universality} holds for these models (and if yes, in which scaling of the number of samples $n$ with $d$) is a challenging open question that falls outside the scope of our results as well as of previous works on free entropy universality \citep{hu2022universality,montanari2022universality,dandi2024universality,maillard2023exact}.
    \item[$(ii)$] Secondly, the generalized form of Conjecture~\ref{conj:reduction_denoising} would involve the free entropy of a \emph{tensor denoising} problem. While a rich literature has studied the fundamental limits of denoising low-rank tensors (see~\cite{lesieur2017statistical,arous2019landscape,ros2019complex,perry2020statistical,gamarnik2022disordered} and references therein), here $T^\star$ has rank $m = \mcO(d)$, and the optimal denoising of a large-rank tensor is, as far as we know, a completely open question.
\end{itemize}
These two challenges form the basis of an exciting but very challenging research program, which we leave for future work.
Provided such a program could be carried out for any polynomial activation, one might then hope to analyze generic activation functions, such as the ReLU or sigmoid, e.g.\ by decomposition over a basis of orthogonal polynomials (such as the Hermite basis), see~\cite{arous2021online,abbe2023sgd} for examples of such analyses in the case $m = \mcO(1)$.

\subsection*{Acknowledgements}
We want to thank Giulio Biroli, Francis Bach, Guilhem Semerjian, Pierfrancesco Urbani, Vittorio Erba, Jason Lee and Afonso Bandeira for insightful discussions about this work.  This work was supported by the Swiss National Science Foundation under grants SNSF SMArtNet (grant number 212049) and SNSF OperaGOST (grant number 200390).

\bibliography{refs}

\newpage

\appendix
\section{Additional definitions and conventions}\label{sec_app:background}
\textbf{Convention --} Throughout this manuscript, we use $\EE_X$ to denote the expectation solely over the random variable $X$. We denote $\mcM_1^+(\bbR)$ the set of real probability distributions.

\myskip
\textbf{Random matrix ensembles --}
For any $d \geq 1$, we define two standard random matrix distributions over the space of symmetric $d \times d$ real matrices: 
\begin{itemize}[leftmargin=*]
    \item A matrix $\bxi$ is distributed according to the $\GOE(d)$ distribution (standing for \emph{Gaussian Orthogonal Ensemble}) if $\xi_{ij} \iid \mcN(0, [1+\delta_{ij}]/d)$ for any $1 \leq i \leq j \leq d$.
    \item For any $m \geq 1$, a matrix $\bS$ is distributed according to the
    Wishart distribution $\mcW_{m,d}$ 
    if $\bS = \bW^\T \bW / m$, where $\bW \in \bbR^{m \times d}$ with $W_{ki} \iid \mcN(0,1)$ for $k \in [m], i \in [d]$.
\end{itemize}
For a symmetric matrix $\bM$ with eigenvalues $(\lambda_i)_{i=1}^d$, we denote $\mu_\bM \coloneqq (1/d)\sum_{i=1}^d \delta_{\lambda_i}$ its empirical eigenvalue distribution (ESD).
It is well known that for $d \to \infty$ the ESD of $\GOE(d)$ and $\mcW_{m,d}$ matrices converge to (respectively) the Wigner semicircle and the Marchenko-Pastur density. 
\begin{theorem}\label{thm:limit_esd}[\cite{wigner1955characteristic,marchenko1967distribution}]
Let $m = \kappa d$ for $\kappa > 0$, and let $\bxi \sim \GOE(d)$ and $\bS \sim \mcW_{m,d}$. 
Then, as $d \to \infty$, the ESDs of $\bxi$ and $\bS$ almost surely converge (in the sense of weak convergence) to the following probability distributions (respectively).
\begin{itemize}[leftmargin=*]
    \item The semicircle law, with density 
    \begin{align}\label{eq:def_sci}
        \sigma_\sci(x) &= \frac{\sqrt{4-x^2}}{\sqrt{2\pi}} \indi\{|x| \leq 2\}.
    \end{align}
    We denote $\sigma_{\sci, \sqrt{t}}(x) \coloneqq t^{-1/2} \sigma_{\sci}(x/\sqrt{t})$ the scaled semicircle law with variance $t$.
    \item The Marchenko-Pastur law, with density 
    \begin{align}\label{eq:def_MP}
        \mu_{\MP,\kappa}(x) &= 
        \begin{dcases}
            (1-\kappa) \delta(x) + \frac{\kappa \sqrt{(\lambda_+ - x)(x - \lambda_-)}}{2 \pi x} &\textrm{ if } \kappa \leq 1, \\
            \frac{\kappa \sqrt{(\lambda_+ - x)(x - \lambda_-)}}{2 \pi x} &\textrm{ if } \kappa \geq 1.
        \end{dcases}
    \end{align}
    Here $\lambda_{\pm} \coloneqq (1 \pm \kappa^{-1/2})^2$.
\end{itemize}
\end{theorem}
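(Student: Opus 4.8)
This is the classical Wigner--Marchenko--Pastur theorem, so the plan is to recall the standard proof by the \emph{method of moments}, supplemented by a concentration step that yields the almost-sure statement. The starting observation is that $\sigma_\sci$ and $\mu_{\MP,\kappa}$ are compactly supported, hence determined by their moment sequences (Carleman's condition). It therefore suffices to show that, for every fixed integer $k\geq 1$, one has $\tr[\bxi^k]\to m_k(\sigma_\sci)$ and $\tr[\bS^k]\to m_k(\mu_{\MP,\kappa})$ almost surely, where $m_k(\mu)\coloneqq\int x^k\,\mu(\rd x)$: intersecting these events over $k\geq 1$ gives almost-sure convergence of all moments, and moment-determinacy of the limits upgrades this to almost-sure weak convergence of the empirical spectral distributions (with tightness following from the almost-sure boundedness of the second moments).

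\textbf{Step 1: expected moments.} For $\bxi\sim\GOE(d)$, expand $\EE\tr[\bxi^k]=d^{-1}\sum_{i_1,\dots,i_k\in[d]}\EE[\xi_{i_1i_2}\xi_{i_2i_3}\cdots\xi_{i_ki_1}]$ and apply Wick's theorem: the expectation is a sum over pairings of the $k$ cyclically arranged factors, each pairing contributing $d^{-1}$ times a power of $d$ counting the free indices allowed by the identifications it forces. The classical genus bound shows that a pairing contributes $1+o(1)$ when it is non-crossing and $o(1)$ otherwise, and that odd $k$ contributes $0$; since the number of non-crossing pairings of $k$ points is the Catalan number $C_{k/2}$, we obtain $\EE\tr[\bxi^k]\to C_{k/2}\,\indi\{k\text{ even}\}=m_k(\sigma_\sci)$. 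For $\bS=\bW^\T\bW/m$ one similarly expands $\EE\tr[\bS^k]$ as a normalized sum of closed walks that alternate between ``row'' labels in $[m]$ and ``column'' labels in $[d]$; the same non-crossing/tree dominance leaves only the non-crossing walks, and counting them weighted by the appropriate powers of $d/m=\kappa^{-1}$ reproduces the Narayana generating function, i.e.\ the moments $m_k(\mu_{\MP,\kappa})$. Summing this generating function shows that the Stieltjes transform of the limit solves a quadratic self-consistent equation whose solution is precisely the density of eq.~\eqref{eq:def_MP}, including the atom $(1-\kappa)\delta_0$ when $\kappa<1$ (reflecting $\mathrm{rank}(\bS)=\min(m,d)$) and the edges $\lambda_\pm=(1\pm\kappa^{-1/2})^2$.

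\textbf{Step 2: concentration.} It remains to control the fluctuations of $\tr[\bxi^k]$ and $\tr[\bS^k]$. One clean route is a direct second Wick computation showing $\mathrm{Var}(\tr[\bxi^k])=O(d^{-2})$ (the would-be $O(d^{-1})$ contribution of crossing pairings cancels), and likewise $\mathrm{Var}(\tr[\bS^k])=O(d^{-2})$. Alternatively, on the overwhelming-probability event that the operator norm is bounded (trivial for $\bxi$, and $\|\bS\|_{\mathrm{op}}\leq(1+\kappa^{-1/2})^2+1$ for $\bS$ by the Bai--Yin law), the map from the Gaussian entries to a version of $\tr[\bM^k]$ smoothly cut off outside that region is globally Lipschitz with constant $O_k(d^{-1})$ for the Frobenius metric, so Gaussian concentration of measure gives fluctuations of order $d^{-1}$. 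Either way, Chebyshev's inequality and the Borel--Cantelli lemma promote the convergence of expectations from Step 1 to almost-sure convergence, which by the reduction above finishes the proof.

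\textbf{Main obstacle.} The only genuinely delicate ingredient is the combinatorics of Step 1: establishing rigorously that, after normalization, only non-crossing pairings (for the semicircle) and non-crossing bipartite walks (for Marchenko--Pastur) survive the $d\to\infty$ limit, and then matching the resulting generating function to the explicit densities while correctly tracking the $\kappa$-dependence, so as to recover both regimes $\kappa<1$ and $\kappa\geq 1$, the atom at $0$, and the support endpoints $\lambda_\pm$. Everything else --- moment-determinacy, concentration, Borel--Cantelli --- is routine. An entirely parallel argument replaces the method of moments by the resolvent method: a Schur-complement identity shows $\EE\tr[(\bxi-z)^{-1}]=s(z)+o(1)$ with $s(z)^2+zs(z)+1=0$, and analogously the self-consistent equation for $\bS$, after which the same concentration step applies.
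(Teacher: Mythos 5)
Your proposal is a correct outline of the standard moment-method proof; the paper itself does not prove this statement but simply cites it as the classical Wigner and Marchenko--Pastur theorems, so there is no in-paper argument to compare against. One minor point worth flagging: the moments $C_{k/2}$ you compute correspond to the semicircle density $\frac{1}{2\pi}\sqrt{4-x^2}\,\indi\{|x|\leq 2\}$, whereas eq.~\eqref{eq:def_sci} as printed has $\sqrt{2\pi}$ in the denominator and does not integrate to one --- a typo in the statement rather than a gap in your argument.
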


\myskip
\textbf{Transforms of probability distributions --}
For any real probability measure $\mu$, we define its \emph{Stieltjes transform} $g_\mu(z) \coloneqq \EE_\mu[1/(X - z)]$ for $z \in \bbC$. If $\bbC_+ \coloneqq \{z \in \bbC \, : \, \mathrm{Im}(z) > 0\}$, then 
$g_\mu(z) \in \bbC_+$ for all $z \in \bbC_+$. Moreover, we have the Stieltjes-Perron inversion formula: 
\begin{theorem}[Stieltjes-Perron inversion formula]\label{thm:stieltjes_perron}
    \noindent
    For all $a < b$, we have
    \begin{align*}
        \mu((a,b)) &= \lim_{\delta \downarrow 0} \lim_{\epsilon \downarrow 0} \frac{1}{2 i \pi} \int_{a+\delta}^{b-\delta} [g_\mu(x+i\epsilon) - g_\mu(x-i\epsilon)] \mathrm{d}x.
    \end{align*}
	In particular, if $\mu$ has a continuous density with respect to the Lebesgue measure then:
	\begin{align*}
		\forall x \in \bbR, \quad \frac{\mathrm{d}\mu}{\mathrm{d}x} = \lim_{\epsilon \downarrow 0} \frac{1}{\pi} \mathrm{Im} \, g_{\mu}(x + i\epsilon).
	\end{align*}
\end{theorem}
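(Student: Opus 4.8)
The plan is the classical route through the Cauchy–Poisson kernel, which makes both assertions fall out of one approximate-identity computation. First I would use that $\mu$ is a real measure to write, for $\epsilon > 0$ and $x \in \bbR$,
\begin{align*}
  g_\mu(x+i\epsilon) - g_\mu(x-i\epsilon) = \int \left[\frac{1}{t-x-i\epsilon} - \frac{1}{t-x+i\epsilon}\right]\rd\mu(t) = 2i \int \frac{\epsilon}{(t-x)^2+\epsilon^2}\,\rd\mu(t),
\end{align*}
so that $\tfrac{1}{2i\pi}[g_\mu(x+i\epsilon) - g_\mu(x-i\epsilon)] = (P_\epsilon \ast \mu)(x)$, where $P_\epsilon(u) \coloneqq \tfrac{1}{\pi}\tfrac{\epsilon}{u^2+\epsilon^2}$ is the Cauchy (Poisson) kernel, a nonnegative approximate identity with $\int P_\epsilon = 1$. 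Since $\mu$ is real one also has $g_\mu(x-i\epsilon) = \overline{g_\mu(x+i\epsilon)}$, hence this quantity equals $\tfrac{1}{\pi}\mathrm{Im}\,g_\mu(x+i\epsilon)$, which is exactly the integrand appearing in the statement.

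For the measure-valued identity, I would integrate the display above over $x \in (a+\delta, b-\delta)$ and apply Tonelli's theorem (the integrand is nonnegative), obtaining
\begin{align*}
  \int_{a+\delta}^{b-\delta} (P_\epsilon \ast \mu)(x)\,\rd x = \int \Theta_\epsilon^\delta(t)\,\rd\mu(t), \quad \Theta_\epsilon^\delta(t) \coloneqq \frac{1}{\pi}\left[\arctan\!\Big(\tfrac{b-\delta-t}{\epsilon}\Big) - \arctan\!\Big(\tfrac{a+\delta-t}{\epsilon}\Big)\right].
\end{align*}
As $\epsilon \downarrow 0$, $\Theta_\epsilon^\delta(t) \to \indi\{t \in (a+\delta, b-\delta)\} + \tfrac12\indi\{t \in \{a+\delta, b-\delta\}\}$ pointwise in $t$, while $0 \le \Theta_\epsilon^\delta \le 1$; since $\mu$ is a finite measure, dominated convergence gives
\begin{align*}
  \lim_{\epsilon\downarrow 0}\int_{a+\delta}^{b-\delta}(P_\epsilon\ast\mu)(x)\,\rd x = \mu\big((a+\delta,b-\delta)\big) + \tfrac12\mu(\{a+\delta\}) + \tfrac12\mu(\{b-\delta\}) = \tfrac12\mu\big((a+\delta,b-\delta)\big) + \tfrac12\mu\big([a+\delta,b-\delta]\big).
\end{align*}
Letting $\delta \downarrow 0$, both nested unions $\bigcup_{\delta>0}(a+\delta,b-\delta)$ and $\bigcup_{\delta>0}[a+\delta,b-\delta]$ equal $(a,b)$, so continuity from below of $\mu$ forces each of the two terms to converge to $\mu((a,b))$, which proves the inversion formula.

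For the density statement, when $\mu$ has a continuous density $\rho$ one has $(P_\epsilon\ast\mu)(x) = (P_\epsilon\ast\rho)(x)$, and I would conclude $(P_\epsilon\ast\rho)(x)\to\rho(x)$ by the usual approximate-identity argument localised at $x$: split $\rho = \rho\,\indi\{|t-x|\le 1\} + \rho\,\indi\{|t-x|>1\}$; the far part contributes at most $P_\epsilon(1) = O(\epsilon)$ uniformly (using $\int\rho \le 1$), and for the near part, continuity of $\rho$ at $x$ together with $\int P_\epsilon = 1$ gives convergence to $\rho(x)$. Combined with the identity $(P_\epsilon\ast\mu)(x) = \tfrac1\pi\mathrm{Im}\,g_\mu(x+i\epsilon)$ from the first paragraph, this yields $\tfrac{\rd\mu}{\rd x} = \lim_{\epsilon\downarrow 0}\tfrac1\pi\mathrm{Im}\,g_\mu(x+i\epsilon)$. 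The argument is essentially textbook; the only points requiring a little care — rather than a genuine obstacle — are the bookkeeping of boundary atoms in the $\delta\downarrow 0$ passage, which is handled cleanly by the symmetrised form $\tfrac12\mu((a+\delta,b-\delta)) + \tfrac12\mu([a+\delta,b-\delta])$ above, and, in the density case, the fact that a continuous density need not be globally bounded, which is why the approximate-identity estimate must be localised around $x$ rather than used in a uniform form.
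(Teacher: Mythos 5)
Your proof is correct: the Poisson-kernel identity $\tfrac{1}{2i\pi}[g_\mu(x+i\epsilon)-g_\mu(x-i\epsilon)] = (P_\epsilon\ast\mu)(x) = \tfrac1\pi\mathrm{Im}\,g_\mu(x+i\epsilon)$, the Tonelli/arctan computation with dominated convergence (correctly accounting for possible atoms at $a+\delta$ and $b-\delta$), the monotone passage $\delta\downarrow 0$, and the localised approximate-identity argument for a continuous density are all sound, and the order of limits matches the statement. The paper states this as a classical background fact (Stieltjes--Perron inversion) without giving a proof, so there is nothing to compare against; your argument is the standard textbook one and fills that role adequately.
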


\myskip
We often use the logarithmic potential function $\Sigma(\mu) \coloneqq \int \mu(\rd x) \mu(\rd y) \log |x - y|$.
We further define the $\mcR$-transform of $\mu$ as:
\begin{align}\label{eq:def_R_transform}
\mathcal{R}_\mu(s) &\coloneqq g_\mu^{-1}(-s) - \frac{1}{s}.
\end{align}
We refer to~\cite{tulino2004random} for more details on the definitions of this transform, e.g.\ concerning its complete domain of definition. Informally, the $\mcR$ transform is well-defined in a neighborhood of $0$ for all measures which have bounded support.
In particular, we have for the semicircle and the Marchenko-Pastur distributions: 
\begin{align}\label{eq:R_transform_sc_MP}
    \begin{dcases}
        \mcR_{\sigma_{\sci,\sqrt{t}}}(s) &= t s, \\
        \mcR_{\mu_{\MP,\kappa}}(s) &= \frac{\kappa}{\kappa - s}.
    \end{dcases}
\end{align}

\myskip
\textbf{Free additive convolution --}
The main interest of the $\mcR$-transform lies in its connection to the (additive) \emph{free convolution} of measures.
Informally, we can interpret the free convolution $\mu \boxplus \nu$ of two measures $\mu$ and $\nu$ as the limiting spectral measure of $\bA + \bB$, where $\bA$ and $\bB$ are symmetric $d \times d$ random matrices, with limiting spectral distributions $\mu$ and $\nu$, and which are \emph{asymptotically free}. 
While we refer to~\cite{anderson2010introduction,tulino2004random} for mathematical discussions of asymptotic freeness, we recall that in particular if $\bB$ is a $\GOE(d)$ matrix independent of $\bA$, then $\bA$ and $\bB$ are asymptotically free.
Crucially, the $\mcR$ transform is additive under free convolution (see Theorem~2.64 in~\cite{tulino2004random} e.g.): 
\begin{align}\label{eq:R_transform_convolution}
    \mcR_{\mu \boxplus \nu}(s) = \mcR_{\mu}(s) + \mcR_\nu(s).
\end{align}
Eq.~\eqref{eq:R_transform_convolution}
allows to efficiently compute the density of $\mu \boxplus \nu$ given the ones of $\mu$ and $\nu$, by relating the $\mcR$ transform to the Stieltjes transform, and then using the Stieltjes-Perron inversion theorem (Theorem~\ref{thm:stieltjes_perron}).

\section{Derivation of Claim~\ref{claim:free_entropy} from the replica method}\label{sec_app:replica}
In this section, we give a non-rigorous derivation of eq.~\eqref{eq:fe_limit} using classical methods of statistical physics. 
We start from the definition of the partition function in eq.~\eqref{eq:def_Z}. We denote $\mcD$ the standard Gaussian measure, and $\bS(\bW) = \bW^\T \bW / m$.
\begin{equation*}
   \mcZ(\bS^\star, \{\bx_i\}_{i=1}^n)  = \int_{\bbR^{m \times d}} \mcD \bW\prod_{i=1}^n P_\out\left(y_i \middle| \Tr[\bZ_i \bS(\bW)]\right).
\end{equation*}
\textbf{The replica method --}
We make use of the heuristic \emph{replica trick}~\citep{mezard1987spin}. 
Letting $\Phi_d \coloneqq (1/d^2) \EE \log \mcZ$, it consists in writing that 
$\lim_{d \to \infty} \Phi_d = \lim_{r \to 0} (\partial/\partial r) \lim_{d \to \infty} \Phi_d(r)$, with 
$\Phi_d(r) \coloneqq (1/d^2) \log \EE[\mcZ^r]$.
One then computes the $d \to \infty$ limit of $\Phi_d(r)$ for \emph{integer} $r \in \bbN$, before extending analytically the result to any $r \geq 0$.
While being non-rigorous, the replica method has achieved a great success in the study of both spin glasses and statistical learning models, and is widely conjectured to yield exact predictions.
We refer the reader to~\cite{mezard1987spin} for an introduction to the replica method in the context of the statistical physics of disordered systems, 
\cite{maillard2023injectivity,montanari2024friendly} for mathematically-friendly descriptions of the method,
and to~\cite{mezard2009information,zdeborova2016statistical,gabrie2020mean} for some of its applications in the context of theoretical computer science, 
high-dimensional statistics, and machine learning.

\myskip
\textbf{The replicated free entropy --}
We now compute the ``replicated free entropy'' $\Phi_d(r)$, for $r \in \bbN$. 
Thanks to Bayes-optimality, we can write it as an average over $r+1$ \emph{replicas} of the system, writing $\bS^\star$ as the replica of index $0$. 
We write $\bS^a \coloneqq \bS(\bW^a)$ to simplify notations.
We reach: 
\begin{align}\label{eq:Phir_1}
    \Phi_d(r) &= \frac{1}{d^2} \log \int \prod_{a=0}^r \mcD \bW^a 
    \left[\int \rd y \, \EE_{\bZ} \prod_{a=0}^r P_\out(y | \Tr[\bS^a\bZ])\right]^n.
\end{align}
For a fixed set of matrices $\{\bS^a\}_{a=0}^r$, by the central limit theorem the law of the variables $z^a \coloneqq \Tr[\bS^a \bZ]$ approach, as $d \to \infty$, 
a correlated Gaussian distribution, with mean $\EE[z^a] = 0$, and covariance $\EE[z^a z^b] = \EE_\bZ[\Tr[\bS^a \bZ] \Tr[\bS^b \bZ]] = 2 \tr(\bS^a \bS^b)$, as is easily checked from the fact that $\bZ \deq (\bx \bx^\T - \Id_d)/\sqrt{d}$, with $\bx \sim \mcN(0, \Id_d)$.
Since $n = \Theta(d^2)$, the leading order of the term $\int \rd y \, \EE_{\bZ} \prod_{a=0}^r P_\out(y | \Tr[\bS^a\bZ])$ will be the only one entering the leading order of $\Phi_d(r)$. 
This means that we have, denoting the \emph{overlap matrix} 
\begin{equation}\label{eq:def_overlap}
   Q_{ab} \coloneqq \tr(\bS^a \bS^b),
\end{equation}
that 
\begin{align}\label{eq:Phir_2}
   \nonumber
    \Phi_d(r) &= \frac{1}{d^2} \log \int \prod_{a=0}^r \mcD \bW^a 
    \left[\int_{\bbR \times \bbR^{r+1}} \frac{\rd y \, \rd \bz \, e^{-\frac{1}{4} \bz^\T \bQ^{-1} \bz}}{(4\pi)^{r+1/2} \sqrt{\det \bQ}}  \prod_{a=0}^r P_\out(y | z^a)\right]^n + \smallO_d(1), \\
   \nonumber
    &= \frac{1}{d^2} \log  \int \rd \bQ \int \prod_{a=0}^r \mcD \bW^a 
    \left[\int_{\bbR \times \bbR^{r+1}} \rd y \, \rd \bz \frac{e^{-\frac{1}{4} \bz^\T \bQ^{-1} \bz}}{(4\pi)^{r+1/2} \sqrt{\det \bQ}}  \prod_{a=0}^r P_\out(y | z^a)\right]^n \\ 
    & \hspace{1cm} \times \prod_{a \leq b} \delta(d^2 Q_{ab} - d \tr(\bS^a \bS^b)) + \smallO_d(1).
\end{align}
Notice that the CLT-based argument above is made formal in Conjecture~\ref{conj:universality}, and implies the universality of $\Phi_d$ under the replacement of $\bZ_i$ by Gaussian $\GOE$ matrices $\bG_i$.
Since $\bQ \in \bbR^{(r+1)\times (r+1)}$ is of finite size as $d \to \infty$, we can perform the Laplace method over $\bQ$ in eq.~\eqref{eq:Phir_2}, and we reach 
(omitting $\smallO_d(1)$ terms as $d \to \infty$, and recall $n/d^2 \to \alpha$):
\begin{align}\label{eq:Phir_3}
    \Phi_d(r) &= \sup_{\bQ \in \mcS_{r+1}^+} \left[J(\bQ) + \alpha J_\out(\bQ)\right],
\end{align}
where $\mcS_{r+1}^+$ is the set of positive semi-definite symmetric matrices of size $r+1$, and:
\begin{align}\label{eq:def_J_Jout}
   \begin{dcases}
    J(\bQ) &\coloneqq \frac{1}{d^2} \log \int \prod_{a=0}^r \mcD \bW^a \, \prod_{a \leq b} \delta(d^2 Q_{ab} - d \Tr[\bS^a \bS^b]), \\ 
    J_\out(\bQ) &\coloneqq \log \int_{\bbR \times \bbR^{r+1}} \rd y \, \rd \bz \frac{e^{-\frac{1}{4} \bz^\T \bQ^{-1} \bz}}{(4\pi)^{r+1/2} \sqrt{\det \bQ}}  \prod_{a=0}^r P_\out(y | z^a) .
   \end{dcases}
\end{align}
Notice that we can rewrite $J(\bQ)$ using Lagrange multipliers $\hbQ \in \mcS_{r+1}$ (or equivalently using the Fourier transform of the delta distribution, and the saddle point method on 
the Fourier parameters) as:
\begin{align}\label{eq:J_with_hQ}
    J(\bQ) &= \inf_{\hbQ \in \mcS_{r+1}} \left[\frac{1}{4} \Tr[\bQ \hbQ] + \frac{1}{d^2} \log 
    \int \prod_{a=0}^r \mcD \bW^a \, e^{- \frac{d}{4} \sum_{a,b} \hQ_{ab} \Tr[\bS^a \bS^b]}\right].
\end{align}

\myskip
\textbf{The replica-symmetric ansatz --}
An important assumption we make now is that there is a permutation symmetry between the different replicas in eq.~\eqref{eq:Phir_3}, 
and we assume that this symmetry is not broken by the maximizer $\bQ$. This assumption is usually called \emph{replica symmetry}, and is known to hold in generic statistical learning problems when they are in the Bayes-optimal setting~\citep{zdeborova2016statistical,barbier2019optimal}.
Formally, we assume that the supremum over $\bQ$ in eq.~\eqref{eq:Phir_3} (and the infimum over $\hbQ$ in eq.~\eqref{eq:J_with_hQ}) are reached in matrices such that, for all 
$a,b \in \{0, \cdots, r\}$ with $a \neq b$: 
\begin{align}\label{eq:RS_ansatz}
   \begin{dcases}
       Q_{aa} = Q, & \hQ_{ab} = \hQ, \\
       Q_{ab} = q, & \hQ_{ab} = -\hq,
   \end{dcases}
\end{align}
with $0 \leq q \leq Q$, and $\hQ,\hq \geq 0$.

\myskip
\textbf{The term $J_\out(\bQ)$ --}
Under the ansatz of eq.~\eqref{eq:RS_ansatz}, it is a classical computation~\citep{zdeborova2016statistical} 
to reach:
\begin{align}\label{eq:Jout_r}
   J_\out(\bQ) = \log \int_{\bbR^2} \rd y \, \mcD \xi \, \left\{\int \frac{\rd z}{\sqrt{4\pi(Q-q)}} \exp\left[-\frac{(z-\sqrt{2q}\xi)^2}{4(Q-q)}\right] P_\out(y|z)\right\}^{r+1}.
\end{align}
\textbf{The term $J(\bQ)$ --}
Using the replica-symmetric ansatz of eq.~\eqref{eq:RS_ansatz} in eq.~\eqref{eq:J_with_hQ}, we get:
\begin{align*}
    J(\bQ) &= \inf_{\hQ, \hq} \left[\frac{(r+1)(Q\hQ - r q \hq)}{4} + \frac{1}{d^2} \log 
     \int \prod_{a=0}^r \mcD \bW^a \, e^{ - \frac{d(\hQ + \hq)}{4} \sum_{a} \Tr[(\bS^a)^2] + \frac{d\hq}{4} \Tr\left[\left(\sum_a \bS^a\right)^2\right]}\right].
\end{align*}
We now use the following Gaussian integration identity, for any symmetric matrix $\bM$:
\begin{align*}
    \EE_{\bxi \sim \GOE(d)} \left[e^{\frac{d}{2} \Tr[\bM \bxi]}\right] &= e^{\frac{d}{4} \Tr[\bM^2]}.
\end{align*}
This allows to reach the following expression, which is analytic in $r$:
\begin{align}\label{eq:J_r}
   \nonumber
    J(\bQ) &= \inf_{\hQ, \hq} \left[\frac{(r+1)}{4} Q \hQ - \frac{r(r+1)}{4} q \hq \right.\\ 
    &\left. + \frac{1}{d^2} \log \EE_{\bxi} \left\{\left(\int \mcD \bW \, e^{ - \frac{d (\hQ + \hq)}{4} \Tr[\bS^2] + \frac{d \sqrt{\hq}}{2} \Tr[\bS \bxi]}\right)^{r+1}\right\} \right].
\end{align}
Recall that here $\bS = \bS(\bW) = \bW^\T \bW / m$.

\myskip
\textbf{The limit $r \to 0$ --}
From eqs.~\eqref{eq:Phir_3}, \eqref{eq:Jout_r} and \eqref{eq:J_r}, we have:
\begin{align}
    \Phi_d(r = 0) &= \sup_{Q \geq 0} \inf_{\hQ \in \bbR} \left[\frac{1}{4} Q \hQ +
    \frac{1}{d^2} \log \int \mcD \bW e^{ - \frac{d \hQ}{4} \Tr[\bS^2]}\right].
\end{align}
This implies that $\hQ = 0$ and $Q = Q_0 = \lim_{d \to \infty} \EE_{\bS \sim \mcW_{m,d}}\tr[\bS^2] = 1 + \kappa^{-1}$ (recall $m/d \to \kappa$), and we correctly recover that $\Phi_d(r=0) = 0$.
Taking now the derivative with respect to $r$, followed by the $r \to 0$ limit, yields:
\begin{align}
    \label{eq:phi_1}
    \lim_{d \to \infty} \Phi_d &= \sup_{0 \leq q \leq Q_0} \inf_{\hq \geq 0} \left[ 
    - \frac{q \hq}{4} + \alpha \int_{\bbR^2} \rd y \, \mcD \xi J_q(y, \xi) \log J_q(y, \xi) \right. \\ 
    \nonumber
    &\left. \hspace{3cm} + \lim_{d \to \infty}\frac{1}{d^2} \EE_{\bxi \sim \GOE(d)} \left[H_{\hq}(\bxi) \log H_{\hq}(\bxi)\right] \right], \\ 
    \label{eq:def_Hxi}
    H_{\hq}(\bxi) &\coloneqq \int_{\bbR^{m \times d}} \mcD \bW \, e^{ - \frac{d \hq}{4} \Tr[\bS^2] + \frac{d \sqrt{\hq}}{2} \Tr[\bS \bxi]}, \\ 
    \label{eq:def_Zout}
    J_q(y, \xi) &\coloneqq \int \frac{\rd z}{\sqrt{4\pi(Q_0-q)}} \exp\left[-\frac{(z-\sqrt{2q}\xi)^2}{4(Q_0-q)}\right] P_\out(y|z).
\end{align}
In order to obtain from eq.~\eqref{eq:phi_1} the prediction of eq.~\eqref{eq:fe_limit}, it therefore suffices to show that, for any $\hq \geq 0$:
\begin{align}\label{eq:to_show}
   \lim_{d \to \infty} \frac{1}{d^2} \EE_{\bxi \sim \GOE(d)} \left[H_{\hq}(\bxi) \log H_{\hq}(\bxi)\right] 
   &= \frac{Q_0 \hq}{4} - \frac{1}{2} \Sigma(\mu_{1/\hq}) - \frac{1}{4} \log \hq - \frac{1}{8}.
\end{align}
We focus on deriving eq.~\eqref{eq:to_show} in the remaining of this section.
We note that we can rewrite the left-hand side as the free entropy of the following denoising problem: 
\begin{align}\label{eq:aux_pb}
    \bY = \bS^\star + \bxi/\sqrt{\hq},
\end{align}
with $\bxi \sim \mathrm{GOE}(d)$, $\bS^\star \sim \mcW_{m,d}$, and
which we consider in the Bayes-optimal setting. 
Indeed, we can define the free entropy of this problem as
\begin{align}\label{eq:Psi_qh}
\nonumber
    \frac{1}{d^2} \EE_{\bY,\bS^\star} \log &\int \mcD \bW  \, \exp\left(-\frac{d \hq}{4} \Tr[(\bY - \bS)^2]\right) \\
    \nonumber
    &= -\frac{\hq \EE\tr [\bY^2]}{4} + \frac{1}{d^2} \EE_{\bY} \log \int \mcD \bW \, \exp\left(-\frac{d \hq}{4} \Tr[\bS^2] + \frac{d \hq}{2} \Tr[\bY \bS]\right), \\ 
    &= - \frac{1 + \hq Q_0}{4} + \frac{1}{d^2} \EE_{\bxi \sim \GOE(d)} [H_{\hq}(\bxi) \log H_{\hq}(\bxi)].
\end{align}
Crucially, this auxiliary problem is again Bayes-optimal, which we will use in what follows.

\myskip
\textbf{Remark --}
Eq.~\eqref{eq:aux_pb} defines a problem known as \emph{extensive-rank matrix denoising}. The limit free entropy of this problem, as well as the analytical form of the Bayes-optimal estimator,
for a rotationally-invariant prior on $\bS^\star$ and a rotationally invariant noise $\bxi$ (which is here Gaussian) have both been understood and worked out completely~\citep{bun2016rotational,maillard2022perturbative,pourkamali2023matrix,semerjian2024matrix}.
We will leverage these results (and partially re-derive them) in what follows.

\myskip
We now use a change of variable to the singular values of $\bW$,
see e.g.\ Proposition~4.1.3 of~\cite{anderson2010introduction}.
We reach:
\begin{align}\label{eq:change_variable_evalues}
    \nonumber
    &\int \mcD \bW \, \exp\left(-\frac{d \hq}{4} \Tr[\bS^2] + \frac{d \hq}{2} \Tr[\bY \bS]\right) 
    =  C_{d,m} \int_{\bbR_+^m} \prod_{k=1}^m \rd \lambda_k \, e^{-\frac{m}{2} \sum_{k=1}^m \lambda_k} \prod_{k=1}^m \lambda_k^{\frac{d-m}{2}}
    \\ & \hspace{1cm} \times \prod_{k < k'} |\lambda_k - \lambda_{k'}| \, e^{-\frac{d\hq}{4} \sum_{k=1}^m \lambda_k^2}
 \int_{\mcO(d)} \mcD \bO \exp\left\{\frac{d\hq}{2} \Tr[\bO \bLambda \bO^\T \bY]\right\},
\end{align}
in which $\bS = \bW^\T \bW / m = \bO \bLambda \bO^\T$, with $\bLambda = \Diag((\lambda_1, \cdots, \lambda_m, 0, \cdots, 0))$, 
and $C_{d,m} > 0$ is a constant depending only on $m$ and $d$. Notice that we (slightly abusively) used the notation $\mcD \bO$ to denote here the Haar measure over the orthogonal group $\mcO(d)$.
The large-$d$ limit of the last term is given by the 
\emph{HCIZ integral}~\citep{harish1957differential,itzykson1980planar}: 
\begin{align}\label{eq:def_hciz}
    I_\HCIZ(\theta, \bR, \bY) &= I_\HCIZ(\theta, \mu_{\bS}, \mu_{\bY}) \coloneqq \lim_{d \to \infty} \frac{2}{d^2} \log \int_{\mcO(d)} \mcD \bO \exp\Big\{\frac{\theta d}{2} \Tr[\bO \bS \bO^\intercal \bY]\Big\},
\end{align} 
where $\bS$ and $\bY$ are $d \times d$ matrices with asymptotic eigenvalue distributions $\mu_\bS$ and $\mu_\bY$.
We can now apply the Laplace method in eq.~\eqref{eq:change_variable_evalues} 
on the eigenvalue distribution of $\bS$. As the problem of eq.~\eqref{eq:aux_pb} is Bayes-optimal, 
it is known that the typical eigenvalue distribution of $\bS$ under the distribution of eq.~\eqref{eq:change_variable_evalues} 
is $\mu_{\bS} = \mu_{\bS^\star} = \mu_{\MP, \kappa}$, as a consequence of the so-called Nishimori identity, so that $\mu_{\MP,\kappa}$ is the maximizer of the variational problem obtained by the use of Laplace's method, see~\cite{maillard2022perturbative} for details.
Since the asymptotic distribution of $\bY$ is (by eq.~\eqref{eq:aux_pb}) $\mu_\bY = \mu_{\MP,\kappa} \boxplus \sigma_{\sci,1/\sqrt{\hq}}$, 
we reach by eq.~\eqref{eq:Psi_qh}: 
\begin{align}\label{eq:relation_H_HCIZ}
\lim_{d \to \infty} \frac{1}{d^2} \EE_{\bxi \sim \GOE(d)} [H_{\hq}(\bxi) \log H_{\hq}(\bxi)] 
 &= C(\kappa) - \frac{\hq Q_0}{4} + \frac{1}{2} I_\HCIZ(\hq,\mu_{\MP,\kappa},\mu_\bY),   
\end{align}
where $C(\kappa)$ is a function of $\kappa = m/ d$. It can be easily seen that $C(\kappa) = 0$ by considering $\hq = 0$.

\myskip
Fortunately, extensive-rank matrix denoising with Gaussian noise is one of the very few cases for which an easily tractable analytical form is known for the HCIZ integral.
More specifically, we know that for any $t > 0$ and any $\nu$, we have 
with $\mu_t \coloneqq \nu \boxplus \sigma_{\mathrm{s.c.}, \sqrt{t}}$~\citep{maillard2022perturbative}:
\begin{align*}
    -\frac{1}{2} \Sigma(\mu_t) + \frac{1}{4t} \EE_{\mu_t}[X^2]  - \frac{1}{2} I_\HCIZ(t^{-1}, \mu_t, \nu) - \frac{3}{8} + \frac{1}{4} \log t  + \frac{1}{4t}\EE_{\nu}[X^2] &= 0,
\end{align*}
with $\Sigma(\mu) \coloneqq \int \mu(\rd x) \mu(\rd y) \log |x-y|$.
Applying this formula with $t = 1/\hq$ we reach:
\begin{align*}
    \frac{1}{2} I_\HCIZ(\hq,\mu_{\MP,\kappa},\mu_\bY)
    &= -\frac{1}{2} \Sigma(\mu_\bY) + \frac{\hq}{4}\EE[\tr(\bY^2)]  - \frac{3}{8} - \frac{1}{4} \log \hq  + \frac{\hq}{4}\EE\tr[(\bS^\star)^2], \\
    &= -\frac{1}{2} \Sigma(\mu_{\bY}) + \frac{1}{4} (2 Q_0 \hq + 1)  - \frac{3}{8} - \frac{1}{4} \log \hq.
\end{align*}
Combining it with eq.~\eqref{eq:relation_H_HCIZ}, we reach eq.~\eqref{eq:to_show} (recall that $\mu_\bY = \mu_{1/\hq}$ with the notations of eq.~\eqref{eq:to_show}).

\section{Large and small \texorpdfstring{$\kappa$}{} limits}\label{sec_app:small_kappa}
\subsection{Details of the small-\texorpdfstring{$\kappa$}{} limit}\label{subsec_app:small_kappa}

Recall that by Claim~\ref{claim:free_entropy}, 
we have $\MMSE  =\kappa(Q_0 - q^\star)$, with $Q_0 = 1 + \kappa^{-1}$. Since the MMSE remains finite as $\kappa \to 0$, 
we consider the scaling $q = \tq / \kappa$, with $0 \leq \tq \leq 1$.
We start again from eqs.~\eqref{eq:MMSE_qhat} and \eqref{eq:qhat_main}. 
We denote $\Lambda \coloneqq \Delta(2+\Delta)$.
Eq.~\eqref{eq:MMSE_qhat}, combined with the scaling of $\alpha$, implies that $\hq \sim \kappa^2/t$ for some finite $t > 0$, and since 
$\MMSE = 1 - \tq$ as $\kappa \to 0$, we have
\begin{align*}
    t = \frac{\kappa^2}{\hq} = \frac{1-\tq+\Lambda}{2\talpha}.
\end{align*}
Moreover, eq.~\eqref{eq:qhat_main} at order $\mcO(\kappa)$ yields: 
\begin{align}\label{eq:se_kappa_going_0}
    -2 \talpha + \frac{\Lambda}{t} = \partial_\kappa[F(t, \kappa)]_{\kappa = 0}, 
\end{align}
where 
\begin{align*}
    F(t, \kappa) &\coloneqq \frac{4 \pi^2 t}{3 \kappa^2} \int \mu_{t/\kappa^2}(y)^3 \rd y.
\end{align*}
Notice that $F(t, 0) = 1$ since $\mu_{\xi} \simeq \sigma_{\sci,\sqrt{\xi}}$ for $\xi \to \infty$, and $\int \sigma_{\sci,\sqrt{\xi}}(y)^3 \rd y = 3 / [4 \pi^2 \xi]$. 
Thus, the leading order of eq.~\eqref{eq:qhat_main} as $\kappa \to 0$ is consistent but not informative.

\myskip 
In what follows, we work out the small $\kappa$ limit of $F(t, \kappa)$, at first order in $\kappa$.
We denote $\nu_\kappa(y) \coloneqq (1/\kappa) \mu_{t/\kappa^2}(y/\kappa)$, so that the Stieltjes transform 
$g_\kappa(z) \coloneqq \int \nu(y)/(y-z) \rd y$ of $\nu$ satisfies the self-consistent equation (see Appendix~\ref{sec_app:background}):
\begin{align}\label{eq:stieltjes_nu}
    z &= \frac{\kappa}{1+g} - \frac{1}{g} - t g.
\end{align}
Moreover, we notice that $\nu_\kappa = (\kappa \#\mu_{\MP,\kappa}) \boxplus \sigma_{\sci,\sqrt{t}}$, so that the support of $\nu$ remains bounded as $\kappa \to 0$.
We then proceed to expand in $\kappa$ eq.~\eqref{eq:stieltjes_nu}. For any finite $z \in \bbC$, the 
leading order of the expansion is easily given by $z = -1/h-th + \smallO_\kappa(1)$, which gives that 
$\nu_\kappa \to \sigma_{\sci,\sqrt{t}}$. However, as mentioned above, we need to go to the next order in this expansion to compute eq.~\eqref{eq:se_kappa_going_0}.

\myskip
\textbf{A BBP-type transition --}
We notice that $\kappa \# \mu_{\MP,\kappa}(x) \simeq (1-\kappa) \delta(x) + \kappa \delta(x-1)$ when $\kappa \to 0$. 
More precisely, it is composed of a mass $(1-\kappa)$ in $0$, and the rest of the mass $\kappa$ is made up of a continuous part supported between 
$(1-\sqrt{\kappa})^2 \simeq 1 - 2\sqrt{\kappa}$ and $(1+\sqrt{\kappa})^2 \simeq 1 + 2\sqrt{\kappa}$.
$\nu_\kappa$ can thus be seen as the spectral density of the sum of a GOE matrix (with variance $t$) and a small-rank perturbation matrix of rank $m = \kappa d$, with all non-zero eigenvalues located close to $1$.
We therefore expect by the so-called BBP transition phenomenon~\citep{benaych2011eigenvalues} that $\nu_\kappa$ will possess a set of $m$ eigenvalues outside the semicircle bulk
whenever the condition 
\begin{equation}\label{eq:bbp_condition}
    1 \geq -\frac{1}{g_{\sci,\sqrt{t}}(2\sqrt{t})}
\end{equation}
is satisfied, with $g_{\sci,\sqrt{t}}(z) \coloneqq \EE_{X \sim \sigma_{\sci,\sqrt{t}}}[1/(X-z)]$ the Stieltjes transform of the semicircle. 
Since one can easily show that $g_{\sci,\sqrt{t}}(2\sqrt{t}) = - t^{-1/2}$, eq.~\eqref{eq:bbp_condition} is equivalent to $t \leq 1$. 
In this case, these ``spiked'' eigenvalues are located around the value~\citep{benaych2011eigenvalues}
\begin{align*}
   g_{\sci,\sqrt{t}}^{-1}(-1) = \mcR_{\sci,\sqrt{t}}(1) + 1 = 1+t.
\end{align*}
Moreover, as the width of the continuous part of $\kappa \# \mu_{\MP,\kappa}$ is of size $\mcO(\sqrt{\kappa})$, we also expect this ``spiked'' part of the spectrum 
to have a width $\mcO(\sqrt{\kappa})$.

\myskip 
\textbf{Expansion of $\nu$ --}
Based on the remarks of the previous paragraph, we assume the following behavior for $\nu_\kappa$, as $\kappa \to 0$.
For any $y \in \bbR$ with $y \neq 1 + t$, we have 
\begin{align}\label{eq:expansion_nu_1}
    \nu_\kappa(y) = \sigma_{\sci,\sqrt{t}}(y) + \kappa \nu^{(1)}(y) + \smallO(\kappa).
\end{align} 
Furthermore, we also have, for all $y \in \bbR$, when $t \leq 1$:
\begin{align}\label{eq:expansion_nu_2}
    \sqrt{\kappa} \nu_\kappa\left(\frac{y - (1+t)}{\sqrt{\kappa}}\right) \to_{\kappa \to 0} \rho^{(1)}(y),
\end{align} 
for a finite density $\rho^{(1)}$, with $\int \rho^{(1)}(y) \rd y = 1$.
Eqs.~\eqref{eq:expansion_nu_1} and \eqref{eq:expansion_nu_2} can be used to expand the Stieltjes transform of $\nu_\kappa$ as a function of $\nu^{(1)}, \rho^{(1)}$, 
and then eq.~\eqref{eq:stieltjes_nu} used to find the values of these two functions. 
These computations are straightforward, and yield: 
\begin{align}\label{eq:nu1_rho1}
    \begin{dcases}
        \nu^{(1)}(y) &= \frac{(y-2)}{2 \pi (1+t-y) \sqrt{4t - y^2}} \indi\{|y| \leq 2 \sqrt{t}\}, \\
        \rho^{(1)} &= \rho_{\sci,\sqrt{1-t}}.
    \end{dcases}
\end{align}
Notice that the second equation of eq.~\eqref{eq:nu1_rho1} is only valid for $t \leq 1$, while the 
first one is valid for all $t \geq 0$. 
One checks for instance that $\int \nu^{(1)}(\rd y) = - \indi\{t \leq 1\}$, which implies that the normalization condition $\int \nu_\kappa(y) \rd y = 1$ 
is well satisfied for all values of $t \geq 0$.
Using the expansion of eq.~\eqref{eq:nu1_rho1}, we obtain that 
\begin{align*}
    F(t, \kappa) &= \frac{4\pi^2 t}{3} \int \nu_\kappa(y)^3 \rd y, \\ 
    &= 1 - \kappa \begin{cases}
        2 - t & \textrm{ if } t \leq 1, \\ 
        1/t & \textrm{ if } t \geq 1
    \end{cases} + \smallO(\kappa). 
\end{align*}
So finally eq.~\eqref{eq:se_kappa_going_0} becomes 
\begin{align*}
    2 \talpha - \frac{\Lambda}{t} &= \begin{cases}
        2 - t & \textrm{ if } t \leq 1, \\ 
        1/t & \textrm{ if } t \geq 1,
    \end{cases}
\end{align*}
And recall that $\MMSE = 2 \talpha t - \Lambda$, so that
\begin{align*}
    \MMSE &= \begin{cases}
        t(2 - t) & \textrm{ if } t \leq 1, \\ 
        1 & \textrm{ if } t \geq 1,
    \end{cases}
\end{align*}
Since $t = (\MMSE + \Lambda)/(2\talpha)$, we reach
that $t = (1+\Lambda)/(2\talpha)$ if $\talpha \leq (1+\Lambda)/2$, and 
$t = 1- \talpha + \sqrt{(1-t\alpha)^2 + \Lambda}$ otherwise.
This yields eq.~\eqref{eq:small_kappa_MMSE}.

\subsection{The small-\texorpdfstring{$\kappa$}{} limit from a large but finite hidden layer}\label{subsec_app:large_finite_m}

We consider here the noiseless case: 
\begin{align*}
    y_i &= \frac{1}{m} \sum_{k=1}^m \left[\frac{(\bw_k^\star)^\T \bx_i}{\sqrt{d}}\right]^2, 
\end{align*}
with $m = \mcO(1)$ as $n, d \to \infty$. We denote $\alpha = n/d = \talpha m$, and we assume that $\talpha = \Theta(1)$ as $m \to \infty$ (\emph{after} $n,d \to \infty$).
We can write the partition function (cf.\ eq.~\eqref{eq:def_Z}) as: 
\begin{align}\label{eq:def_Z_committee}
    \mcZ &= \int_{\bbR^{d \times m}} \mcD \bW \prod_{i=1}^n P_\out\left(y_i\middle|\frac{\bw_k^\T \bx_i}{\sqrt{d}}\right),
\end{align}
with $P_\out(y | \bz) = \delta(y - \|\bz\|^2/m)$. 
We can make a direct use of the results of \cite{aubin2019committee} to write:
\begin{align}\label{eq:fe_committee}
&\lim_{d \to \infty} \frac{1}{d} \EE \log \mcZ = \text{extr}_{\bq, \hbq}\left\{- \frac{1}{2} \text{Tr} [\bq \hbq] + I_P + m \talpha I_C \right\}, \\ 
&\begin{dcases}
I_P &\coloneqq \int_{\bbR^m} \mcD \bxi \int_{\bbR^{m}} \mcD \bw^0  \exp \left[-\frac{1}{2} (\bw^0)^\T  \hbq \bw^0 +\bxi^\T  \hbq^{1/2} \bw^0 \right] \nonumber\\
&\qquad\qquad\qquad\qquad\times\log \left[\int_{\bbR^{m}} \mcD \bw^0 \exp \left[-\frac{1}{2} \bw^\T   \hbq \bw  +\bxi^\T  \hbq^{1/2} \bw \right] \right] \nonumber, \\
I_C &\coloneqq \int_0^\infty d y \int_{\bbR^m} \mcD \bxi \int_{\bbR^{m}} \mcD \bZ^0 P_{\rm out}\left\{y |(\Id_m - \bq)^{1/2}\bZ^0 +\bq^{1/2}\bxi\right\}\nonumber\\
&\qquad\qquad\qquad\qquad\times\log \left[\int_{\bbR^{m}} \mcD \bZ P_{\rm out}\left\{y | (\Id_m - \bq)^{1/2}\bZ +\bq^{1/2}\bxi\right\} \right] \nonumber.
\end{dcases}
\end{align}   
Here, $\bq, \hbq$ are symmetric $m \times m$ matrices, which satisfy moreover $\Id_m \succeq \bq \succeq 0$ and $\hbq \succeq 0$.
The informal notation ``$\extr f$'' in eq.~\eqref{eq:fe_committee} means that one should zero-out the gradient of the function $f$ 
to compute the values of $\bq, \hbq$.

\myskip 
\textbf{The matrix $\bq$ --}
Importantly, the matrix $\bq$ can be interpreted as the ``overlap matrix'' of the model: if we denote $\langle \cdot \rangle$ the average under the posterior measure in eq.~\eqref{eq:def_Z_committee}, then we have 
\begin{align}\label{eq:overlap_committee}
    q_{kl} &= \EE \left\langle \frac{\bw_k^\T \bw'_l}{d}\right\rangle,
\end{align}
where $\bw, \bw'$ are two independent samples under $\langle \cdot \rangle$.
Moreover, thanks to the Bayes-optimality of the problem, it is known that the overlap concentrates \citep{zdeborova2016statistical}, in the sense that the random variable $(\bw_k^\T \bw'_l)/d$ concentrates on its average under $\EE \langle \cdot \rangle$ as $d \to \infty$.

\myskip
The ``prior integral'' $I_P$ can be very easily computed with Gaussian integrals, and yields:
\begin{align}\label{eq:IP}
    I_P &= \frac{1}{2} \Tr[\hbq] - \frac{1}{2} \log \det (\Id_m + \hbq).
\end{align}
We now focus on computing the leading order of $I_C$ in the large-$m$ limit. 
We can write 
\begin{align}\label{eq:IC_1}
    \begin{dcases}
        I_C &= \int \rd y \, \mcD \bxi \, I_{\bq}(y, \bxi) \log I_{\bq}(y, \bxi), \\ 
        I_\bq(y, \bxi) &= \int_{\bbR^{m}} \mcD \bZ \, \delta\left(y - \frac{1}{m} \left\|(\Id_m - \bq)^{1/2}\bZ +\bq^{1/2}\bxi \right\|_2^2\right).
    \end{dcases}
\end{align}
Let $\ty \coloneqq \sqrt{m}[y - \tr(\Id_m - \bq) - (\bxi^\T \bq \bxi)/m]$. We can change variables in eq.~\eqref{eq:IC_1}, and obtain:
\begin{align}\label{eq:IC_2}
    \begin{dcases}
        I_C &= \int \rd \ty \, \mcD \bxi \, J_{\bq}(\ty, \bxi) \log J_{\bq}(\ty, \bxi) + \frac{1}{2} \log m, \\ 
        J_\bq(\ty, \bxi) &= \int_{\bbR^{m}} \mcD \bZ \, \delta\left(\ty - \sqrt{m} \left[\frac{1}{m} \left\|(\Id_m - \bq)^{1/2}\bZ +\bq^{1/2}\bxi \right\|_2^2 - \tr(\Id_m - \bq) - \frac{\bxi^\T \bq \bxi}{m}\right]\right).
    \end{dcases}
\end{align}
Notice that the additive term $(1/2) \log m$ in $I_C$ just amounts to a renormalization of the partition function $\mcZ$, so we remove this additional constant in what follows.
We proceed to simplify $J_\bq(\ty, \bxi)$ in the large-$m$ limit. 
We have 
\begin{align*}
        &J_\bq(\ty, \bxi) \\ 
        &= \int_{\bbR^{m}} \mcD \bZ \, \delta\left(\ty - \sqrt{m} \left[\frac{\bZ^\T (\Id_m - \bq) \bZ}{m} -  \tr(\Id_m - \bq) + 2 \frac{\bZ^\T (\Id_m - \bq)^{1/2} \bq^{1/2} \bxi}{m} \right]\right), \\ 
        &= \int \frac{\rd u}{2\pi} e^{i u \ty + i u \sqrt{m} \tr(\Id_m - \bq)} \int \mcD \bZ e^{- i u \sqrt{m} \left[\frac{\bZ^\T (\Id_m - \bq) \bZ}{m} + 2 \frac{\bZ^\T (\Id_m - \bq)^{1/2} \bq^{1/2} \bxi}{m} \right]}, \\ 
        &= \int \frac{\rd u}{2\pi} e^{i u \ty + i u \sqrt{m} \tr(\Id_m - \bq) - \frac{1}{2} \log \det\left[\Id_m + 2 \frac{iu(\Id_m-\bq)}{\sqrt{m}}\right] - \frac{2 u^2}{m} \bxi^\T \bq^{1/2}(\Id_m - \bq)^{1/2} \left[\Id_m + \frac{2iu(\Id_m-\bq)}{\sqrt{m}}\right]^{-1} \! (\Id_m - \bq)^{1/2} \bq^{1/2} \bxi}, \\ 
        &= \int \frac{\rd u}{2\pi} e^{i u \ty - u^2 \tr[(\Id_m - \bq)^2] - \frac{2u^2}{m} \bxi^\T \bq^{1/2} (\Id_m - \bq) \bq^{1/2} \bxi  + \mcO(1/\sqrt{m})}, \\ 
        &= \frac{1}{\sqrt{2\pi \sigma_\bxi^2}} e^{-\frac{(\ty)^2}{2 \sigma_\bxi^2}} + \mcO(1/\sqrt{m}),
\end{align*}
where 
\begin{align*}
    \sigma_{\bxi}^2 &\coloneqq 2 \tr[(\Id_m - \bq)^2] + \frac{4}{m} \bxi^\T \bq^{1/2} (\Id_m - \bq) \bq^{1/2} \bxi.
\end{align*}
Plugging it back into eq.~\eqref{eq:IC_2} yields: 
\begin{align*}
    I_C &= \int \rd y \, \mcD \bxi \frac{1}{\sqrt{2\pi \sigma_\bxi^2}} e^{-\frac{(\ty)^2}{2 \sigma_\bxi^2}} \left[-\frac{1}{2} \log 2 \pi \sigma_\bxi^2 - \frac{y^2}{2 \sigma_\bxi^2}\right] + \mcO(1/\sqrt{m}), \\ 
    &= -\frac{1}{2} \int \mcD \bxi \log[2 \pi \sigma_\bxi^2]  - \frac{1}{2} + \mcO(1/\sqrt{m}).
\end{align*}
Since $\bxi \sim \mcN(0, \Id_m)$, it follows from elementary concentration of measure that $\sigma_\bxi^2$ concentrates on its average value $\sigma^2$ given by: 
\begin{align*}
   \sigma^2 &\coloneqq 2 \tr[(\Id_m - \bq)^2] + 4 \tr[(\Id_m - \bq) \bq] = 2 \tr[(\Id_m-\bq)(\Id_m+\bq)] = 2 \tr[\Id_m - \bq^2].
\end{align*}
All in all we reach that (up to additive constants): 
\begin{align}\label{eq:IC_3}
    I_C &= - \frac{1}{2} \log \tr[\Id_m - \bq^2] + \mcO(1/\sqrt{m}).
\end{align}
Combining eqs.~\eqref{eq:IP} and \eqref{eq:IC_3} in eq.~\eqref{eq:fe_committee}, we get at leading order in $m$, with $\Phi \coloneqq \lim (1/d) \EE \log \mcZ$:
\begin{align}\label{eq:Phi_committee_large_m}
   \frac{1}{m} \Phi &= \extr_{\bq, \hbq}\left\{-\frac{1}{2} \tr[\bq \hbq] + \frac{1}{2} \tr[\hbq] - \frac{1}{2} \tr \log (\Id_m + \hbq) - \frac{\talpha}{2} \log \tr[\Id_m-\bq^2]\right\}.
\end{align}
Eq.~\eqref{eq:Phi_committee_large_m} can be easily solved, and yields: 
\begin{align*}
    \begin{dcases}
        \hbq &= \bq (\Id_m - \bq)^{-1}, \\ 
        \hbq &= \frac{2 \talpha}{\tr[\Id_m - \bq^2]} \bq.  
    \end{dcases}
\end{align*}
This implies that (recall $0 \preceq \bq \preceq \Id_m$):
\begin{align}\label{eq:q_committee_large_m}
    \bq &= 
    \begin{dcases}
        &0 \textrm{ if } \talpha \leq \frac{1}{2}, \\  
        &(2 \talpha - 1) \Id_m \textrm{ if } \frac{1}{2} \leq \talpha \leq 1, \\  
        &\Id_m \textrm{ if } \talpha \geq 1.
    \end{dcases}
\end{align}
Now that we have obtained $\bq$ in eq.~\eqref{eq:q_committee_large_m}, we can compute the $\MMSE$, or generalization error.
Defining it as in eq.~\eqref{eq:mmse}: 
\begin{align*}
    \MMSE_d &\coloneqq \frac{m}{2} \EE_{\bW^\star, \mcD} \EE_{y_\mathrm{test}, \bx_\mathrm{test}}[(y_\mathrm{test} - \hat{y}^\BO(\bx_\mathrm{test}))^2],
\end{align*}
the same arguments used in the proof of Lemma~\ref{lemma:MMSEy_MMSES} show that in the large $m$ limit (but taken \emph{after} $d \to \infty$), we have 
at leading order
\begin{align*}
    \MMSE_d &= \frac{m}{d}\EE \tr[(\bS^\star - \bS^\BO)^2] = 1 - \frac{m}{d} \EE \tr[(\bS^\BO)^2],
\end{align*}
with $\bS \coloneqq (1/m) \sum_{k=1}^m \bw_k \bw_k^\T$.
Notice that $\bS^\BO = \langle \bS \rangle$, so that 
\begin{align*}
    \MMSE_d &= 1 - \frac{1}{m} \EE \sum_{1 \leq k,l \leq m} \left\langle \left(\frac{\bw_k^\T \bw'_{l}}{d}\right)^2 \right\rangle,
\end{align*}
where $\bw, \bw'$ are two independent samples under the posterior measure $\langle \cdot \rangle$.
We know that the overlap concentrates (cf.\ the discussion around eq.~\eqref{eq:overlap_committee}), so that at leading order, 
with $\MMSE \coloneqq \lim_{d \to \infty} \MMSE_d$:
\begin{align*}
    \MMSE &= 1 - \frac{1}{m}\EE \sum_{1 \leq k,l \leq m} q_{kk'}^2 = 1 - \tr[\bq^2].
\end{align*}
Combining it with eq.~\eqref{eq:q_committee_large_m}, we reach: 
\begin{align*}
    \MMSE &= 
    \begin{dcases}
        &1 \textrm{ if } \talpha \leq \frac{1}{2}, \\  
        &4 \talpha (1 - \talpha) \textrm{ if } \frac{1}{2} \leq \talpha \leq 1, \\  
        &0 \textrm{ if } \talpha \geq 1.
    \end{dcases}
\end{align*}
We have recovered eq.~\eqref{eq:small_kappa_MMSE_noiseless} from the limit $m \to \infty$ taken after $d \to \infty$!

\subsection{The large \texorpdfstring{$\kappa$}{} limit}\label{subsec_app:large_kappa}

We consider here $\kappa \to \infty$, with $\alpha$ remaining of order $\Theta(1)$ as $\kappa \to \infty$.
Since the MMSE remains finite as well, we see from eq.~\eqref{eq:MMSE_qhat} that we must have the scaling $\hq = \kappa t$, with $t$ remaining finite as $\kappa \to \infty$. 
A very similar derivation to the one of Appendix~\ref{subsec_app:small_kappa} yields that eq.~\eqref{eq:qhat_main} in this limit becomes (with $\Lambda \coloneqq \Delta(2+\Delta)$): 
\begin{align*}
    1 - 2 \alpha + \Lambda t &= \lim_{\kappa \to \infty} \frac{4 \pi^2}{3 \kappa t} \int \mu_{1/[\kappa t]}(y)^3 \rd y, \\ 
    &= \frac{1}{1+t}.
\end{align*}
Combining it with eq.~\eqref{eq:MMSE_qhat} yields that  
\begin{align}\label{eq:mmse_large_kappa}
    \MMSE &= \frac{1-2\alpha-\Lambda + \sqrt{(1-2\alpha + \Lambda)^2 + 8 \alpha \Lambda}}{2},
\end{align}
where we recall $\Lambda = \Delta(2+\Delta)$. In particular, for $\Delta = 0$, we reach $\MMSE = \max(1-2\alpha,0)$.

\section{Other technicalities}\label{sec_app:technicalities}
\subsection{Properties of the MMSE of \texorpdfstring{eq.~\eqref{eq:mmse}}{}}\label{subsec_app:convention_mmse}

Let $\bS^\star \coloneqq (1/m) \sum_{k=1}^m \bw_k^\star (\bw_k^\star)^\T$, and $\hbS^\BO \coloneqq \EE[\bS | \mcD]$ the Bayes-optimal estimator of $\bS^\star$.
We show here the following lemma on the MMSE of eq.~\eqref{eq:mmse}, under the high-dimensional limit of eq.~\eqref{eq:highd_limit}:
\begin{lemma}\label{lemma:MMSEy_MMSES}
For a constant $C = C(\kappa) > 0$:
\begin{align*}
   \left|\MMSE_{d} - \kappa \EE_{\bS^\star,\mcD}\tr\left[\left(\bS^\star - \hbS^\BO\right)^2\right] \right| \leq \frac{C(\kappa)}{n}.
\end{align*}
\end{lemma}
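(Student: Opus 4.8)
The plan is to reduce the statement to an \emph{exact} finite-$d$ identity and then bound the single term in it that is not already $\kappa\,\EE\,\tr[(\bS^\star-\hbS^\BO)^2]$. First I would make the Bayes estimator explicit. Since $\EE_\bz[f_\bW(\bx)] = \tfrac1d\bx^\T\bS(\bW)\bx + \Delta$ with $\bS(\bW) := \bW^\T\bW/m$ is \emph{linear} in $\bS(\bW)$, eq.~\eqref{eq:haty_BO} gives immediately
\begin{equation*}
\hat y_\mcD^{\BO}(\bx_{\rm test}) = \frac1d\,\bx_{\rm test}^\T \hbS^\BO\, \bx_{\rm test} + \Delta , \qquad \hbS^\BO = \EE[\bS\mid\mcD].
\end{equation*}
Expanding the square in eq.~\eqref{eq:teacher_def} at the test point and subtracting, I would write $y_{\rm test} - \hat y_\mcD^\BO(\bx_{\rm test}) = A + B + C$, where $A := \tfrac1d\bx_{\rm test}^\T(\bS^\star - \hbS^\BO)\bx_{\rm test}$, $B := \tfrac{2\sqrt\Delta}{m\sqrt d}\sum_k ((\bw_k^\star)^\T \bx_{\rm test})\, z_{{\rm test},k}$ is the cross term, and $C := \tfrac{\Delta}{m}\sum_k (z_{{\rm test},k}^2 - 1)$.

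Next I would compute $\EE[(A+B+C)^2]$, the expectation being over $\bW^\star,\mcD,\bx_{\rm test},\bz_{\rm test}$. Integrating over $\bz_{\rm test}$ first, all three cross terms vanish ($\EE[AB]=\EE[AC]=\EE[BC]=0$): $A$ is independent of $\bz_{\rm test}$, while $B$ is odd and $C$ is centered in $\bz_{\rm test}$, and $\EE[z_k(z_l^2-1)]=0$ for all $k,l$. A short computation then gives $\EE[B^2] = 4\Delta/m$ and $\EE[C^2] = 2\Delta^2/m$, while the Gaussian fourth-moment identity $\EE_{\bx\sim\mcN(0,\Id_d)}[(\bx^\T M\bx)^2] = (\Tr M)^2 + 2\Tr[M^2]$ for symmetric $M$, applied with $M = \bS^\star - \hbS^\BO$ (independent of $\bx_{\rm test}$), yields $\EE[A^2] = \tfrac1{d^2}\EE[(\Tr(\bS^\star-\hbS^\BO))^2] + \tfrac2{d^2}\EE[\Tr[(\bS^\star-\hbS^\BO)^2]]$. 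Substituting into eq.~\eqref{eq:mmse}, using $\kappa = m/d$, $\tr = \Tr/d$, and that $\tfrac m2(\EE[B^2]+\EE[C^2]) = \Delta(2+\Delta)$ exactly cancels the additive constant, I obtain the exact identity, valid for every $d$,
\begin{equation*}
\MMSE_d = \kappa\, \EE\,\tr\!\big[(\bS^\star - \hbS^\BO)^2\big] + \frac{m}{2d^2}\,\EE\!\big[\big(\Tr(\bS^\star - \hbS^\BO)\big)^2\big].
\end{equation*}
Thus the lemma is equivalent to $\tfrac{m}{2d^2}\,\EE[(\Tr(\bS^\star - \hbS^\BO))^2] \le C(\kappa)/n$.

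For the final step, the key observation is that $\Tr\hbS^\BO = \EE[\Tr\bS^\star\mid\mcD]$ — because the posterior $\bbP(\bW\mid\mcD)$ is precisely the conditional law of $\bW^\star$ given $\mcD$ — so $\Tr\hbS^\BO$ is the Bayes-optimal (minimum-MSE) estimator of the \emph{scalar} $\Tr\bS^\star$, and hence $\EE[(\Tr\bS^\star - \Tr\hbS^\BO)^2] \le \EE[(\Tr\bS^\star - \hat\theta(\mcD))^2]$ for every $\mcD$-measurable $\hat\theta$. I would take the empirical-mean estimator $\hat\theta := \tfrac dn\sum_{i=1}^n (y_i - \Delta)$, which is conditionally unbiased for $\Tr\bS^\star$ given $\bW^\star$ (since $\EE[y_i\mid\bW^\star] = \tfrac1d\Tr\bS^\star + \Delta$); then $\EE[(\Tr\bS^\star - \hat\theta)^2] = \tfrac{d^2}{n}\,\EE_{\bW^\star}[\mathrm{Var}(y_1\mid\bW^\star)]$. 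Conditionally on $\bW^\star$, $y_1$ is $1/m$ times the squared norm of a Gaussian vector with covariance $\Sigma := \tfrac1d\bW^\star(\bW^\star)^\T + \Delta\Id_m$, so $\mathrm{Var}(y_1\mid\bW^\star) = \tfrac2{m^2}\|\Sigma\|_F^2$, and elementary Wishart moment bounds (e.g.\ $\EE\,\Tr[(\bW^\star(\bW^\star)^\T)^2] = dm(m+d+1)$) give $\EE_{\bW^\star}\|\Sigma\|_F^2 = O(m)$ with a constant depending only on $\kappa$ and the noise variance $\Delta$. Therefore $\EE[(\Tr(\bS^\star-\hbS^\BO))^2] = O(d^2/(nm))$, whence $\tfrac{m}{2d^2}\EE[(\Tr(\bS^\star-\hbS^\BO))^2] = O(1/n)$, which is the claim.

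I expect the only genuine difficulty to be this last step. The naive bound $\EE[(\Tr(\bS^\star-\hbS^\BO))^2] \le \mathrm{Var}(\Tr\bS^\star) = 2d/m = \Theta(1)$ is \emph{not} sufficient, since multiplying by $m/(2d^2)$ leaves a $\Theta(1/d)$ term rather than $O(1/n)$. One really has to use that the single scalar functional $\Tr\bS^\star$ is determined by the data at rate $1/\sqrt{nm}$, which is exactly what the comparison with $\tfrac dn\sum_i(y_i-\Delta)$ captures; the accompanying technical point — a uniform-in-$\bW^\star$ moment control of $\mathrm{Var}(y_1\mid\bW^\star)$ — is routine. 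Steps~1--2 are pure Gaussian bookkeeping.
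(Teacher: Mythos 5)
Your proposal is correct and follows essentially the same route as the paper: the same expansion of the generalization error into the quadratic-form term plus noise terms whose means cancel the $\Delta(2+\Delta)$ constant, the same identification of the residual $\frac{m}{2}\EE[(\tr(\bS^\star-\hbS^\BO))^2]$ term, and the same key step of bounding it via Bayes-optimality of $\tr(\hbS^\BO)$ against the empirical-mean estimator $\frac{1}{n}\sum_i(y_i-\Delta)$. The only (immaterial) difference is in the final bookkeeping, where you compute the exact conditional variance $\mathrm{Var}(y_1\mid\bW^\star)=\frac{2}{m^2}\Tr[\Sigma^2]$ while the paper applies Cauchy--Schwarz and evaluates three Gaussian moments separately; both yield the same $O(1/n)$ bound (with a constant that, in both arguments, also depends on $\Delta$).
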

\noindent
Lemma~\ref{lemma:MMSEy_MMSES} shows that we can consider the MMSE on $\bS$ equivalently to the generalization MMSE of eq.~\eqref{eq:mmse}.

\myskip
\textbf{Limits --}
Notice that if the posterior concentrates around the true $\bW^\star$, then $\hbS^\BO = \EE[\bS | \mcD]$ concentrates on $\bS^\star$, which implies that $\MMSE_d \to 0$.
Conversely, for $\alpha = 0$ (i.e.\ in the absence of data), the Bayes-optimal estimator becomes $\hbS^\BO = \EE[\bS^\star] = \Id_d$, so that $\EE\tr[(\bS^\star - \hbS^\BO)^2] = \kappa^{-1}$. 
Thus, we have $\MMSE_d \to 1$ for $\alpha = 0$.

\begin{proof}[Proof of Lemma~\ref{lemma:MMSEy_MMSES}] --
Notice that (cf.\ eq.~\eqref{eq:teacher_def}):
\begin{align*}
   \EE_\bz[f_\bW(\bx)] = \Delta + \frac{\bx^\T \bS \bx}{d}, 
\end{align*}
with $\bS \coloneqq (1/m) \sum_{k=1}^m \bw_k \bw_k^\T$.
Using this in eq.~\eqref{eq:haty_BO}, and plugging it in eq.~\eqref{eq:mmse}, 
we get (with $\bz \sim \mcN(0, \Id_m)$ and $\bx \sim \mcN(0, \Id_d)$):
\begin{align}\label{eq:MMSEy_1}
    \nonumber
    \MMSE_{d} &= \frac{m}{2} \EE_{\bS^\star,\mcD,\bz,\bx} \left[\left(\Delta\left(1 - \frac{\|\bz\|^2}{m}\right) + \frac{\bx^\T (\hbS^\BO - \bS^\star) \bx}{d} - \frac{2\sqrt{\Delta}}{m} \sum_{k=1}^m z_k \left(\frac{\bx^\T \bw_k^\star}{\sqrt{d}}\right)\right)^2\right] \nonumber \\ 
    & \hspace{1cm}- \Delta(2+\Delta) \nonumber, \\ 
    &= \frac{m}{2} \EE_{\bS^\star,\mcD,\bz,\bx} \left[\Delta^2  \left(1 - \frac{\|\bz\|^2}{m}\right)^2 + \frac{[\bx^\T (\hbS^\BO - \bS^\star) \bx]^2}{d^2} + \frac{4\Delta}{m} \tr (\bS^\star) \right] - \Delta(2+\Delta), \nonumber \\ 
    \nonumber
    &\aeq \frac{m}{2} \EE_{\bS^\star,\mcD,\bx} \left[\frac{[\bx^\T (\hbS^\BO - \bS^\star) \bx]^2}{d^2}\right], \\ 
    &\beq \frac{m}{2} \EE_{\bS^\star,\mcD} \left[\left(\tr(\bS^\star - \hbS^\BO)\right)^2\right] + \kappa \EE_{\bS^\star,\mcD} \, \tr\left[\left(\bS^\star - \hbS^\BO\right)^2\right],
\end{align}
where we used $\EE[\|\bz\|^4] = m^2 + 2 m$ and $\EE \tr(\bS^\star) = 1$ in $(\rm a)$, and 
$\EE_{\bx \sim \mcN(0, \Id_d)}[(\bx^\T \bM \bx)^2] = \Tr[\bM]^2 + 2 \Tr[\bM^2]$ in $(\rm b)$.
It remains to bound the first term of eq.~\eqref{eq:MMSEy_1} to conclude the proof of Lemma~\ref{lemma:MMSEy_MMSES}. 
We notice that, by linearity of the trace, $\tr(\hbS^\BO)$ is the Bayes-optimal estimator for $\tr(\bS^\star)$, i.e.\ 
\begin{align}\label{eq:bo_trace}
    \EE_{\bS^\star,\mcD} \left[\left(\tr(\bS^\star - \hbS^\BO)\right)^2\right] 
    &= \min_{r(\mcD)} \EE_{\bS^\star,\mcD} \left[\left(\tr(\bS^\star) - r(\mcD)\right)^2\right].
\end{align}
In particular, considering the estimator 
\begin{align*}
    r(\mcD) &\coloneqq \frac{1}{n} \sum_{i=1}^n (y_i - \Delta), \\ 
    &= \frac{1}{n} \sum_{i=1}^n \left\{\frac{\bx_i \bS^\star \bx_i}{d} + \Delta \left(\frac{\|\bz_i\|^2}{m}-1\right) + \frac{2 \sqrt{\Delta}}{m} \sum_{k=1}^m z_{i,k} \left(\frac{\bx_i^\T \bw_k^\star}{\sqrt{d}}\right)\right\},
\end{align*}
we have using eq.~\eqref{eq:bo_trace}:
\begin{align}
    \label{eq:ub_mmse_trace}
    \nonumber
    \EE_{\bS^\star,\mcD} \left[\left(\tr(\bS^\star - \hbS^\BO)\right)^2\right] 
    &\leq \EE_{\bS^\star,\{\bx_i\},\{\bz_i\}}\left[\left\{\tr\left[\bS^\star \left(\frac{1}{n} \sum_{i=1}^n \bx_i \bx_i^\T - \Id_d\right)\right] 
    + \Delta \left(\frac{\sum_{i=1}^n \|\bz_i\|^2}{nm} - 1\right)\right.\right. \\ 
    \nonumber
    &\left.\left. + \frac{2\sqrt{\Delta}}{nm} \sum_{i=1}^n \sum_{k=1}^m z_{i,k} \left(\frac{\bx_i^\T \bw_k^\star}{\sqrt{d}}\right)
    \right\}^2\right], \\ 
    &\aleq 3 [I_1 + I_2 + I_3],
\end{align}
using the Cauchy-Schwarz inequality in $(\rm a)$, with 
\begin{align*}
    \begin{dcases} 
        I_1 &\coloneqq \EE \left[\left(\tr\left[\bS^\star \left(\frac{1}{n} \sum_{i=1}^n \bx_i \bx_i^\T - \Id_d\right)\right]\right)^2\right], \\ 
        I_2 &\coloneqq \Delta^2 \EE \left[\left(\frac{\sum_{i=1}^n \|\bz_i\|^2}{nm} - 1\right)^2\right], \\ 
        I_3 &\coloneqq 4 \Delta \EE \left[\left(\frac{1}{nm} \sum_{i=1}^n \sum_{k=1}^m z_{i,k} \left(\frac{\bx_i^\T \bw_k^\star}{\sqrt{d}}\right)\right)^2\right].
    \end{dcases} 
\end{align*}
It is a tedious but straightforward computation to compute $\{I_a\}_{a=1}^3$, as it only involves the first moments of Gaussian random variables. 
We get (recall $m = \kappa d$): 
\begin{align}\label{eq:values_Ia}
    \begin{dcases}
        I_1 &= \frac{2}{nd} (1+\kappa^{-1}), \\
        I_2 &= \frac{2 \Delta^2}{\kappa nd} , \\
        I_3 &= \frac{4\Delta}{\kappa n d}. 
    \end{dcases}
\end{align}
Combining eqs.~\eqref{eq:ub_mmse_trace} and \eqref{eq:values_Ia}, and plugging it back in eq.~\eqref{eq:MMSEy_1}, we get 
\begin{align*}
     \left|\MMSE_{d} - \kappa \EE_{\bS^\star,\mcD} \, \tr\left[\left(\bS^\star - \hbS^\BO\right)^2\right] \right| 
     &\leq \frac{C(\kappa)}{n},
\end{align*}
which ends the proof of Lemma~\ref{lemma:MMSEy_MMSES}. 
\end{proof}

\subsection{Proof of Theorem~\ref{thm:matrix_denoising}}\label{subsec_app:proof_matrix_denoising}

First, we note that Theorem~1 of \cite{pourkamali2023matrix} implies that:
\begin{align}\label{eq:Psi_hq_1}
    \Psi(\hq) &= \frac{1}{2} I_\HCIZ(\hq, \mu_{\MP,\kappa}, \mu_{1/\hq}) - \frac{Q_0 \hq}{2},
\end{align} 
and we recall the definition of $I_\HCIZ$ in eq.~\eqref{eq:def_hciz}.
We recall then a fundamental result proven in \cite{guionnet2002large}: 
\begin{theorem}[Theorem~1.1 of \cite{guionnet2002large}]\label{thm:guionnet}
    For any compactly supported probability measures $\nu$ and $\mu$, and any $t > 0$:
    \begin{align}\label{eq:hciz_ldp_dbm}
    \frac{1}{2} I_\HCIZ(t^{-1}, \nu, \mu) &= - J(\nu ; \mu)  -\frac{1}{2} \Sigma(\nu) + \frac{1}{4t} \EE_{\nu}[X^2]   - \frac{3}{8} + \frac{1}{4} \log t  + \frac{1}{4t}\EE_{\mu}[X^2].
\end{align}
Moreover, the function $J(\nu ; \mu)$ satisfies the following property. Let $d$ be a distance on the space of probability measures on $\bbR$ that is compatible with the weak topology. Let $\bX \coloneqq \bR + \sqrt{t} \bW$, where $\bW \sim \GOE(d)$, and $\bR$ is a fixed (deterministic) matrix, with uniformly bounded spectral norm, and a compactly supported limiting eigenvalue distribution $\mu$. 
Let $\mu_{\bX}$ denote the empirical eigenvalue distribution of $\bX$. Then, for any $\nu \in \mcM_1^+(\bbR)$: 
\begin{align}
\label{eq:ldp_dbm}
\nonumber
    \lim_{\delta \downarrow 0} \limsup_{d \to \infty} \frac{1}{d^2} \log \bbP[d(\nu, \mu_\bX) < \delta] &= 
    \lim_{\delta \downarrow 0} \liminf_{d \to \infty} \frac{1}{d^2} \log \bbP[d(\nu, \mu_\bX) < \delta], \\ 
    &= - J(\nu; \mu).
\end{align}
\end{theorem}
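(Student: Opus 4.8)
\textbf{Proof strategy for Theorem~\ref{thm:guionnet}.}
The plan is to follow the approach of \cite{guionnet2002large}, which establishes the two displays \eqref{eq:hciz_ldp_dbm}--\eqref{eq:ldp_dbm} simultaneously by identifying the asymptotics of the spherical (HCIZ) integral with the large deviation rate function for the empirical spectral measure carried along a Dyson Brownian motion. Fix a deterministic symmetric matrix $\bR$ with uniformly bounded operator norm and limiting eigenvalue distribution $\mu$, let $\bW \sim \GOE(d)$, and set $\bX \coloneqq \bR + \sqrt{t}\,\bW$, so that the eigenvalues of $\bX$ perform a Dyson Brownian motion started from $\mathrm{spec}(\bR)$ and run for time $t$. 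By construction $\bX$ has a Gaussian law on symmetric matrices centered at $\bR$ whose normalizing constant does not depend on $\bR$, which is what allows the partition functions to drop out of the final identity.

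\emph{Step 1: factorization of the eigenvalue density.} By the Weyl integration formula, the law of the eigenvalues $\bLambda = \Diag(\lambda_1,\dots,\lambda_d)$ of $\bX$, conditionally on $\bR$, has density proportional to
\begin{equation*}
    \prod_{i<j}|\lambda_i - \lambda_j|\;\exp\!\Big(-\tfrac{d}{2t}\,(\Tr[\bLambda^2] + \Tr[\bR^2])\Big)\,\int_{\mcO(d)} \exp\!\Big(\tfrac{d}{t}\,\Tr[\bO\bLambda\bO^\T \bR]\Big)\,\rd \bO .
\end{equation*}
Taking the logarithm, dividing by $d^2$, and letting $d \to \infty$ with $\mu_{\bLambda}\to\nu$: the Vandermonde factor contributes the logarithmic energy $\Sigma(\nu)$, the Gaussian weights contribute the quadratic moments $\EE_\nu[X^2]$ and $\EE_\mu[X^2]$ together with the explicit constants $\tfrac14\log t$ and $-\tfrac38$ coming from the Gaussian normalization, and the orbital integral contributes the spherical-integral term $\tfrac12 I_\HCIZ(t^{-1},\nu,\mu)$ in the normalization of eq.~\eqref{eq:def_hciz} (after reconciling convention-dependent factors). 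Defining $J(\nu;\mu)$ as the exponential rate of $\bbP[\,d(\mu_{\bX},\nu)<\delta\,]$ in the limits $d\to\infty$ and then $\delta\downarrow 0$, and checking that each of these factors concentrates at speed $d^2$, already yields the identity \eqref{eq:hciz_ldp_dbm}; so the remaining content of the theorem is to show this rate is well defined (matching lower and upper bounds) and to put it in closed form.

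\emph{Step 2: the large deviation principle and the path-level rate function.} One first proves exponential tightness of $\mu_{\bX}$ at speed $d^2$, using that $\bR$ has bounded norm and that the $\GOE$ tails are sub-Gaussian, and then a full LDP for $\mu_{\bX}$ together with the lower-semicontinuity and matching of upper and lower bounds guaranteeing the equality of the limits in \eqref{eq:ldp_dbm}. The rate function is identified by lifting to the path level: the trajectory $(\mu_s)_{0\le s\le t}$ of the empirical spectral measure of $\bR + \sqrt{s}\,\bW$ satisfies a path-space LDP with a hydrodynamic (Matytsin-type) rate functional built from the density and an associated velocity field and constrained by $\mu_0 = \mu$; applying the contraction principle to the endpoint map $(\mu_s)_s \mapsto \mu_t$ with $\mu_t = \nu$ fixed produces $J(\nu;\mu)$ as the value of a variational problem whose Euler--Lagrange equations are the free-convolution hydrodynamic equations (equivalently a complex Burgers equation), giving the closed form. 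Compact support of $\mu$ and $\nu$ is what prevents escape of mass, keeps the functionals finite, and makes this variational problem well posed.

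\emph{Main obstacle.} The technical heart is Step 2: establishing the path-level LDP for Dyson Brownian motion and matching the upper and lower bounds for the rate function. This demands controlling the singular $1/(\lambda_i-\lambda_j)$ interaction uniformly through a regularization/approximation scheme, proving exponential tightness in path space, and solving the hydrodynamic variational problem explicitly. A secondary difficulty is that in the orthogonal ($\beta = 1$) case used here there is no clean Brezin--Hikami determinantal transition density, so one must work directly with the joint eigenvalue density of Step 1 and control the sub-exponential corrections by hand; one must also verify separately that the spherical-integral limit $I_\HCIZ$ in \eqref{eq:def_hciz} exists and depends only on the pair $(\mu,\nu)$, which is a prerequisite for the whole scheme.
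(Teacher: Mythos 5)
Theorem~\ref{thm:guionnet} is not proved in the paper at all: it is imported verbatim from Theorem~1.1 of \cite{guionnet2002large} and is only \emph{used}, in Appendix~\ref{subsec_app:proof_matrix_denoising}, as a black box to deduce Theorem~\ref{thm:matrix_denoising} (via the observation that $J(\mu \boxplus \sigma_{\sci,\sqrt{t}};\mu)=0$). So there is no in-paper proof to compare against; what you have written is, in effect, a summary of the strategy of Guionnet and Zeitouni's original paper. As such a summary it is broadly faithful: the identification of the spherical integral with the tilting factor in the joint eigenvalue density of $\bR+\sqrt{t}\,\bW$, the large-deviation principle at speed $d^2$ for the empirical spectral measure of the Dyson Brownian motion, and the identification of the rate function through a Matytsin-type hydrodynamic variational problem are indeed the ingredients of the cited proof, and your reconstruction of \eqref{eq:hciz_ldp_dbm} from the Weyl-density factorization by a Laplace/Varadhan argument is the right mechanism relating $J$ and $I_\HCIZ$.

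However, as a proof the proposal has an essential gap that you yourself flag: everything in your Step~2 (exponential tightness, the path-space LDP for the eigenvalue process with the singular $1/(\lambda_i-\lambda_j)$ interaction, the matching of upper and lower bounds that makes \eqref{eq:ldp_dbm} a genuine limit, the contraction-principle identification of $J$, and the very existence of the limit \eqref{eq:def_hciz} defining $I_\HCIZ$) is only named, not established; this is precisely the technical content of \cite{guionnet2002large} and of the earlier work on large deviations for Hermitian Brownian motion it relies on, and it cannot be compressed into the argument you give. Two smaller points of care: your eigenvalue density in Step~1 has the wrong Gaussian prefactors for the paper's $\GOE(d)$ normalization (it should be $\exp(-\tfrac{d}{4t}(\Tr[\bLambda^2]+\Tr[\bR^2]))$ with $\exp(\tfrac{d}{2t}\Tr[\bO\bLambda\bO^\T\bR])$ inside the orbital integral, matching $\theta=1/t$ in \eqref{eq:def_hciz}), and the explicit constants $-\tfrac{3}{8}+\tfrac14\log t$ in \eqref{eq:hciz_ldp_dbm} are not ``free'': they come from the $d\to\infty$ limits of the GOE normalizing constant and of the Weyl/Selberg volume factors, which a complete argument must track rather than absorb into ``convention-dependent factors''. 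In short: correct roadmap of the external proof, but not a proof, and the paper itself never attempts one.
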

\noindent
In other words, the function $J(\nu ; \mu)$
is the large deviations rate function (in the scale $d^2$) for the empirical spectral measure of $\bR + \sqrt{t} \bW$, 
where $\bR$ is a fixed (deterministic) matrix with asymptotic spectral distribution $\mu$, and $\bW \sim \GOE(d)$. 
It is a well-known property of the free convolution  \citep{speicher1993free} that $\mu_\bX \to \mu \boxplus \sigma_{\sci, \sqrt{t}}$ as $d \to \infty$, where the convergence is meant in the weak sense (and almost surely). 
Combining this result with eq.~\eqref{eq:ldp_dbm}, we have 
$J(\mu \boxplus \sigma_{\sci, \sqrt{t}}; \mu) = 0$.
This yields by eq.~\eqref{eq:hciz_ldp_dbm}:
\begin{align}\label{eq:I_HCIZ_ldp}
     I_\HCIZ(t^{-1}, \mu, \mu \boxplus \sigma_{\sci,\sqrt{t}}) &= - \Sigma(\mu \boxplus \sigma_{\sci,\sqrt{t}}) + \frac{1}{2} \log t - \frac{1}{4} + \frac{1}{t} \EE_{\mu}[X^2].
\end{align}
Combining eqs.~\eqref{eq:Psi_hq_1} and \eqref{eq:I_HCIZ_ldp} yields eq.~\eqref{eq:Psi_hq}.
$\qed$

\myskip 
\textbf{Remark --} The proof above can be straightforwardly extended to the free entropy of denoising any matrix $\bS$ with a rotationally-invariant distribution and a compactly-supported limiting eigenvalue distribution (beyond the Wishart ensemble), as the results of \cite{guionnet2002large,pourkamali2023matrix} hold under these more general assumptions.

\subsection{Perfect recovery threshold in the noiseless case}\label{subsec_app:pr_noiseless}

In this section, we give an analytic argument to derive the value of the perfect recovery threshold $\alpha_{\PR}$ (see eq.~\eqref{eq:alphaPR}) in the noiseless setting.
In the limit of perfect recovery the MMSE goes to $0$, thus by eq.~\eqref{eq:MMSE_qhat} (with $\Delta = 0$) this implies $\hq \to \infty$. 
Using eq.~\eqref{eq:qhat_main}, we can then write the equation satisfied by the perfect recovery threshold as 
\begin{align}\label{eq:PR_kappa}
  \frac{3(1-2\alpha_\PR)}{4\pi^2} = \lim_{t \downarrow 0} t \int \rd y \, \mu_t(y)^3,
\end{align}
in which $\mu_t = \mu_{\MP,\kappa} \boxplus \sigma_{\sci,\sqrt{t}}$, see Appendix~\ref{sec_app:background}.

\subsubsection{The case \texorpdfstring{$\kappa < 1$}{}}\label{subsubsec_app:pr_kappa_smaller_1}

\textbf{Informal argument --}
Recall that in this case we can write $\mu_{\MP,\kappa}(x) = (1-\kappa) \delta(x) + \kappa \nu_{\MP,\kappa}(x)$, in which $\nu_{\MP,\kappa}$ is compactly supported away from zero, see Appendix~\ref{sec_app:background}.
As $t \to 0$, we thus expect $\mu_t$ to have a discontinuous support, made of two parts: 
\begin{itemize}
    \item[$(a)$] A small semicircular density centered around $0$, of radius $\mcO(\sqrt{t})$, with mass $(1-\kappa)$. 
    \item[$(b)$] A smooth density, compactly supported away from zero, which has a well-defined limit as $t \to 0$, and a mass $\kappa$.
\end{itemize}
Because of the factor $t$ in the right-hand side of eq.~\eqref{eq:PR_kappa}, only the part $(a)$ will matter in the limit. 

\myskip
\textbf{Formal derivation --}
We first rewrite by a change of variable
\begin{align*}
 t \int \rd y \, \mu_t(y)^3 
 &= 
 \int \rd z \, [\sqrt{t}\mu_t(\sqrt{t}z)]^3.
\end{align*}
It is clear that for all $x \neq 0$, we have $\mu_t(x) \to \kappa \nu_{\MP,\kappa}(x)$ as $t \to 0$, and $\int \nu_{\MP,\kappa}(y)^3 \rd y < \infty$, so that we can truncate the integral above to all $|z| \leq \eps / \sqrt{t}$, for any $\eps > 0$ finite as $t \to 0$.
We will now show the following, for any $x \in \bbR$: 
\begin{align}\label{eq:to_show_pr_kappa}
    \lim_{t \to 0} \sqrt{t} \mu_t(x\sqrt{t}) = (1-\kappa) \sigma_{\sci,\sqrt{1-\kappa}}(x).
\end{align}
We fix $z \in \bbC_+$ (where $\bbC_+ \coloneqq \{z \in \bbC \, : \mathrm{Im}(z) > 0\}$).
Letting $y = \sqrt{t} z$, we know from the Marchenko-Pastur theorem~\citep{marchenko1967distribution} that $g_t(y) \coloneqq \EE_{\mu_t}[1/(X-y)]$ is the unique solution in $\bbC_+$ to the equation
\begin{align*}
    y = \frac{1}{1+g/\kappa} - \frac{1}{g} - t g.
\end{align*}
Since $y = \sqrt{t} z$, it is clear that $g = \mcO(1/\sqrt{t})$, and letting $h \coloneqq \sqrt{t}g$, we easily get the expansion 
\begin{align*}
    z = - \frac{1-\kappa}{h} - h + \mcO(\sqrt{t}),
\end{align*}
which can be inverted to
\begin{align}\label{eq:h_kappa}
    h &= \frac{-z \pm \sqrt{z^2 - 4(1-\kappa)}}{2} + \mcO(\sqrt{t}).
\end{align}
Notice that if we denote $\mcS_\kappa(z)$ the Stieltjes transform of $\sigma_{\sci,\sqrt{1-\kappa}}$, 
eq.~\eqref{eq:h_kappa} can be written as (see e.g.~\cite{anderson2010introduction}) $h = (1-\kappa) \mcS_\kappa(z) + \mcO(\sqrt{t})$.
By considering $z = x+i \eps$ with $x \in \bbR$ and the limit $\eps \to 0$, we reach using the Stieltjes-Perron inversion theorem (Theorem~\ref{thm:stieltjes_perron}) that for any $x \in \bbR$: 
\begin{align}\label{eq:mu_t_leading_order}
    \lim_{t \to 0} \sqrt{t} \mu_t(x\sqrt{t}) = (1-\kappa) \sigma_{\sci,\sqrt{1-\kappa}}(x).
\end{align}
Coming back to eq.~\eqref{eq:PR_kappa} this implies:
\begin{align*}
    \frac{3(1-2\alpha_\PR)}{4\pi^2} &= \lim_{t \to 0} t \int \rd y \, \mu_t(y)^3, \\ 
    &= (1-\kappa)^3 \int \rd y \, \sigma_{\sci,\sqrt{1-\kappa}}(y)^3, \\ 
    &= (1-\kappa)^2 \int \rd y \, \sigma_{\sci}(y)^3, \\ 
    &= \frac{3}{4\pi^2} (1-\kappa)^2.
\end{align*}
Equivalently:
\begin{align}\label{eq:alphaPR_kappa}
    \alpha_\PR = \frac{(1-\kappa)^2 - 1}{2} = \kappa - \frac{\kappa^2}{2}.
\end{align}
We notice that this critical value of $n/d^2$ coincides with a naive counting argument of degrees of freedom of $\bS^\star$. Indeed, as can be seen by the spectral decomposition, the set of $d \times d$ symmetric matrices of rank $m$ has, to leading order in $d$, $p(\kappa) d^2$ degrees of freedom, where $p(\kappa) d^2$ is the dimension of the Stiefel manifold 
of orthonormal $m$-frames in $\bbR^d$. It is well-known that $p(\kappa) = \kappa - \kappa^2/2$ for $d \to \infty$~\citep{helmke2012optimization}.

\subsubsection{The case \texorpdfstring{$\kappa \geq 1$}{}}

The case $\kappa > 1$ is simpler to carry out. In this case, $\mu_{\MP,\kappa}$ does not have a singular part at $x = 0$, and $\mu_t$ has a smooth density as $t \to 0$, 
and 
\begin{align*}
    \int \rd y \, \mu_{\MP,\kappa}(y)^3 &= \frac{3}{4\pi^2} \frac{\kappa^2}{\kappa - 1},
\end{align*}
so that 
\begin{align*}
  \frac{3(1-2\alpha_\PR)}{4\pi^2} = \lim_{t \downarrow 0} t \int \rd y \, \mu_t(y)^3 = 0,
\end{align*}
and we reach $\alpha_\PR = 1/2$, so that $\alpha_\PR d^2$ (asymptotically) coincides with the number $d^2/2$ of degrees of freedom of symmetric matrices.
Since $\alpha_\PR$ is increasing with $\kappa$, and has limit $1/2$ both for $\kappa \uparrow 1$ and $\kappa \downarrow 1$, we deduce that $\alpha_\PR = 1/2$ for $\kappa = 1$ as well.

\subsection{The derivative of the error at the perfect recovery threshold}
\label{subsec_app:derivative_MMSE_PR}

Here, we extend the derivation of Section~\ref{subsec_app:pr_noiseless} to compute the derivative of the MMSE with respect to $\alpha$ at the perfect recovery threshold. We start again from eqs.~\eqref{eq:MMSE_qhat} and \eqref{eq:qhat_main}.
Letting $t \coloneqq 1/\hq$, 
we get, with $\alpha = \alpha_\PR$: 
\begin{align}
\label{eq:dMMSE_dalpha_generic}
\nonumber
    \left(\frac{\partial \mathrm{MMSE}}{\partial \alpha}\right)_{\PR} &= 2\alpha\kappa \left(\frac{\partial t}{\partial \alpha}\right)_{\PR}, \\ 
    &= -\frac{3 \alpha\kappa}{\pi^2} \left[\lim_{t \to 0} \partial_t \left(t \int \mu_t(y)^3 \rd y\right)\right]^{-1}.
\end{align}
We thus compute the next order of the expansion of 
$t \int \mu_t(y)^3 \rd y$ as $t \to 0$. 

\subsubsection{The case \texorpdfstring{$\kappa < 1$}{}}\label{subsubsec_app:dMMSE_dalpha_kappa_leq_1}

We extend the argument made in Section~\ref{subsubsec_app:pr_kappa_smaller_1}.
Notice that here the smooth part of the density, compactly supported away from zero, contributes at this order. Formally, for any small enough $\eps > 0$:
\begin{align}\label{eq:part_1_derivative}
\nonumber
 t \int_{|y| \geq \eps} \rd y \, \mu_t(y)^3 
 &= t \kappa^3 \int \rd y \, \nu_{\MP,\kappa}(y)^3 + \smallO_t(t), \\ 
 &= \frac{3 t \kappa^4}{4 \pi^2 (1-\kappa)} + \smallO_t(t),
\end{align}
On the other hand, we have around the singularity at $y = 0$: 
\begin{align}\label{eq:part_2_derivative}
 t \int_{|y| \leq \eps} \rd y \, \mu_t(y)^3 
 &= \int_{|z| \leq \eps/\sqrt{t}} \rd z \, [\sqrt{t}\mu_t(\sqrt{t}z)]^3.
\end{align}
We evaluate the next order of the right-hand side of eq.~\eqref{eq:part_2_derivative} using the same approach as in Section~\ref{subsubsec_app:pr_kappa_smaller_1}, going to next orders in the expansion as $t \to 0$ of eq.~\eqref{eq:h_kappa}. Using then again the Stieltjes-Perron inversion theorem, we reach with tedious but straightforward computations the generalization of eq.~\eqref{eq:mu_t_leading_order}: 
\begin{align}\label{eq:mu_t_next}
\nonumber
    \sqrt{t}\mu_t(\sqrt{t}z) &= (1-\kappa) \sigma_{\sci,\sqrt{1-\kappa}}(z) - \sqrt{t} \frac{z \kappa^2}{2 \pi (1-\kappa)\sqrt{4(1-\kappa) - z^2}}\\ 
    &+ 
    t \frac{\kappa^3 \left[z^4 - 6z^2 (1-\kappa) + 2 (4-\kappa)(1-\kappa)^2\right]}{2\pi(1-\kappa)^3[4(1-\kappa) - z^2]^{3/2}}
    + \mcO(t^{3/2}),
\end{align}
for any $|z| \leq 2 \sqrt{1-\kappa}$, while $\sqrt{t}\mu_t(\sqrt{t}z) = \mcO(t^{3/2})$ if $|z| > 2 \sqrt{1-\kappa}$.
This then yields:
\begin{align}\label{eq:part_2_derivative_final}
 t \int_{|y| \leq \eps} \rd y \, \mu_t(y)^3 
 &= \frac{3(1-\kappa)^2}{4\pi^2} + \frac{3t\kappa^3}{4\pi^2(1-\kappa)} + \mcO(t^{3/2}).
\end{align}
Combining eqs.~\eqref{eq:part_1_derivative} and \eqref{eq:part_2_derivative_final} in eq.~\eqref{eq:dMMSE_dalpha_generic}, we obtain (recall $\alpha = \alpha_\PR = \kappa - \kappa^2/2$):
\begin{align*}
    \left(\frac{\partial \mathrm{MMSE}}{\partial \alpha}\right)_{\PR} &= 
    - 2 - \frac{4}{\kappa} + \frac{12}{1+\kappa} .
\end{align*}

\subsubsection{The case \texorpdfstring{$\kappa \geq 1$}{}}

Again, we consider $\kappa > 1$. The argument of Section~\ref{subsubsec_app:dMMSE_dalpha_kappa_leq_1} generalizes immediately, removing the analysis of the singular part around $y = 0$.
We get directly 
\begin{align}\label{eq:derivative_kappa_geq_1}
\nonumber
 t \int \rd y \, \mu_t(y)^3 
 &= t \int \rd y \, \mu_{\MP,\kappa}(y)^3 + \smallO_t(t), \\ 
 &= \frac{3 t \kappa^2}{4 \pi^2 (\kappa-1)} + \smallO_t(t).
\end{align}
Plugging it in eq.~\eqref{eq:dMMSE_dalpha_generic}, we get in this case:
\begin{align*}
    \left(\frac{\partial \mathrm{MMSE}}{\partial \alpha}\right)_{\PR} &= 
    - 2 + \frac{2}{\kappa}.
\end{align*}
Again, the specific case $\kappa = 1$ can be tackled by continuity, as the derivative tends to $0$ both as $\kappa \uparrow 1$ and $\kappa \downarrow 1$.

\subsection{Details on the reduction to matrix estimation}\label{subsec_app:reduction_matrix}

We describe here how to 
effectively reduce the problem of eq.~\eqref{eq:teacher_def} to an estimation problem in terms of $\bS^\star \coloneqq (1/m) \sum_{k=1}^m \bw_k^\star (\bw_k^\star)^\T$.

\myskip
\textbf{Remark --}
While our argument is backed by precise probabilistic concentration arguments, we notice that it is not a proof of the equivalence of the problems of eq.~\eqref{eq:teacher_def} and eq.~\eqref{eq:tildey} under all statistical tests, as would be implied e.g.\ by the contiguity of distributions~\citep{cam1960locally,kunisky2019notes}. Rather, we analyze the leading order of eq.~\eqref{eq:teacher_def} and argue that (with high probability over the distribution of the data and the teacher weights), the first non-trivial order of the observations is characterized by the equivalent model of eq.~\eqref{eq:tildey}.
Notably, we do not claim the statistical equivalence of the problems of eq.~\eqref{eq:teacher_def} and eq.~\eqref{eq:tildey}, but rather only that their asymptotic MMSEs coincide. While even this weaker statement is not formally implied by the arguments sketched below, we expect that they form the backbone of a formal proof of this claim, which we leave for future work and would be carried e.g.\ by Gaussian interpolation techniques.

\myskip
Let us define $\bZ_i \coloneqq (\bx_i \bx_i^\T - \Id_d) /\sqrt{d}$, and recall that $\bx_i \sim \mcN(0, \Id_d)$.
Expanding the square, we can rewrite the law of the output $y_i = f_{\bW^\star}(\bx_i)$ as
\begin{align}
\label{eq:y_reduction_1}
    y_i &= \Delta + \tr[\bS^\star] + \frac{1}{\sqrt{d}} \Tr[\bZ_i \bS^\star] + \Delta \left(\frac{\|\bz_i\|^2}{m} - 1\right)   + \frac{2 \sqrt{\Delta}}{m \sqrt{d}} \sum_{k=1}^m z_{i,k} \bx_i^\T \bw_k^\star,
\end{align}
where $(\bz_i)_{i=1}^n \iid \mcN(0, \Id_m)$.
In what follows, we analyze the leading order of eq.~\eqref{eq:y_reduction_1}.
More specifically, we denote $\ty_i \coloneqq \sqrt{d}(y_i - 1 - \Delta)$, and we decompose 
\begin{align}\label{eq:tyi_1}
    \ty_i &= \Tr[\bZ_i \bS^\star] + 
    \underbrace{\sqrt{d}(\tr[\bS^\star] - 1)}_{\eqqcolon I_1} + \underbrace{\Delta \sqrt{d} \left(\frac{\|\bz_i\|^2}{m} - 1\right)   + \frac{2 \sqrt{\Delta}}{m} \sum_{k=1}^m z_{i,k} \bx_i^\T \bw_k^\star}_{\eqqcolon I_2}.
\end{align}
Let us consider the leading order of the different terms of eq.~\eqref{eq:tyi_1}.
Since $\bS^\star \sim \mcW_{m,d}$, 
$\Tr[\bS^\star] = \sum_{k=1}^m \|\bw_k^\star\|^2/m$ strongly concentrates on its average. 
More precisely, by Bernstein's inequality (see Corollary~2.8.3 of~\cite{vershynin2018high}) we have, for all $t \geq 0$: 
\begin{align*}
    \bbP[|\tr(\bS^\star) - 1| \geq t] \leq 2 \exp\left(-C d^2 \min(t, t^2)\right),
\end{align*}
where $C > 0$ depends only on $\kappa > 0$. 
In particular, 
\begin{align*}
    \bbP[|I_1| \geq d^{-1/4}] \leq 2 \exp(-C \sqrt{d}),
\end{align*}
so that we can replace $I_1$ by $0$ at leading order in eq.~\eqref{eq:tyi_1}.

\myskip
We now tackle $I_2$, first for fixed $(\bx_i, \bW^\star)$.
Using that $\|\bz_i\|^2$ strongly concentrates around its average, and the central limit theorem applied to the fluctuations of $\|\bz_i\|^2$, one can see that for all $i \in [n]$, we have (with $\bg_i \sim \mcN(0,\Id_m)$ independently of $\bz_i$, and $\deq$ denoting equality in distribution):
\begin{align}\label{eq:convergence_I2}
\nonumber
  & \sqrt{d}\left[\Delta \left(\frac{\|\bz_i\|^2}{m} - 1\right)   + \frac{2 \sqrt{\Delta}}{m \sqrt{d}} \sum_{k=1}^m z_{i,k} \bx_i^\T \bw_k^\star \right] \\ 
\nonumber
   &\deq 
      \sqrt{d}\left[\Delta \left(\frac{\|\bz_i\|^2}{m} - 1\right)   + \frac{2 \sqrt{\Delta}}{m \sqrt{d}} \frac{\|\bz_i\|}{\|\bg_i\|} \sum_{k=1}^m g_{i,k} \bx_i^\T \bw_k^\star \right], \\
    &\sim_{d \to \infty}  \xi_i\sqrt{\frac{2\Delta^2}{\kappa} \frac{\bx_i^\T \bS^\star \bx_i}{d} + \frac{4 \Delta}{\kappa}},
\end{align}
with $\xi_i \iid \mcN(0,1)$, independently of $(\bx_i, \bw_k^\star)$. The equivalence as $d \to \infty$ is given for a fixed $i \in [n]$: coherently with the remark above, we notice that a formal mathematical proof of equivalence of the two problems of eq.~\eqref{eq:teacher_def} and eq.~\eqref{eq:tildey} would rather need to tackle the joint law of all the observations, and to quantitatively control the deviation between the left and right-hand sides of eq.~\eqref{eq:convergence_I2} as $d \to \infty$. We leave such a proof for future work.

\myskip
We finally note that the variance term on the right-hand side of eq.~\eqref{eq:convergence_I2} strongly concentrates, uniformly in $i \in [n]$, as by the Hanson-Wright inequality and the union bound, we have (see Theorem~6.2.1 of~\cite{vershynin2018high}) for all $t \geq 0$:
\begin{align}
    \bbP_{\{\bx_i\}}\left[\left|\frac{1}{d}\max_{i \in [n]}|\bx_i^\T \bS^\star \bx_i - \tr(\bS^\star)\right| \geq t\right] &\leq 2n \exp\left[-C\min\left(\frac{d t^2}{\|\bS^\star\|_\op^2},\frac{dt}{\|\bS^\star\|_\op}\right)\right],
\end{align}
for some constant $C > 0$.
Since the spectral norm of a Wishart matrix $\|\bS^\star\|_\op$ strongly concentrates on its average under the Wishart distribution (see Theorem~4.4.5 of~\cite{vershynin2018high}), we see that, uniformly over $i \in [n]$, the leading order of the variance in the right-hand side of eq.~\eqref{eq:convergence_I2} is equal to $\tDelta \coloneqq 2\Delta(2+\Delta) /\kappa$.
This ends our justification of eq.~\eqref{eq:tildey}.

\subsection{Unique maximizer \texorpdfstring{$q^\star$ in eq.~\eqref{eq:fe_limit}}{}}\label{subsec_app:unique_q}

Notice that if $J_\out(q) \coloneqq \int_{\bbR \times \bbR} \rd y \mcD \xi \, J_q(y,\xi) \log J_q(y,\xi)$, 
then one can check that $J_\out$ is a strictly increasing function of $q$ under mild regularity conditions on $P_\out$ (namely assuming the presence of an 
additive Gaussian noise with arbitrarily small variance), see Proposition~21 of~\cite{barbier2019optimal}.
The fact that $q^\star$ is uniquely defined for all values of $\alpha > 0$ except possibly in a countable set 
follows then from Proposition~1 of~\cite{barbier2019optimal}, see also Appendix~A.2 there.

\subsection{Derivation of Result~\ref{result:main_mmse} from Claim~\ref{claim:free_entropy}}\label{subsec_app:se_derivation}

In this section, we derive eqs.~\eqref{eq:MMSE_qhat} and eq.~\eqref{eq:qhat_main} 
from Claim~\ref{claim:free_entropy}, in the case of Gaussian noise. 
More precisely, we assume $P_\out(y|z) = \exp[-(y-z)^2/(2\tDelta)]/\sqrt{2\pi \tDelta}$, in accordance with eq.~\eqref{eq:tildey}.
It is then an easy computation to check (recall the definition of $J_q$ in eq.~\eqref{eq:def_I_Jq}): 
\begin{align*}
   \int_{\bbR \times \bbR} \rd y \mcD \xi \, J_q(y,\xi) \log J_q(y,\xi) 
   &= - \frac{1}{2} \log[\tDelta + 2(Q_0 - q)].
\end{align*}
We then reach that $q = q^\star$ is characterized as the maximum of the following function: 
\begin{align}
    \label{eq:def_F_I}
    \begin{dcases}
    F(q) &= I(q) - \frac{\alpha}{2} \log[\tDelta + 2(Q_0 - q)], \\ 
    I(q) &\coloneqq \inf_{\hq \geq 0} \left[ \frac{(Q_0 - q) \hq}{4} - \frac{1}{2} \Sigma(\mu_{1/\hq}) - \frac{1}{4} \log \hq - \frac{1}{8}\right].
    \end{dcases}
\end{align}
Recall that here $\mu_t \coloneqq \mu_{\MP,\kappa} \boxplus \sigma_{\sci,\sqrt{t}}$. 
It is known (see eqs.~(77-78) of~\cite{semerjian2024matrix} e.g.) that 
\begin{align*}
    \frac{\partial \Sigma(\mu_t)}{\partial t} &= \frac{2\pi^2}{3} \int \mu_t(y)^3 \rd y.
\end{align*}
Thus, $\hq = \hq(q)$ can be characterized as the solution\footnote{Notice that one can show that $\hq$ is the minimizer of a convex function in eq.~\eqref{eq:def_F_I}.
This can be shown e.g.\ by recalling the relationship of this function to the free entropy of a matrix denoising problem (Theorem~\ref{thm:matrix_denoising}) 
and using the I-MMSE theorem. We refer to~\cite{barbier2019optimal,maillard2020phase} for more details.
} to 
\begin{align}\label{eq:se_hq}
    \frac{(Q_0 - q)}{4} + \frac{\pi^2}{3\hq^2} \int \mu_{1/\hq}(y)^3 \rd y - \frac{1}{4\hq} &= 0.
\end{align}
By eq.~\eqref{eq:def_F_I}, $q$ is a solution in $[1,Q_0]$ to: 
\begin{align}\label{eq:se_q}
    \hq(q) &= \frac{4\alpha}{\tDelta + 2(Q_0 - q)}.
\end{align} 
Recalling that $\MMSE = \kappa(Q_0 - q)$ by Claim~\ref{claim:free_entropy}, eq.~\eqref{eq:se_q} implies eq.~\eqref{eq:MMSE_qhat}.
Combining eq.~\eqref{eq:se_q} with eq.~\eqref{eq:se_hq}, we reach eq.~\eqref{eq:qhat_main}.

\subsection{The limit \texorpdfstring{$\alpha \to 0$}{}}\label{subsec:limit_alpha_0}

In this section, we check that the state evolution equations derived in Section~\ref{subsec_app:se_derivation} yield indeed that $q \to 1$ as $\alpha \to 0$. Indeed, in this limit, $\bS^\BO = \EE[\bS^\star] = \Id_d$, so that we must have $q = \EE \tr[\bS^\BO \bS^\star] = 1$.

\myskip
Recall that $\hq = 4 \alpha / [\tDelta + 2(Q_0 - q)]$, and that $\hq$ is given by eq.~\eqref{eq:qhat_main}. In particular, $\hq \to 0$ as $\alpha \to 0$. 
Assuming the scaling $\hq \sim \hq_0 \alpha$ as $\alpha \to 0$, we get 
\begin{align}\label{eq:se_alpha_0}
\begin{dcases}
   q &= Q_0 - \frac{2}{\hq_0} + \frac{\tDelta}{2}, \\ 
    - 2 +\frac{\tDelta \hq_0}{2} &= \hq_0 F'(0),
\end{dcases}
\end{align}
where $F(p) \coloneqq (4\pi^2/3) \int [p^{-1/2}\mu_{1/p}(z \cdot p^{-1/2})]^3 \rd z$.
Letting $\nu_p(z) \coloneqq p^{-1/2}\mu_{1/p}(z \cdot p^{-1/2})$, we know by a similar reasoning as the one of Section~\ref{subsec_app:pr_noiseless} that the Stieltjes transform $h = h_p(z)$ of $\nu_p$ satisfies the equation: 
\begin{align*}
    z = \frac{\kappa \sqrt{p}}{\kappa+h\sqrt{p}} - \frac{1}{h} - h.
\end{align*}
As $p \to 0$, we can thus compute the expansion of $h_p(z)$ in powers of $p$. 
Applying then the Stieltjes-Perron inversion theorem (Theorem~\ref{thm:stieltjes_perron}), we get the expansion of $\nu_p(z)$ in powers of $p$ as:
\begin{align*}
    \nu_p(z) &=  \frac{\sqrt{4-z^2}}{2\pi} + \sqrt{p}\frac{3z \sqrt{4-z^2}}{8 \pi^3} - \frac{3p (2-z^2)(4+\kappa - z^2)}{8\pi^3 \kappa \sqrt{4-z^2}} + \mcO(p^{3/2}),
\end{align*}
for $|z| \leq 2$, and $\nu_p(z) = \mcO(p^{3/2})$ for $|z| \geq 2$.
Plugging this expansion into $F(p)$, we get: 
\begin{align*}
    F(p) &= 1 - \frac{p}{\kappa} + \smallO(p).
\end{align*}
Coming back to eq.~\eqref{eq:se_alpha_0}, this gives 
$\hq_0 = 4 \kappa / [2+\tDelta \kappa]$, and (recall $Q_0 = 1 + \kappa^{-1}$) then
$q = 1$, so that our equations are indeed consistent in the limit $\alpha \to 0$.

\subsection{State evolution: a connection between Algorithm~\ref{algo:gamp} and Result~\ref{result:main_mmse}}\label{subsec_app:se_gamp}

We briefly sketch here the statistical-physics style derivation of the so-called \emph{state evolution} of Algorithm~\ref{algo:gamp}: 
this will draw a connection between the performance of the Bayes-optimal estimator, characterized by Result~\ref{result:main_mmse}, 
and the estimator of Algorithm~\ref{algo:gamp}.
We define $q^t \coloneqq \tr[(\hbS^t)^2]$, and $m^t \coloneqq \tr[\hbS^t \bS^\star]$. 
Thanks to Bayes-optimality, one can show that, along the \gamp trajectory, the so-called Nishimori identities are preserved (see \cite{zdeborova2016statistical} for more details), 
so that we have, at leading order as $d \to \infty$, that $q^t = m^t$.

\myskip
Up to some critical differences, we can transpose the derivation of \cite{zdeborova2016statistical} of the state evolution of GAMP for generalized linear models with Gaussian sensing vectors, and i.i.d.\ priors, 
to our \gamp algorithm.
The differences with our setting are twofold: 
\begin{itemize}
    \item[$(i)$] 
    The ``sensing vectors'' $\bZ_i$ are not Gaussian. 
    We conjecture that the universality arguments discussed in Section~\ref{sec:derivation_results} 
    extend to the analysis of the \gamp algorithm. This allows us 
    to replace $\bZ_i$ by $\bG_i \iid \GOE(d)$ when evaluating $(q^t, m^t)$ (i.e.\ when studying the high-dimensional performance of Algorithm~\ref{algo:gamp}). 
    We are then able to make a direct use of some results of \cite{zdeborova2016statistical}.
    \item[$(ii)$] The prior over $\bS^\star$ is not i.i.d.: as we saw, this led to a non-trivial ``denoising'' part in Algorithm~\ref{algo:gamp}. 
    The performance of this denoising procedure in the high-dimensional limit can however be characterized 
    precisely, as the function $F_\RIE$ admits a closed-form expression.
\end{itemize}

\myskip
We now briefly expose the derivation, transposed to our setting under the universality assumption above.
By definition of $q^t$, we have $\hc^t = 2(Q_0 - q^t)$.
If $\omega, z$ are centered and jointly Gaussian variables with $\EE[\omega^2] = 2q^t$, 
$\EE[z^2] = 2Q_0$, and $\EE[\omega z] = 2m^t = 2 q^t$, and $y \sim P_\out(\cdot | z)$,
we define 
\begin{align}\label{eq:def_hq_t}
\hq^t \coloneqq 4 \alpha \, \EE_{y,w} [g_\out(y,\omega,V^t)^2],
\end{align}
so that $A^t = \hq^t/2$ in the $n, d \to \infty$ limit. 
For the ``channel'' part of the \gamp algorithm, the standard analysis for generalized linear model, alongside the universality phenomenon discussed above (which allows replacing $\Tr[\bZ_i \hbS^t]$ by $\Tr[\bG_i \hbS^t]$ in the update of $\omega_i^t$) shows that 
$\hq^t$ satisfies the equation: 
\begin{align}\label{eq:se_gamp_1}
    \hq^t = 4 \alpha \frac{\partial}{\partial q}\left[\int \rd y \mcD \xi \, J_{q}(y,\xi) \log J_q(y, \xi)\right]_{q = q^t},
\end{align}
where $J_q(y, \xi)$ is defined in eq.~\eqref{eq:def_I_Jq}. 
Eq.~\eqref{eq:se_gamp_1} is the very same as for the standard GAMP for generalized linear models~\citep{zdeborova2016statistical}.

\myskip
We have, however, a more structured prior. After replacing $\bZ_i$ by Gaussian matrices $\bG_i$ in Algorithm~\ref{algo:gamp}, the argument is that  at leading order as $d \to \infty$ one has:
\begin{align}\label{eq:se_Rt}
    \bR^t \deq \bS^\star + \frac{1}{\sqrt{\hq^t}} \bxi,
\end{align}
with $\bxi \sim \GOE(d)$. 
Heuristic details on how to derive eq.~\eqref{eq:se_Rt} can be found again in \cite{zdeborova2016statistical}, see Section~6.4.1\footnote{Section~VI.D.1 in the arXiv version of~\cite{zdeborova2016statistical}.} there.
By definition of $F_\RIE$, this implies $q^{t+1} \coloneqq \tr[(\hbS^{t+1})^2]= Q_0 - F_\RIE((\hq^t)^{-1})$, so that by eq.~\eqref{eq:F_RIE}: 
\begin{align}\label{eq:se_gamp_2}
    Q_0 - q^{t+1} = \frac{1}{\hq^t} - \frac{4\pi^2}{3 (\hq^t)^2} \int \rd \lambda \mu_{1/\hq^t}(\lambda)^3,
\end{align}
with $\mu_{t} \coloneqq \mu_{\MP, \kappa} \boxplus \sigma_{\sci,\sqrt{t}}$.
Notice that remarkably, eqs.~\eqref{eq:se_gamp_1} and \eqref{eq:se_gamp_2} precisely match the extremization equations of the asymptotic free entropy, as given in Claim~\ref{claim:free_entropy} and Result~\ref{result:main_mmse}, 
exactly as for ``usual'' generalized linear models with i.i.d.\ priors \citep{rangan2011generalized,javanmard2013state,zdeborova2016statistical}.

\myskip
\textbf{Mathematical consequences --}
The fact that, assuming the universality property above, our \gamp algorithm can be seen as the usual GAMP algorithm in a generalized linear model with a \emph{non-separable prior} has a very interesting consequence. Indeed, the latter model admits a rigorous state evolution thanks to the analysis of \cite{berthier2020state,gerbelot2023graph}.
To make this point clearer, we notice that (after replacing $\bZ_i$ by $\GOE(d)$ matrices $\bG_i$)
Algorithm~\ref{algo:gamp} can be written in the following form:
\begin{align}\label{eq:equivalent_form_gamp}
\begin{dcases}
    {\bomega}^{t} &= \bG \hbv({\bf u}^t,\Sigma_t) - V^t g_\out(\by,{\bomega}^{t-1},V^{t-1}), \\
    {\bf u}^{t} &= \frac{1}{d}\bG^\T g_{\rm out}(\by,{\bomega}^t,V^t) + \Sigma^{-1}_t \hbv ({\bf u}^t,\Sigma_t).
\end{dcases}
\end{align}
Let us clarify some notations used in eq.~\eqref{eq:equivalent_form_gamp}: 
\begin{itemize}
    \item $\bu^t \in \bbR^p$, with $p \coloneqq \binom{d+1}{2}$, can be seen as the flattening of the symmetric matrix $A^t \bR^t$ of Algorithm~\ref{algo:gamp} via the following canonical mapping. 
    For $\bS \in \mcS_d$, we define $\flatt(\bS) \in \bbR^p$ 
    by $\flatt(\bS)_{ii} = S_{ii}$ and $\flatt(\bS)_{ij} = \sqrt{2} S_{ij}$ for $i < j$.
    This flattening is an isometry: $\langle \flatt(\bS),\flatt(\bR)\rangle = \Tr[\bS \bR]$. 
    We have $\bu^t = \flatt(A^t \bR^t)$.
    \item $\bG \in \bbR^{n \times p}$ is a Gaussian i.i.d.\ matrix, whose elements have variance $2/d$.
    \item 
    $\Sigma^{-1}_t \coloneqq -2 \alpha \EE \mathrm{div}[g_\out(\bt, \bomega^t, V^t)]$ is related to $A^t$ by $A^t = \Sigma^{-1}_t$.
    \item
    $V^t \coloneqq 2 F_{\RIE}(\Sigma_t/2)$.
    \item 
    $\hbv^t(\bu^t, \Sigma_t) \in \bbR^p$ is the flattening of the RIE denoiser of Algorithm~\ref{algo:gamp}, i.e.\ if we denote $\bR^t/\Sigma_t$ the matrix such that $\bu^t = \flatt(\bR^t/\Sigma_t)$:
    \begin{align*}
        \hbv^t(\bu^t, \Sigma^t) \coloneqq \flatt[f_\RIE(\bR^t, \Sigma_t/2)].
    \end{align*}
\end{itemize}
Eq.~\eqref{eq:equivalent_form_gamp} is the canonical form of the GAMP algorithm, as written e.g.\ in \cite{berthier2020state,gerbelot2023graph}. In particular, we can leverage their results to write:
\begin{theorem}[State Evolution (informal) \cite{berthier2020state,gerbelot2023graph}]\label{thm:se}
Denote $q_{\AMP}^t \coloneqq \tr[\hbS_t \bS^\star]$ and $\hat q_{\rm AMP}^t \coloneqq \frac{4\alpha}{n} \sum_{i=1}^n g_{\rm out}(y_i,\omega_i^t,V^t)^2$ (recall the definition of these quantities in Algorithm~\ref{algo:gamp}). Assume that the ``sensing matrices'' $(\bZ_i)_{i=1}^n$ in Algorithm~1 are replaced by $(\bG_i)_{i=1}^n$, which are i.i.d.\ $\GOE(d)$ matrices. Then for any $t \geq 0$, $q^t_{\rm AMP}$ and $\hat q_{\rm AMP}^t$ follow the state evolution equations~\eqref{eq:se_gamp_1} and \eqref{eq:se_gamp_2} asymptotically as $d,n \to \infty$.
\end{theorem}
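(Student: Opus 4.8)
The plan is to deduce Theorem~\ref{thm:se} from the rigorous state-evolution results for approximate message passing with non-separable nonlinearities established in \cite{berthier2020state,gerbelot2023graph}. Once the sensing matrices $\bZ_i$ are replaced by independent $\GOE(d)$ matrices $\bG_i$, one flattens all symmetric matrices through the isometry $\flatt:\mcS_d\to\bbR^p$, $p=\binom{d+1}{2}$, recalled in the excerpt, and checks that \gamp is \emph{exactly} an instance of the canonical GAMP iteration~\eqref{eq:equivalent_form_gamp}: the data matrix $\bG\in\bbR^{n\times p}$ whose $i$-th row is $\flatt(\bG_i)$ has i.i.d.\ Gaussian entries of variance $2/d$ (since $\mathrm{Var}[(\bG_i)_{kk}]=\mathrm{Var}[(\bG_i)_{kl}]=1/d$ and the $\sqrt{2}$ in the flattening matches the off-diagonal factor in $\langle\flatt(\bS),\flatt(\bR)\rangle=\Tr[\bS\bR]$); the ``channel'' nonlinearity is $g_\out$ applied componentwise to $(\by,\bomega^t,V^t)$ as in the i.i.d.\ case; and the ``prior'' nonlinearity is the non-separable map $\hbv^t(\cdot,\Sigma_t)=\flatt\circ f_\RIE(\cdot,\Sigma_t/2)$, whose associated mean-squared error is $2F_\RIE(\Sigma_t/2)$.

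\textbf{Key steps --}
First, verify that the RIE denoiser satisfies the regularity hypotheses of \cite{gerbelot2023graph}: since $f_\RIE(\bR,\Delta)=\bU f_\Delta(\bLambda)\bU^\T$ only rotates the eigenbasis and reshapes eigenvalues by the scalar function $f_\Delta(\lambda)=\lambda-2\Delta h_\Delta(\lambda)$, on the event that the spectrum of $\bR^t$ stays in a fixed compact set the map $\bR\mapsto f_\RIE(\bR,\Delta)$ is Lipschitz in $\|\cdot\|_F$ with a dimension-free constant by a standard functional-calculus perturbation bound, which yields the pseudo-Lipschitz control required after flattening. Second, identify the state-evolution functionals: the channel side is unchanged from the i.i.d.\ GLM analysis of \cite{zdeborova2016statistical,javanmard2013state}, giving $\hq^t=4\alpha\,\partial_q\big[\int\rd y\,\mcD\xi\,J_q(y,\xi)\log J_q(y,\xi)\big]_{q=q^t}$, i.e.\ eq.~\eqref{eq:se_gamp_1}; on the prior side, the abstract theorem makes $\bR^t\deq\bS^\star+(\hq^t)^{-1/2}\bxi$ to leading order (with $\bS^\star\sim\mcW_{m,d}$, $\bxi\sim\GOE(d)$), so that $Q_0-q^{t+1}$ equals the asymptotic MMSE of denoising $\bS^\star$ from this observation, which is $F_\RIE((\hq^t)^{-1})$ by definition, and substituting the closed form~\eqref{eq:F_RIE} gives eq.~\eqref{eq:se_gamp_2}. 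Third, invoke Bayes-optimality: the Nishimori identities are preserved along the trajectory because each step is a posterior-mean operation, forcing $m^t=q^t$ asymptotically, so the two scalar recursions close on the single sequence $q^t$. One then concludes, in each regime investigated, by the uniqueness of the fixed point, which coincides with the maximiser $q^\star$ of~\eqref{eq:fe_limit}, so that $q^t_{\rm AMP}\to q^\star$ and \gamp attains the Bayes-optimal error of Result~\ref{result:main_mmse}.

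\textbf{Main obstacle --}
I expect the delicate point to be the verification of the regularity conditions on the RIE denoiser, and specifically the control of the spectrum of $\bR^t$ near the edges of $\rho_{1/\hq^t}$: the Hilbert transform $h_\Delta$ entering $f_\Delta$ is only H\"older-continuous at a soft spectral edge, and the RIE is built from the \emph{limiting} density $\rho_\Delta$ rather than the empirical one. A complete proof therefore needs (i) an inductive argument extracting eq.~\eqref{eq:se_Rt} from the abstract state evolution, showing $\bR^t$ is to leading order a signal-plus-GOE matrix with deterministic limiting spectrum; (ii) a rigidity-type estimate that the empirical spectral distribution of $\bR^t$ concentrates on $\rho_{1/\hq^t}$ with no outliers escaping a fixed neighbourhood of its support (obtainable from the BBP-type analysis together with standard rigidity for deformed Wigner matrices); and (iii) a continuity statement that replacing the empirical spectrum by its limit in $f_\RIE$ changes the output by $o(1)$ in normalised Frobenius norm. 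Controlling these three ingredients uniformly over the finitely many iterations, and making the replacement of $\bZ_i$ by $\bG_i$ rigorous via Conjecture~\ref{conj:universality} applied at the level of the AMP iterates, is the technical heart of the argument; the remaining scalar bookkeeping is routine.
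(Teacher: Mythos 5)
Your proposal follows essentially the same route as the paper: rewrite \gamp (with $\GOE(d)$ sensing matrices) as the canonical non-separable GAMP iteration~\eqref{eq:equivalent_form_gamp} via the flattening isometry, invoke the rigorous state evolution of \cite{berthier2020state,gerbelot2023graph}, and match the resulting functionals with eqs.~\eqref{eq:se_gamp_1} and \eqref{eq:se_gamp_2} using the Nishimori identity $m^t=q^t$ and the closed form of $F_\RIE$. Your discussion of the remaining technical obstacles (pseudo-Lipschitz control of the RIE denoiser near spectral edges, and the replacement of the empirical spectrum by its limit) is in fact more explicit than the paper, which leaves the theorem at the informal level and does not verify these regularity hypotheses.
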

\noindent
Beyond the rigorous control of the \gamp algorithm, Theorem~\ref{thm:se} has an additional mathematical consequence: it allows to leverage a set of mathematical techniques that use AMP algorithms to prove results on the asymptotic MMSE and on the mutual information, as Theorem~\ref{thm:se} implies that they can be used verbatim in our setting. 
More precisely, the fact that the \gamp algorithm achieves an MSE with value given by Claim~\ref{claim:free_entropy} immediately yields that the latter is, at least, an upper bound on the asymptotic MMSE (when assuming Gaussian $\GOE(d)$ ``sensing vectors'' $\bG_i$).
Additionally, the application of the I-MMSE theorem~\citep{guo2005mutual} shows that our claimed free entropy (i.e.\ the limit of $(1/d^2) \EE \log \mcZ$ in Claim~\ref{claim:free_entropy}) is a lower bound on the real one (see e.g. section 2.C in~\cite{barbier2016mutual}).

\section{Learning the second layer weights}\label{sec_app:second_layer}
We sketch here in a mathematically informal way the generalization of our results to the setting where the second layer weights are also learned, cf.\ eq.~\eqref{eq:teacher:with_2nd_layer}. 
Throughout this section, we will assume for simplicity that $P_a$ has bounded support, although we expect our results to hold also for more general choices of $P_a$.
We show how to extend Claim~\ref{claim:free_entropy} to this case, by detailing the differences in the steps outlined in Section~\ref{sec:derivation_results}.
We eventually show that Algorithm~\ref{algo:gamp} can also be straightforwardly extended to this setting.

\subsection{Generalizing the derivation}

\subsubsection{Reduction to matrix estimation}

We first discuss the reduction to a matrix estimation problem, generalizing Section~\ref{subsec_app:reduction_matrix} to this setting.
We define
\begin{align}\label{eq:def_Sstar_2nd_layer}
    \bS^\star \coloneqq \frac{1}{m} \sum_{k=1}^m a_k^\star \bw_k^\star (\bw_k^\star)^\T, 
\end{align}
and we denote $m_a \coloneqq \EE_{P_a}[a]$ and $c_a \coloneqq \EE_{P_a}[a^2]$.
We define the MMSE as (notice the additional factor $c_a$ with respect to eq.~\eqref{eq:mmse}):
\begin{equation} \label{eq:mmse_2nd_layer}
    \MMSE_d \coloneqq \frac{m}{2} \EE_{\bW^*, \mcD} \EE_{y_{\rm test},\bx_{\rm test}}\left[\left(y_{\rm test} - \hat{y}_\mcD^{\rm BO}(\bx_{\rm test} )\right)^2\right] - \Delta(2+ c_a\Delta)  \, . 
\end{equation}
By repeating the (mathematically informal) arguments of Section~\ref{subsec_app:reduction_matrix} to this setting, we find that, at leading order as $m, d \to \infty$:
\begin{align}\label{eq:decomp_yi_2nd_layer}
    \sqrt{d}(y_i - \Delta - \tr[\bS^\star]) &= \Tr[\bZ_i \bS^\star] + \sqrt{\tDelta} \xi_i,
\end{align}
with $\xi_i \iid \mcN(0,1)$, and $\tDelta \coloneqq 2 \Delta(2+\Delta c_a) / \kappa$.
We let 
\begin{align*}
    \begin{dcases}
        \ty_i &\coloneqq \sqrt{d} \left[y_i - \frac{1}{n} \sum_{j=1}^n y_i\right], \\
        Y &\coloneqq \frac{1}{n} \sum_{i=1}^n y_i.
    \end{dcases}
\end{align*}
The observation of $(y_i)_{i=1}^n$ is equivalent to the one of $(\ty_i)_{i=1}^n$ and $Y$. 
Notice that by eq.~\eqref{eq:decomp_yi_2nd_layer}, we have 
\begin{align}\label{eq:Y_2nd_layer_1}
    |Y - \Delta - \tr(\bS^\star)| &= \frac{1}{n\sqrt{d}} \left| \sum_{i=1}^n \{\Tr[\bZ_i \bS^\star] + \sqrt{\tDelta} \xi_i\}\right|.
\end{align}
Conditionally on $\bS^\star$, the right-hand-side of eq.~\eqref{eq:Y_2nd_layer_1} is a sum of $n$ independent zero-mean random variables, 
which thus typically fluctuates in the scale\footnote{Recall that $\tr[(\bS^\star)^2] = \mcO(1)$ with high probability.} $\mcO[(nd)^{-1/2}] = \mcO(d^{-3/2})$. 
Since $\ty_i = \sqrt{d}[y_i - Y]$, this implies that at leading order we have 
\begin{align}\label{eq:equiv_model_2nd_layer}
    \ty_i = \Tr[\bZ_i \bS^\star] + \sqrt{\tDelta} \xi_i.
\end{align}
The observer also has access to $Y$, alongside $\{\ty_i\}_{i=1}^n$. Notice that by the argument above, $Y$ is (up to order $d^{-3/2}$) a \emph{deterministic} observation of $\tr[\bS^\star]$.
By eq.~\eqref{eq:mmse_2nd_layer}, and repeating the arguments of the proof of Lemma~\ref{lemma:MMSEy_MMSES}, we reach that again we have $\MMSE = \kappa \EE \tr[(\bS^\star - \hbS^\BO)^2]$ as $d \to \infty$.
Moreover:
\begin{align*}
    \MMSE &= \kappa \EE_{\bS^\star, Y, \{\ty_i\}} \tr[(\bS^\star - \hbS^\BO)^2], \\ 
    &= \kappa \EE_Y [\EE_{\bS^\star, \{\ty_i\}} (\tr[(\bS^\star - \hbS^\BO)^2] | Y)].
\end{align*}
Conditioning on $Y$ amounts to condition on the value of $\tr(\bS^\star)$, as detailed above. 
Let us make two important remarks: 
\begin{itemize}
    \item[$(i)$] As $d \to \infty$, $Y$ concentrates around its typical value $\EE[Y] = m_a$.
    Since the $\MMSE$ is bounded, we therefore have as $d \to \infty$ that $\MMSE = \kappa \EE_{\bS^\star, \{\ty_i\}} (\tr[(\bS^\star - \hbS^\BO)^2] | Y = m_a)$.
    \item[$(ii)$] As we will see in what follows (and exactly like in the case of fixed second layer), the leading order of the MMSE of the inference problem of eq.~\eqref{eq:equiv_model_2nd_layer} 
    only depends on the \emph{asymptotic spectral distribution} of $\bS^\star$.
    In particular, at leading order: 
    \begin{align}\label{eq:mmse_wout_Y_2nd_layer}
        \nonumber
        \MMSE &= \kappa \EE_{\bS^\star, \{\ty_i\}} (\tr[(\bS^\star - \hbS^\BO)^2] | Y = m_a), \\ 
        \nonumber
        &= \kappa \EE_{\bS^\star, \{\ty_i\}} (\tr[(\bS^\star - \hbS^\BO)^2] | \tr(\bS^\star) = m_a), \\
        &\aeq \kappa \EE_{\bS^\star, \{\ty_i\}} (\tr[(\bS^\star - \hbS^\BO)^2]),
    \end{align}
    where in $(\rm a)$ we used that conditioning on $\tr(\bS^\star) = m_a$ does not change the asymptotic spectral distribution of $\bS^\star$.
\end{itemize}
All in all, we focus on characterizing the MMSE given in eq.~\eqref{eq:mmse_wout_Y_2nd_layer}, 
for the inference problem of recovering $\bS^\star$ from the knowledge of $\{\bZ_i, y_i\}$ generated by eq.~\eqref{eq:equiv_model_2nd_layer}.

\subsubsection{Further steps of the derivation}

Here, we notice that the arguments detailed in Section~\ref{sec:derivation_results} on how to obtain an asymptotic expression of eq.~\eqref{eq:mmse_wout_Y_2nd_layer} 
do not depend on the specific asymptotic spectral distribution of $\bS^\star$. More precisely: 
\begin{itemize}
    \item[\textbf{A.}] Conjecture~\ref{conj:universality} can be directly extended to more general distributions of $\bS^\star$ than the Wishart distribution. 
    Indeed, the heuristic argument explaining this universality phenomenon 
    does not depend on the distribution of $\bS^\star$, 
    and on a technical level, as mentioned in the main text,
    Conjecture~\ref{conj:universality} is an extension of Corollary~4.10 of \cite{maillard2023exact}, which holds for generic choices of distributions of matrices.
    \item[\textbf{B.}]
    Conjecture~\ref{conj:reduction_denoising} is also straightforwardly extended here, simply replacing the Wishart prior by the more generic prior 
    of eq.~\eqref{eq:def_Sstar_2nd_layer}. More generally, we expect it to hold for any prior such that the function $\Psi(\hq)$ of eq.~\eqref{eq:def_Psi} is well-defined~\citep{aubin2019spiked,aubin2020exact}.
    \item[\textbf{C.}]
    Finally, the proof of Theorem~\ref{thm:matrix_denoising} (see Appendix~\ref{subsec_app:proof_matrix_denoising}) relies solely on the \emph{rotation invariance} of the distribution of $\bS^\star$, as well 
    as the fact that $\bS^\star$ admits a \emph{compactly supported} asymptotic eigenvalue distribution. These two facts hold for the distribution of eq.~\eqref{eq:def_Sstar_2nd_layer} for compactly supported $P_a$, see e.g.\ \cite{silverstein1995analysis,lee2016tracy}. 
\end{itemize}

\subsection{Conclusion: Claim~\ref{claim:free_entropy} when learning the second layer}\label{subsec_app:claim_fe_2nd_layer}

We are now ready to state the generalization of Claim~\ref{claim:free_entropy} to a learnable second layer. 
The effective problem we consider is the recovery of a symmetric matrix $\bS^\star \in \bbR^{d \times d}$, which was generated 
as $\bS^\star = (1/m) \sum_{k=1}^m a_k^\star \bw_k^\star (\bw^\star_k)^\T$, from observations $(y_i)_{i=1}^n$, generated as 
\begin{equation} 
\label{eq:noisy_model_2nd_layer}
    y_i \sim P_\out \left( \cdot \middle| \Tr[\bZ_i \bS^\star]\right) ,
\end{equation}
with $\bZ_i \coloneqq (\bx_i \bx_i^\T - \Id_d)/\sqrt{d}$ and $\bx_i \iid \mcN(0, \Id_d)$. 

\myskip 
The asymptotic spectral distribution $\mu^\star$ of $\bS^\star$ is called a \emph{generalized Marchenko-Pastur distribution} (or a free compound Poisson distribution: it is also the free multiplicative convolution of the Marchenko-Pastur law and $P_a$, see \cite{anderson2010introduction}). 
$\mu^\star$ is compactly supported, and can be characterized by its $\mcR$ transform \citep{marchenko1967distribution,silverstein1995analysis,tulino2004random}: 
\begin{align}\label{eq:R_generalized_mp}
    \mcR_{\mu^\star}(s) &= \int\frac{\kappa a}{\kappa - s a} P_a(a) \rd a.
\end{align}
Eq.~\eqref{eq:R_generalized_mp} allows for an efficient numerical evaluation of $\mu^\star$ given $P_a$.
Notice that $\EE_{\mu^\star}[X] = m_a$, and $\EE_{\mu^\star}[X^2] = m_a^2 + c_a / \kappa$.

\myskip
The partition function for the learning problem of eq.~\eqref{eq:noisy_model_2nd_layer} is again defined as:
  \begin{equation}\label{eq:def_Z_2nd_layer}
    \mcZ(\{y_i,\bx_i\}_{i=1}^n) \coloneqq \EE_{\bS} \prod_{i=1}^n P_\out\left(y_i \middle| \Tr[\bS \bZ_i]\right).
  \end{equation}
  We then obtain the following generalization of Claim~\ref{claim:free_entropy}.
\begin{claim}\label{claim:free_entropy_2nd_layer}
Assume that $m = \kappa d$ with $\kappa > 0$, and $n = \alpha d^2$ with $\alpha > 0$.
Recall that $m_a \coloneqq \EE_{P_a}[a]$ and $c_a \coloneqq \EE_{P_a}[a^2]$.
Let $Q_0 \coloneqq \EE_{\mu^\star}[X^2] = m_a^2 + c_a / \kappa$. Then:
\begin{itemize}[itemsep=1pt,leftmargin=1em]
    \item The limit of the averaged log-partition function of eq.~\eqref{eq:def_Z_2nd_layer} is given by 
\begin{align}\label{eq:fe_limit_2nd_layer}
    \lim_{d \to \infty} \frac{1}{d^2} \EE_{\{y_i,\bx_i\}} \log \mcZ &= \sup_{q \in [m_a^2,Q_0]}\left[I(q) + \alpha \int_{\bbR \times \bbR} \rd y \mcD \xi \, J_q(y,\xi) \log J_q(y,\xi) \right],
\end{align}
where 
\begin{align}\label{eq:def_I_Jq_2nd_layer}
    \begin{dcases}
        I(q) &\coloneqq \inf_{\hq \geq 0} \left[ \frac{(Q_0 - q) \hq}{4} - \frac{1}{2} \Sigma(\mu_{1/\hq}) - \frac{1}{4} \log \hq - \frac{1}{8}\right] , \\
        J_q(y,\xi) &\coloneqq \int \frac{\rd z}{\sqrt{4 \pi (Q_0-q)}} \exp\left\{-\frac{(z - \sqrt{2q}\xi)^2}{4(Q_0-q)}\right\} \, P_\out(y | z).
    \end{dcases}
\end{align}
Here, $\Sigma(\mu) \coloneqq \EE_{X,Y\sim \mu} \log |X-Y|$, and, for $t \geq 0$, $\mu_t \coloneqq \mu^\star \boxplus \sigma_{\sci,\sqrt{t}}$ is the free convolution of $\mu^\star$ and a (scaled) semicircle law (see Appendix~\ref{sec_app:background}).
\item For any $\alpha > 0$, except possibly in a countable set, the supremum in eq.~\eqref{eq:fe_limit_2nd_layer} is reached in a unique $q^\star \in [m_a^2,Q_0]$. Moreover, the asymptotic minimum mean-squared error on the estimation of $\bS^\star$, achieved by the Bayes-optimal estimator $\hbS^\BO \coloneqq \EE[\bS | \{y_i,\bx_i\}]$, is equal to $Q_0 - q^\star$: 
\begin{equation}\label{eq:limit_MMSE_2nd_layer}
    \lim_{d \to \infty} \EE \tr[(\bS^\star - \hbS^\BO)^2] = Q_0 - q^\star.
\end{equation}
It is related to the $\MMSE$ of eq.~\eqref{eq:mmse_2nd_layer} by $\MMSE = \kappa (Q_0 - q^\star)$.
\end{itemize}
\end{claim}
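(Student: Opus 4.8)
The plan is to follow verbatim the three-step strategy used to establish Claim~\ref{claim:free_entropy}, tracking at each step the only object that changes: the asymptotic spectral law of $\bS^\star$, which is now the generalized Marchenko--Pastur law $\mu^\star$ with $\mcR$-transform given by eq.~\eqref{eq:R_generalized_mp}, rather than $\mu_{\MP,\kappa}$. First I would carry out the reduction to a matrix estimation problem exactly as in Appendix~\ref{subsec_app:reduction_matrix}: expanding the square in eq.~\eqref{eq:teacher:with_2nd_layer} and using Bernstein and Hanson--Wright concentration shows that, at leading order, the centered and rescaled labels $\ty_i = \sqrt{d}(y_i - Y)$ obey the Gaussian-channel model eq.~\eqref{eq:equiv_model_2nd_layer} with effective noise $\tDelta = 2\Delta(2+\Delta c_a)/\kappa$, while the empirical mean $Y$ is, up to $\mcO(d^{-3/2})$, a deterministic readout of $\tr[\bS^\star]$. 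The additional point is that conditioning on $Y$ (equivalently on $\tr[\bS^\star] = m_a$) does not alter the asymptotic spectral distribution of $\bS^\star$, so this extra observation can be discarded when computing the leading-order MMSE, giving $\MMSE = \kappa\,\EE\,\tr[(\bS^\star - \hbS^\BO)^2]$ as in Lemma~\ref{lemma:MMSEy_MMSES}.

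Next I would apply the universality step (Conjecture~\ref{conj:universality}, in the form extended to general matrix priors as in point A of Appendix~\ref{sec_app:second_layer}, an instance of Corollary~4.10 of~\cite{maillard2023exact}) to replace each $\bZ_i$ by an independent $\GOE(d)$ matrix; this is legitimate because the central-limit argument for $\Tr[\bZ_i\bS]$ is insensitive to the law of $\bS^\star$. The resulting Gaussian problem is a matrix generalized linear model with prior $\mu^\star$, and the extended Conjecture~\ref{conj:reduction_denoising} reduces its free entropy to the variational form involving $J_q$ and the matrix-denoising potential $\Psi(\hq)$ of eq.~\eqref{eq:def_Psi}, now with $\bS^\star$ of spectral law $\mu^\star$. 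Finally, Theorem~\ref{thm:matrix_denoising} applies unchanged: its proof uses only the rotation invariance of the prior and the compact support of its limiting eigenvalue distribution, both of which hold for the generalized Marchenko--Pastur law when $P_a$ has compact support~\citep{silverstein1995analysis,lee2016tracy}, so $\Psi(\hq) = -\frac{1}{2}\Sigma(\mu_{1/\hq}) - \frac{1}{4}\log\hq - \frac{1}{8}$ with $\mu_t = \mu^\star \boxplus \sigma_{\sci,\sqrt{t}}$, matching eq.~\eqref{eq:Psi_hq}.

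Assembling these pieces yields eq.~\eqref{eq:fe_limit_2nd_layer}, with $Q_0 = \EE_{\mu^\star}[X^2] = m_a^2 + c_a/\kappa$ playing the role of $1+\kappa^{-1}$, and with the overlap constraint $q \in [m_a^2, Q_0]$: the upper bound is the Nishimori and Cauchy--Schwarz bound $q \le Q_0$, while the lower bound comes from monotonicity of the MMSE in $\alpha$ together with the fact that at $\alpha = 0$ one has $\hbS^\BO = \EE[\bS^\star] = m_a\,\Id_d$, hence $q^\star(\alpha=0) = m_a^2$; I would verify (as in Appendix~\ref{subsec:limit_alpha_0}) that eq.~\eqref{eq:fe_limit_2nd_layer} recovers this value. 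The second bullet — uniqueness of the maximizer for all but countably many $\alpha$, and the identity $\lim_{d\to\infty} \EE\,\tr[(\bS^\star-\hbS^\BO)^2] = Q_0 - q^\star$ — follows from the same convexity argument as Appendix~\ref{subsec_app:unique_q} and the I-MMSE theorem applied to the Gaussian channel eq.~\eqref{eq:equiv_model_2nd_layer}, exactly as for Claim~\ref{claim:free_entropy}. The main obstacle is, as in the fixed-second-layer case, the rigorous justification of the extended Conjecture~\ref{conj:reduction_denoising}: the proof techniques of~\cite{barbier2019optimal} are tailored to i.i.d.\ priors and do not transpose directly to the structured, rotationally invariant prior $\mu^\star$, so establishing this replica formula for general compactly supported rotation-invariant priors (and, a fortiori, the matching universality Conjecture~\ref{conj:universality}) is the genuinely open technical step.
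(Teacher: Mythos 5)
Your proposal follows the paper's own derivation in Appendix~\ref{sec_app:second_layer} essentially verbatim: the same reduction to the matrix channel with effective noise $2\Delta(2+\Delta c_a)/\kappa$ and the same treatment of the empirical mean $Y$ as an asymptotically deterministic observation of $\tr[\bS^\star]$, followed by the same three extensions (universality, the matrix GLM conjecture, and Theorem~\ref{thm:matrix_denoising} via rotation invariance and compact support of $\mu^\star$), with the correct identification $Q_0 = m_a^2 + c_a/\kappa$ and the lower endpoint $q^\star(\alpha=0)=m_a^2$ from $\hbS^\BO = m_a \Id_d$. This matches the paper's argument, including its honest acknowledgment of which steps remain conjectural.
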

\noindent
Therefore, generalizing Section~\ref{subsec_app:se_derivation}, Result~\ref{result:main_mmse} holds as well in this case, 
with $\tDelta = 2\Delta(2+c_a \Delta)/\kappa$, and $\mu_t \coloneqq \mu^\star \boxplus \sigma_{\sci,\sqrt{t}}$, where $\mu^\star$ is characterized by eq.~\eqref{eq:R_generalized_mp}.

\subsection{The \gamp algorithm}

Finally, one can also generalize Algorithm~\ref{algo:gamp} to this setting: the only change to perform is to adapt the functions $F_\RIE$ and $f_\RIE$. Indeed, instead of denoising a Wishart matrix (with an asymptotic spectrum given by the Marchenko-Pastur distribution), here one must denoise a matrix $\bS_0$ with asymptotic spectral distribution given by $\mu^\star$ defined in Appendix~\ref{subsec_app:claim_fe_2nd_layer}.
As mentioned, eq.~\eqref{eq:R_generalized_mp} allows for an efficient numerical evaluation of $\mu^\star$ given $P_a$.
From there, one can adapt Algorithm~\ref{algo:gamp} to this case simply by replacing in the definitions of $F_\RIE$ and $f_\RIE$ the distribution $\rho_\Delta$ by 
$\rho_\Delta = \mu^\star \boxplus \sigma_{\sci, \sqrt{\Delta}}$.

\section{Details on the numerics}\label{sec_app:detail_numerics}
\begin{figure}[!ht]
    \centering
    \includegraphics{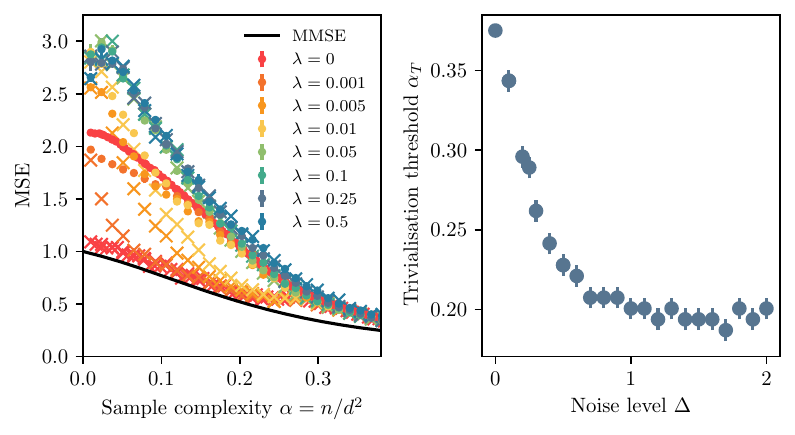}
    \caption{Left: Mean squared error as a function of the sample complexity $\alpha$, for $\kappa=1/2$ and $\Delta = 0.25^2$. Dots are simulations using GD with a single initialization averaged over $32$ realizations of the dataset, crosses are averages over $64$ initializations. The continuous line is the asymptotic MMSE given by \eqref{eq:MMSE_qhat}. 
    The colors indicate the strength of the regularization. Right: Trivialization threshold in the sample complexity $\alpha_T$ as a function of the noise level $\Delta$ in the teacher without regularization, $\lambda=0$. The measurement has a resolution of $0.1$ on the noise level and of $0.007$ on the sample complexity} 
    \label{fig:l2_reg_pos}
\end{figure}

\subsection{Solutions to the ``state evolution'' equations} \label{sec_app:SE_num}

We describe here how to solve eqs.~\eqref{eq:MMSE_qhat},\eqref{eq:qhat_main}.
The first step to solve is to obtain an analytical expression for $\mu_t$.
We refer to Appendix~\ref{sec_app:background} for the definition of quantities used in this section.
We recall that $\mu_t \coloneqq \mu_{\MP,\kappa} \boxplus \sigma_{\sci,\sqrt{t}}$ is the free convolution of the Marchenko-Pastur law and a scaled semicircular density.
The $\mcR$-transform of the scaled semicircle distribution is~\citep{tulino2004random}:
\begin{equation*}
    \mcR_{\sigma_{\sci,\sqrt{t}}}(z) = z t,
\end{equation*}
while for the Marchenko-Pastur law we have
\begin{equation*}
    \mcR_{\mu_{\MP,\kappa}}(z) = \frac{\kappa}{\kappa-z}.
\end{equation*}
We can now use (cf.\ Appendix~\ref{sec_app:background}):
\begin{equation*}
    \mcR_{\mu_t}(z) = \mcR_{\coloneqq \mu_{\MP,\kappa} \boxplus \sigma_{\sci,\sqrt{t}}} = \mcR_{\sigma_{\sci,\sqrt{t}}}(z) + \mcR_{\mu_{\MP,\kappa}}(z) = z t + \frac{\kappa}{\kappa-z}.
\end{equation*}
The Stieltjes transform $g(z) = \EE_{\mu_t}[1/(X-z)]$ of $\mu_t$ is the solution of the equation
\begin{equation*}
    z + \frac{1}{g(z)} = \mcR_{\mu_t}(-g(z)),
\end{equation*}
or equivalently
\begin{equation} \label{eq:g_equation}
    z = -tg(z) + \frac{\kappa}{\kappa + g(z)} - \frac{1}{g(z)}.
\end{equation}
Among all the solutions to this equation, $g(z)$ must be such that $\mathrm{Im}[g(z)] > 0$ if $\mathrm{Im}(z) > 0$, and also satisfies $g(z) \sim 1/z$ for $z \to \infty$.
Eq.~\eqref{eq:g_equation} is a third degree polynomial in $g(z)$, and can easily be solved by algebraic solvers, and has a single solution satisfying the constraints we described. Finally, $\mu_t(x)$ is given by the Stieltjes-Perron inversion theorem (see Appendix~\ref{sec_app:background}):
\begin{equation}
    \mu_t(x) = \lim_{\eps \to 0}\frac{\mathrm{Im}[g(x+i\eps)]}{\pi},
\end{equation}
and we numerically choose $\eps=10^{-8}$.
We now discuss the computation of the integral
of $\mu_t(x)^3$ in \eqref{eq:qhat_main}. Notice that the integrand is only non-zero over at most two finite intervals. Exact values of the edges are given by setting the discriminant of equation \eqref{eq:g_equation} to zero.
The last step is finding a solution in $\hat{q}$ to equation \eqref{eq:qhat_main}. We find the function ``root'' in Scipy, which uses a variant of the Powell hybrid method, to be performing quite well when initialized in the value $2\alpha/Q_0$. This whole procedure is quite efficient and can be reproduced easily on any machine.

\subsection{Gradient descent}\label{sec_app:GD_num}
In our experiments with gradient descent we are minimizing the objective ${\cal R}(\bW)$:
\begin{equation} \label{eq:risk}
   {\cal R}(\bW)  \coloneqq \frac{1}{4}\sum_{i=1}^n \left(y_i - f_\bW(\bx_i)\right)^2 + \frac{\lambda}{2} \sum_{k=1}^m \sum_{l=1}^d w_{kl}^2.
\end{equation}
All the simulations are done in PyTorch with the student weights initialized in the prior. For ``vanilla'' gradient descent we iterate until convergence, and average over several repetitions. For averaged gradient descent (AGD) we first generate the dataset, then train the student several times with starting weights independently sampled in the prior, and "average the weights" at the end of training. By this we mean that for each run we train until convergence, then obtain the matrix $\bS$ and average it. Finally, we average this procedure over several repetitions. 

\myskip
In Figure~\ref{fig:GD} the gradient descent is run for zero regularization, $\lambda=0$. In Figure~\ref{fig:l2_reg_pos} (left) we then study the effect of regularization to check whether regularization helps to achieve the Bayes-optimal error, but conclude that it does not and in fact it hurts the performance. 
In Figure~\ref{fig:l2_reg_pos} (right) we study the effect of the noise on the landscape of GD. We will expand on this in Appendix \ref{sec_app:additional_GD}.
All the error bars reported in Figure~\ref{fig:GD} and Figure~\ref{fig:l2_reg_pos} (left) are standard deviations of the MSE measured on the samples. Figure~\ref{fig:l2_reg_pos} (right) has a finite resolution indicated in the caption. A single run of vanilla GD for the models we display can be completed in at most $30$ minutes on an average machine without using GPUs. For producing our figures we used around $30\,000$ hours of computing time.

\subsection{Additional experiments with GD}\label{sec_app:additional_GD}
Here we study in more detail the phenomenology observed in Figure \ref{fig:GD} (right) where in the presence of noise and at a large sample complexity all the runs of GD seem to converge to the same prediction.
In the figure we noticed that above certain sample complexity the averaged and non-averaged GD errors are identical.
This suggests that GD will eventually lead the weights of the network to the same configuration up to the symmetries of the problem independently of the initial state.
We call this a \emph{trivialization} of the landscape. 

\myskip
In Figure~\ref{fig:l2_reg_pos} (right) we study the trivialization threshold as a function of the noise level $\Delta$. One needs to take care of the symmetries on $\hat\bW$, so we first define $\hat\bS$:
\begin{equation*}
    \hat \bS (\bW^{(0)},\mcD) \coloneqq \frac{1}{m} \left(\hat \bW (\bW^{(0)},\mathcal{D})\right)^\T \hat \bW (\bW^{(0)},\mcD),
\end{equation*}
where we mean that for a fixed dataset we run GD, then take a matrix product to obtain $\bS$. This procedure allows us to define the {\bf dispersion}
\begin{equation*}
    \delta_{GD} \coloneqq \EE_{\cal D} \left[\tr \left( \EE_{\bW^{(0)}} \left[ \hat \bS (\bW^{(0)},\cal D) \right] - \hat \bS (\bW^{(0)},\cal D) \right)^2\right].
\end{equation*}
If the dispersion becomes zero it means that all the runs will converge to the same value. As we increase the sample complexity $\alpha$ the dispersion decreases, until it becomes zero. 
For each value of the noise level $\Delta$ we indicate the minimum sample complexity for which the dispersion is either less than $10^{-2}$, or less than $10^{-3}$ of the maximum dispersion at fixed $\Delta$. 

\myskip
In Figure~\ref{fig:l2_reg_pos} (left), where we studied the effect of $\ell_2$ regularization on the weights, we can also see how even a relatively small $\lambda>0$ regularization leads to a trivialization of the landscape again in the sense that different initializations of GD provide the same prediction and averaging does not lead to a better error. 

\end{document}